\newcommand{\Tr}{^{\rm \top}}
\newcommand{\tr}{{\rm Tr}}
\renewcommand{\a}{{\bf a}}
\newcommand{\f}{{\bf f}}
\newcommand{\x}{{\bf x}}
\newcommand{\A}{{\bf A}}
\newcommand{\B}{{\bf B}}
\newcommand{\C}{{\bf C}}
\newcommand{\D}{{\bf D}}
\newcommand{\E}{{\bf E}}
\newcommand{\F}{{\bf F}}
\newcommand{\G}{{\bf G}}
\renewcommand{\H}{{\bf H}}
\newcommand{\I}{{\bf I}}
\newcommand{\K}{{\bf K}}
\renewcommand{\L}{{\bf L}}
\newcommand{\M}{{\bf M}}
\renewcommand{\P}{{\bf P}}
\newcommand{\Q}{{\bf Q}}
\newcommand{\R}{{\bf R}}
\renewcommand{\S}{{\bf S}}
\newcommand{\T}{{\bf T}}
\newcommand{\U}{{\bf U}}
\newcommand{\V}{{\bf V}}
\newcommand{\W}{{\bf W}}
\newcommand{\Ocal}[1]{{\mathcal{O}\left( #1  \right)}}
\newcommand{\X}{{\bf X}}
\newcommand{\Y}{{\bf Y}}
\newcommand{\bSigma}{\boldsymbol{\Sigma}}
\newcommand{\bmu}{\boldsymbol{\mu}}
\newcommand{\1}{{\bf 1}}
\newcommand{\0}{{\bf 0}}
\newcommand{\lrincir}[1]{\left( #1 \right)}
\newcommand{\EE}{\mathop{\mathbb{E}}}
\newcommand{\RR}{\mathbb{R}}
\newtheorem{Definition}{\bf Definition}
\newtheorem{Theorem}{\bf Theorem}
\newtheorem{Proposition}{\it Proposition}
\begin{document}

\title{Multi-View Spectral Clustering with High-Order Optimal Neighborhood Laplacian Matrix}

\author{Weixuan Liang$^\ast$, Sihang Zhou$^\ast$, Jian Xiong, Xinwang Liu$^\dagger$, Siwei Wang, En Zhu$^\dagger$, Zhiping Cai, and Xin Xu
\IEEEcompsocitemizethanks{
\IEEEcompsocthanksitem $\ast$ equal contribution
\IEEEcompsocthanksitem $^\dagger$ corresponding author
\IEEEcompsocthanksitem Weixuan Liang, Xinwang Liu,  Siwei Wang, En Zhu, and Zhiping Cai are with School of Computer, National University of Defense Technology, Changsha, Hunan, 410073, China. E-mail: \{xinwangliu@nudt.edu.cn, enzhu@nudt.edu.cn\}.
\IEEEcompsocthanksitem  Sihang Zhou and Xin Xu are with College of Intelligence Science and Technology, National University of Defense Technology, Changsha, Hunan, 410073, China.
\IEEEcompsocthanksitem Jian Xiong is with School of Business Administration, Southwestern University of Finance and Economics, Chengdu, Sichuan, 611130, China.
}
}

\markboth{SUBMITTED TO IEEE TRANSACTIONS ON KNOWLEDGE AND DATA ENGINEERING, February~2020}%
{Shell \MakeLowercase{\textit{et al.}}: Bare Demo of IEEEtran.cls for IEEE Journals}
\IEEEcompsoctitleabstractindextext{%

\begin{abstract}
Multi-view spectral clustering can effectively reveal the intrinsic cluster structure among data by performing clustering on the learned optimal embedding across views. Though demonstrating promising performance in various applications, most of existing methods usually linearly combine a group of pre-specified first-order Laplacian matrices to construct the optimal Laplacian matrix, which may result in limited representation capability and insufficient information exploitation. Also, storing and implementing complex operations on the $n\times n$ Laplacian matrices incurs intensive storage and computation complexity. To address these issues, this paper first proposes a multi-view spectral clustering algorithm that learns a high-order optimal neighborhood Laplacian matrix, and then extends it to the late fusion version for accurate and efficient multi-view clustering. Specifically, our proposed algorithm generates the optimal Laplacian matrix by searching the neighborhood of the linear combination of both the first-order and high-order base Laplacian matrices simultaneously. By this way, the representative capacity of the learned optimal Laplacian matrix is enhanced, which is helpful to better utilize the hidden high-order connection information among data, leading to improved clustering performance. We design an efficient algorithm with proved convergence to solve the resultant optimization problem. Extensive experimental results on nine datasets demonstrate the superiority of our algorithm against state-of-the-art methods, which verifies the effectiveness and advantages of the proposed algorithm.
\end{abstract}

\begin{IEEEkeywords}
Neighborhood kernel, High-order Laplacian matrix, Spectral clustering, Late fusion.
\end{IEEEkeywords}}
\maketitle

\IEEEpeerreviewmaketitle

\section{Introduction}
\IEEEPARstart{S}{pectral} clustering is an important technique that optimally learns the low-dimensional intrinsic embedding from the noisy high-dimensional data for clustering. In recent years, in the era of big data, integrate the diverse and complementary information from multiple views to further improve the effectiveness of the algorithm is becoming an increasingly attractive hotspot in this field \cite{de2005spectral,yang2018multi,YuYWD2020,WangWLG18}. As information represented by different views can be heterogeneous and biased, how to fully exploit the multi-view information and subtly fuse them to acquire a better overall vision of the whole sample set is one of the vital topics in the field of multi-view spectral clustering (MSC). According to the information fusion mechanism, the existing literature of MSC can be roughly dividing into three categories. The {first} category of method adopts a co-training mechanism to force the clustering results of different views to be consistent with each other \cite{kumar2011co,huang2015spectral,WangLY16}. The {second} category of method holds that the affinity matrix of each view is a perturbation of the optimal affinity matrix. Then, by conducting low-rank or sparse optimization, these algorithms extract an optimal consensus affinity matrix from all views \cite{nie2017self,zhan2019multiview,tang2018learning,zhou2019incremental}. By assuming that the optimal Laplacian matrix is a linear aggregation of the base Laplacian matrices, the {third} category of method optimizes the combination coefficients of the base Laplacian matrices by minimizing the normalized cut of the combined matrix \cite{xia2010multiview}. 

Although the mentioned methods have largely improved the clustering accuracy in multi-view circumstances, the storage and computational cost on the $n \times n$ Laplacian matrices limits the efficiency of these methods. To further improve the efficiency of the existing literature, a large number of methods are proposed. Zhou et al. reduce the complexity of spectral clustering by employing random Fourier features to construct the base kernels and do low-rank SVD decompositions accordingly \cite{zhou2019incremental}. Semertzidis et al. propose an efficient spectral clustering method for large-scale data sets in which a set of pairwise constraints are given to increase clustering accuracy and reduce clustering complexity \cite{Semertzidis2015}. The work in \cite{Li2015large} adopts a novel bipartite graph, which records only the similarity between the data ($n$ samples) and the salient point set ($p$ samples) for spectral clustering, thus largely reduces the memory and computational complexity. Chen et al. \cite{chen2011} and Cai et al. \cite{Deng} propose landmark-based spectral clustering and spectral dimensionality reduction, in which they adopt a representative point-based strategy to construct the similarity graph to accelerate the procedure of spectral clustering. The Nystr\"om approach \cite{alaoui2014fast} samples $m (\ll n)$ columns from the affinity matrix, and then forms a low-rank approximation of the full matrix by using the correlations between the sampled columns and the remaining $n-m$ columns. As only a portion of the full matrix is computed and stored, the Nystr\"om approach can reduce the time and space complexity significantly. Fowlkes et al. \cite{Fowlkes:2004} and Li et al. \cite{Li:2010} successfully applied this to spectral clustering and propose the algorithms which can scale to very large data sets.

Existing algorithms have achieved various improvements in multi-view spectral clustering. But we observe that these algorithms bear the following drawbacks. {First}, the algorithms in the third category share a common assumption that the optimal Laplacian matrix lies in the linear space spanned by the base Laplacian matrices. This assumption, on the one hand, simplifies the optimization procedure. But on the other hand, as it is uncovered in recent work that it might over-reduce the feasible set of the optimal Laplacian matrix and could result in limited representation capability of the learned matrix~\cite{bach2009exploring,cortes2009learning,liu2017optimal,li2018discriminative}. {Second}, existing algorithms do not sufficiently consider the high-order affinity information, which is important to reveal hidden neighborhood relations among samples. Both factors could adversely affect the learned Laplacian matrix, leading to unsatisfying clustering performance.

In this paper, we propose an optimal neighborhood multi-view spectral clustering algorithm (termed optimal neighborhood multi-view spectral clustering, ONMSC) to address both issues. Specifically, in our proposed algorithm, instead of restricting the optimal Laplacian matrix being a linear combination of base Laplacian matrices, we allow the optimal matrix to lie in the neighborhood of the latter. In this way, our algorithm effectively enlarges the region from which an optimal Laplacian matrix can be chosen and consequently improves its representative capacity. Moreover, we further enforce the learned optimal Laplacian matrix to be in the neighborhood of the linear combination of both the first-order and high-order base Laplacian matrices. As a consequence, the constructed optimal Laplacian matrix will be able to exploit both the first-order and high-order connection information. After that, we carefully instantiate an optimization objective function and develop an efficient algorithm with proved convergence to solve the resulting optimization problem. 

Unlike the early fusion methods that merge the base affinity matrices or Laplacian matrices, late fusion multi-view clustering \cite{Tao:2017,Zhou:2020,liu2018late,Wang:2019} generates base partitions from each view independently and integrates them into a consensus one. \cite{Tao:2017} adopts the low-rank and sparse decomposition to maintain consistency and get rid of the adverse effects of noises across views for better clustering performance. \cite{Zhou:2020} clusters instances from easy to difficult by a self-paced clustering ensemble method to enhance the stability of the corresponding algorithm. \cite{liu2018late,Wang:2019} learn an optimal consensus partition by maximally aligning the consensus partition with the weighted base partitions. The mentioned multi-view algorithms reduce the computational complexity of each iteration from $\mathcal{O}(n^3)$ to $\mathcal{O}(n^2)$ \cite{Tao:2017,Zhou:2020} or $\mathcal{O}(n)$ \cite{liu2018late,Wang:2019} while keeping comparable clustering accuracy. Inspired by the recent development of late fusion multi-view clustering \cite{liu2018late,Wang:2019}, we further extend the proposed algorithm into the late fusion fashion (denoted as ONMSC-LF) for efficient computation.

The contributions of this paper are summarized as follows:
\begin{itemize}
	\item  We discover that the current linear combination-based multi-view spectral clustering framework could: 1) limit the representation capacity of the learned Laplacian matrix; and 2) insufficiently explore the high-order neighborhood information among data. 
	\item We propose a high-order optimal Laplacian matrix construction method to solve the above problems. In our proposed method, both first-order and high-order affinity information is fully explored and the searching space of the optimal Laplacian matrix is considerably enlarged. 
	
	\item We also extend the proposed algorithm with a late fusion fashion and a Nystr\"om sample technique to further improve the efficiency of the proposed algorithm. As a consequence, the computational complexity and the storage complexity has drastically reduced from $\Ocal{n^3}$ to $\Ocal{n}$ per iteration.
\end{itemize}

The notations that are used throughout the paper are summarized in Table \ref{notations}. The rest of the paper is organized as follows. The related work of spectral clustering, high-order Laplacian matrix and late fusion multi-view clustering is reviewed in Section \ref{Pre}. The proposed optimal neighborhood multi-view spectral clustering algorithm and its late fusion version are described in Section \ref{prop}. The experimental results are reported in Section \ref{exper}. Finally, the paper is  concluded in Section \ref{con}.

\begin{table}[H]
	\centering
	\caption{Summary of notations}\label{notations}
	\vspace{-10pt}
	\begin{tabular}{c|c}
		\toprule
		$\X$ & A dataset of $n$ samples\tabularnewline
		$\x_i$ & The $i$-th sample of $\X$\tabularnewline
		$n$ & The number of samples in $\X$\tabularnewline
		$k$ & The number of clusters\tabularnewline
		$O$ & The largest order number\tabularnewline
		$v$ & The number of views\tabularnewline
		$N$ & The neighbor number of affinity matrix\tabularnewline
		$\A^{(o)}$ & $o$-th order affinity matrix\tabularnewline
		$\D^{(o)}$ & The degree matrix of $\A^{(o)}$\tabularnewline 
		$\L^{(o)}$ & The Laplacian matrix of $\A^{(o)}$\tabularnewline
		$\H^{(o)}_p$ & The $o$-th order cluster indicating matrix of $p$-th view\tabularnewline
		$\W^{(o)}_p$ & The rotation matrix of $\H^{(o)}_p$\tabularnewline
		$\bmu$ & Combination coefficients of multi-view\tabularnewline
		$\M$ & Correlation measuring matrix of multi-view\tabularnewline 
		$\F$ & The average cluster indicating matrix\tabularnewline 
		$\L^*$ & The optimal Laplacian matrix\tabularnewline 
		$\H^*$ & The optimal cluster indicating matrix\tabularnewline 
		$\G$ & The normalized affinity matrix\tabularnewline
		$\lambda_1$ & The average view balancing parameters\tabularnewline
		$\lambda_2$ & The diversity balancing parameters\tabularnewline
		\bottomrule
	\end{tabular}
\end{table}

\section{Preliminaries} \label{Pre}
In this section, we first briefly introduce some important notations about spectral clustering and then revisit the 1) \textit{linear Laplacian matrix combination-based multi-view spectral clustering} and 2) \textit{late fusion alignment maximization based multi-view clustering}. Finally, we introduce the definition of the high-order Laplacian matrix in our paper.
\subsection{Spectral Clustering}
Spectral clustering is a powerful unsupervised machine learning algorithm, especially for linear inseparable data. Denote the given data matrix as $\mathbf{X}=[\mathbf{x}_1,...,\mathbf{x}_n]^\top\in \mathbb{R}^{n\times d}$, where $n$ is the sample number and $d$ is the feature dimension. Given a kernel function $\kappa(\cdot,\cdot)$, the affinity matrix $\mathbf{A}$ can be constructed in a K-NN fasion. In particular, in the affinity matrix, $x_i$ and $x_j$ are linked if at least one of them is among the $k$ nearest neighbors of the other in the measurement of $\kappa(\cdot,\cdot)$. The $j$-th element of the $i$-th row of $\A$ is:
\begin{equation}\label{affine}
\A_{ij}=\begin{cases}
\kappa(\x_i,\x_j),&\text{if}\,\x_i\,\text{and}\,\x_j\,\text{are linked};\\
0,&~~~~~~~~~~~~~~~~~~\text{otherwies}.
\end{cases}
\end{equation}
Denoting the $i$-th diagonal element in the degree matrix $\D\in\RR^{n\times n}$ as 
\begin{equation}\label{degree}
\D_{ii}=\sum_{j=1}^{n} \A_{ij},
\end{equation}
and the definition of the corresponding first-order normalized graph Laplacian matrix is:
\begin{equation}\label{laplacian}
\L^{(1)}~\triangleq~\I_n~-~\D^{-\frac{1}{2}}\A\D^{-\frac{1}{2}},
\end{equation}
where $\mathbf{I}_n$ is a $n\times n$ identity matrix.
Let $\H\in\RR^{n\times k}$ denotes the cluster indicating matrix, where $k$ is the number of classes. The object function of the normalized spectral clustering\cite{ng2002spectral} is:

\begin{equation}\label{SC object function}
\min_{\H^\top\H=\I}\tr\lrincir{\H\Tr \L^{(1)}\H}.
\end{equation}
The optimal $\mathbf{H}$ can be easily acquired by conducting singular value decomposition (SVD) w.r.t. $\mathbf{L}^{(1)}$ and take the eigenvectors corresponding to the smallest $k$ eigenvalues. Finally, the categorical assignment can be acquired by doing $k$-means on the learned optimal cluster indicating matrix $\mathbf{H}$.

\subsection{Multi-view Spectral Clustering with Linear Laplacian Matrix Combination}
For multi-view data, let $v$ be be the number of views, $\A_1,...,\A_v\in\RR^{n\times n}$ be the affinity matrix of each view and $\L^{(1)}_1,...,\L^{(1)}_v\in\RR^{n\times n}$ be the corresponding first-order normalized Laplacian matrices. To exploit the complementary information from different views, \cite{xia2010multiview} linearly aggregates the base Laplacian matrices from each view and learns an optimal matrix which is the most suitable for clustering. The formulation of the algorithm is:

\begin{equation}\label{RMSC}
\begin{aligned}
&\min_{\H^\top\H=\I_c,\bmu}\tr\lrincir{\H\Tr \L^{(1)}_{\bmu} \H}, \\
\text{s.t.}~\L^{(1)}_{\bmu}&=\sum^{v}_{p = 1}\mu^{r}_p\L^{(1)}_p,\|\bmu\|_1 = 1,\bmu \geq 0,
\end{aligned}
\end{equation}
where $\mu_p$ is the combination weight of the $p$-th view, $\L^{(1)}_{\bmu}$ is the optimal Laplacian matrix for learning, and $r (\in \mathbb{N}^+)$ is a hyper-parameter to balance the contribution of each view. 
Although good performance has been achieved by the above method, recent literatures show that this method over reduces the feasible set of the optimal Laplacian matrix, which may lead to a less representative solution and yield even worse performance than using a single view\cite{liu2017optimal}. 

\subsection{Multi-view Clustering via Late Fusion}
In multiple kernel clustering and multi-view spectral clustering, recording and doing complex operations on the $n \times n$ kernel or Laplacian matrices are storage and computational expensive. To solve the problem, Wang et al.\cite{Wang:2019} adopt a late fusion fashion and propose an efficient multi-view clustering algorithm. In their method, to reduce storage and computational cost, the authors use the light-weighted cluster indicating matrix obtained by the kernel $k$-means algorithm to represent the categorical information from each view. Then, by maximally aligning the linear combination of the rotated cluster indicating matrix with the optimal data partition matrix, the information of each view is efficiently and effectively fused. Let $\H_p (p\in[v])$ be the cluster indicating matrix of the $p$-th view, the formulation of the late-fusion based multi-view clustering is as follow:
\begin{equation}
\begin{aligned}
&\max_{\H^*,\{\W_p\}^v_{p=1},\bmu}\tr({\H^*}^\top\S)+\lambda\tr({\H^*}^\top\F), \\
&\;\;\;\;\text{s.t.}~{\H^*}^\top\H^*=\I_k,{\W}^\top\W=\I_k,\\
&\sum_{p=1}^v{\mu_p}^2=1,\mu_p\geq0,\S=\sum_{p=1}^v\mu_p\H_p\W_p,
\end{aligned}
\end{equation}
where $\{\W_p\}^m_{p=1}\in \RR^{k\times k}$ is a set of rotation matrices, and $\S=\sum_{p=1}^v\mu_p\H_p\W_p$ is the linear combination of the rotated cluster indicating matrices; $\F$ denotes the average partition matrix, which we can obtain by performing spectral clustering on the average affinity matrix $\frac{1}{v}\sum_{p=1}^{v}\A_p$; $\lambda$ is a trade-off parameter to prevent $\H^*$ from being too far way from prior average partition.

\subsection{High-Order Laplacian Matrix}
First-order and second-order connections are essential concepts in graph analyzing \cite{tang2015line}. Specifically, in graph embedding, the first-order connection refers to the local pairwise proximity between vertices in a graph. Comparatively, the second-order connection holds that vertices which have similar affinity network structure are also similar to each other. An example can be found in Fig. \ref{highorder_graph}. In this figure, the circles indicate samples in the dataset and the edges indicate the first-order connection between the corresponding samples. As we can see, sample 5 and 6 are not connected, they are with a low similarity w.r.t. the definition of first-order connection. However, since both sample 5 and sample 6 are connected with sample 7, 8, 9 and 10, they share an identical neighborhood network structure. As a consequence, w.r.t. the definition of second-order connection, sample 5 and sample 6 are similar with each other.

Moreover, in recent literatures, because of the popularity of graph convolutional neural networks \cite{defferrard2016convolutional}, higher-order connection information has attracted the attention of researches. In these papers, the order of connections has been explained as the receptive field of different convolutional filters. Specifically, the definition of second-order proximity in \cite{tang2015line} is as follows:
\begin{Definition}[Second-order Proximity]
	The second-order proximity between a pair of vertices $(u,v)$ in a network is the similarity between their neighborhood network structures.
\end{Definition}
According to the above definition, denote $\a_j$ as the $j$-th column of first-order affinity matrix $\A$, the mathematical definition of the second-order affinity matrix $\A^{(2)}$ is:
\begin{equation}\label{high-order affine}
\A^{(2)}_{ij}\triangleq\a^\top_i\a_j,\forall i,j\in [n].
\end{equation}
Consequently, the corresponding second-order normalized Laplacian matrix can be written as:
\begin{equation}
\L^{(2)}\triangleq \I_n - \lrincir{\D^{(2)}}^{-\frac{1}{2}}\A^{(2)}\lrincir{\D^{(2)}}^{-\frac{1}{2}},
\end{equation}
where $\D^{(2)}_{ii}=\sum^{n}_{j=1}\A^{(2)}_{ij}$. According to this definition, we can readily calculate a $o$-order proximity via $\A^{(o)}=\A^{(o-1)}\A$.
As shown by existing literature \cite{tang2015line}, first-order connection in the real world data is usually not sufficient to preserve the global data structure. However, existing methods in this regard do not sufficiently consider the high-order information, which is crucial to improve the learning performance, especially in unsupervised scenario.

\begin{figure}[!t]
	\setlength{\abovecaptionskip}{0pt}
	\setlength{\belowcaptionskip}{0pt}
	\centering 
	\includegraphics[width=0.7\columnwidth]{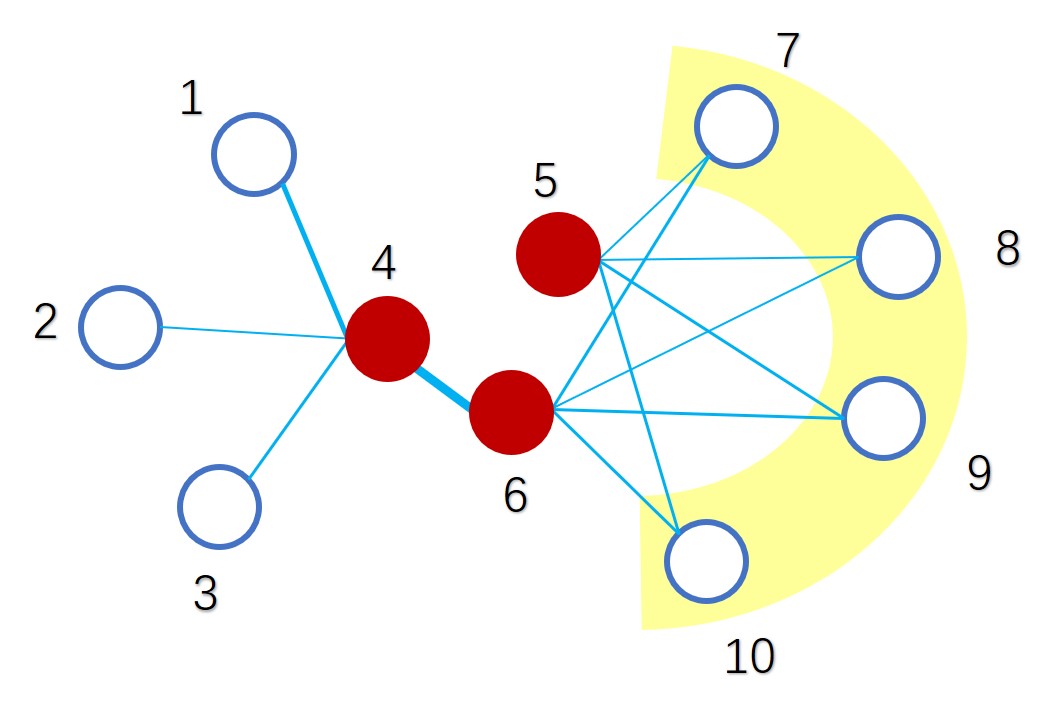}
	\caption{A toy example of the graph of an affinity matrix. Sample 4 and 6 should be placed closely in the low-dimensional space as they are connected through a strong tie. Sample 5 and 6 should also be placed closely as they share similar neighbors.}
	\label{highorder_graph}
\end{figure}

\section{The Proposed Algorithm}\label{prop}
In this section, to explore better representation capacity and more comprehensively exploit both the first-order and high-order affinity information in data, we first propose a novel multi-view spectral clustering algorithm with optimal neighborhood Laplacian matrix. Then, to improve both the storage and computational efficiency, we extend the proposed algorithm to a late fusion version.
\subsection{Multi-View Spectral Clustering with High-Order Optimal Neighborhood Laplacian Matrix}
To better capture the high-order affinity information and search the optimal Laplacian matrix in a larger space, we propose the following formulation:
\begin{equation}\label{Laplacian1}
\begin{aligned}
\min_{\H,\bmu,\L^*} &\tr \lrincir{\H^\top\L^*\H} + \sum^{O}_{o=1}\|\L^*-\L^{(o)}_{\bmu}\|^2_\text{F} + \alpha {\bmu}^\top\M\bmu, \\
\text{s.t.}~&\L^{(o)}_{\bmu} = \sum^v_{p=1}\mu_p\L^{(o)}_p(o\in [O]),\|\bmu\|_1=1,\bmu\geq 0,\\
&\L^*\succeq 0,\L^*_{mn}\leq 0(m\neq n),\H^\top\H=\I_k,
\end{aligned}
\end{equation}
where $\L^*$ is the optimal Laplacian matrix for learning, $\L^{(o)}_{\bmu}$ is the linear combination of the $o$-order base Laplacian matrices, $O$ is the largest order number, $\alpha$ is an importance balancing coefficient, and $\M$ is the correlation measuring matrix which records the centered kernel alignment value \cite{cortes2012algorithms} between affinity matrices. Specifically, denote the $o$-order affinity matrix of the $p$-th view as $\A^{(o)}_p$, the definition of $\M$ is:
\[
\M_{pq} = \sum_{o=1}^O\frac{\tr\lrincir{\A^{(o)}_p\A^{(o)}_q}}{\|\A^{(o)}_p\|_\text{F}\|\A^{(o)}_q\|_\text{F}}.
\]
In the objective function of Eq. \eqref{Laplacian1}, the first term is the spectral clustering term which encourages the learned optimal Laplacian matrix to perform well in clustering. In the second term, we restrict $\L^*$ to be in the neighborhood of the linearly combined multi-order based Laplacian matrices by minimizing the difference between $\L^*$ and $\L^{(o)}_{\bmu}$'s at the same time. The third term is the diversity inducing term which tries to introduce more diverse information for optimal Laplacian matrix construction by minimizing the overall pair-wise correlation between the base affinity matrices \cite{liu2016multiple}.

In Eq. \eqref{Laplacian1}, the PSD and non-positive constraints are added to guarantee that the learned matrix $\L^*$ to be a Laplacian matrix. However, these constraints also make the corresponding optimization problem hard and inefficient to solve. To tackle the problem, we take advantage of the original definition of the Laplacian matrix, and propose the following formulation:
\begin{equation}\label{laplacian2}
\begin{aligned}
&\min_{\H,\bmu,\P,\boldsymbol{\Lambda}}\tr\lrincir{\H^\top(\I_n-\P\boldsymbol{\Lambda}\P^\top)\H} \\
+ \sum^{O}_{o=1}&\|(\I_n-\P\boldsymbol{\Lambda}\P^\top)-\L^{(o)}_{\bmu}\|^2_\text{F} + \alpha{\bmu}^\top\M{\bmu},\\
\text{s.t.}~&\L^{(o)}_{\bmu} = \sum^v_{p=1}\mu_p\L^{(o)}_p(o\in [O]),\|\bmu\|_1=1,{\bmu}\geq 0,\\
\P\in&\RR^{n\times k},\P^\top\P=\I_k,0\leq \boldsymbol{\Lambda}_{ii}\leq 1,\H^\top\H=\I_k,
\end{aligned}
\end{equation}
where $\boldsymbol{\Lambda}\in \RR^{k\times k}$ is a diagonal matrix. In the new formulation, we use $\P\boldsymbol{\Lambda}\P^\top$ to represent a low-rank normalized affinity matrix and $\I_n-\P\boldsymbol{\Lambda}\P^\top$ to represent the corresponding Laplacian matrix. Notably, the constraint $0\leq\boldsymbol{\Lambda}_{ii}\leq 1$ is added to make sure that the optimization process is stable. The optimization procedure of Eq. \eqref{laplacian2} is listed in the appendix, please check Appendix A for details.

\subsection{Late fusion-based Multi-View Spectral Clustering with Optimal Neighborhood Laplacian Matrix}
To improve the efficiency of the proposed algorithm, we further propose the late fusion version of optimal neighborhood multi-view spectral clustering. In this version, we use more compact and light-weighted data partition matrices instead of the heavy-weighted Laplacian matrices to present the multi-level affinity information from different views to reduce the cost both in storage and computation.
The formulation of the late fusion method is as follows:
\begin{equation}
\begin{aligned}
\max_{\H^*,\{\W^{(o)}_p\}^{v,O}_{p,o=1},\bmu}&\tr({\H^*}^\top\S)+\lambda_1\tr({\H^*}^\top\F)-\lambda_2\tr\lrincir{{\bmu}^\top\M\bmu} \\
\text{s.t.}~{\H^*}^\top&\H^*=\I_k,{\W_p^{(o)}}^\top\W_p^{(o)}=\I_k,\\
\M_{pq}&=\sum_{o=1}^O\frac{\tr\lrincir{{\H^{(o)}_p}^\top\H^{(o)}_q}}{\|\H^{(o)}_p\|_\text{F}\|\H^{(o)}_q\|_\text{F}},\\
\sum^v_{p=1}\mu_p=1,&\mu_p\geq0,\S=\sum_{o=1}^O\sum_{p=1}^v\mu_p\H^{(o)}_p\W^{(o)}_p,\\
(\forall~p,q&\in[v]~\text{and}~o\in[O])
\end{aligned}
\label{objective function}
\end{equation}
where $\H^{(o)}_p$ denotes the cluster indicating matrix of $o$-order affinity matrix of the $p$-th view, $\M$ is the correlation measuring matrix, and $\F$ is the cluster indicating matrix of average first-order affinity matrix $\frac{1}{v}\sum_{p=1}^v \A^{(1)}_p$. Correspondingly, the second term in the objective function is a generalization term which is added to avoid bad local maximal solution. $\lambda_1$ and $\lambda_2$ are hyper-parameters.

\subsection{Optimization Algorithm}
In this part, we design an efficient three-step alternative optimization algorithm to solve the problem in Eq. \eqref{objective function}:

{\bfseries i) Update $\H^*$}. Given $\{\W^{(o)}_p\}^{v,O}_{p,o=1}$ and $\bmu$, the optimization problem in Eq. \eqref{objective function} w.r.t. $\H^*$ reduces to:
\begin{equation}
\max_{\H^*}\tr({\H^*}^\top\C)~~\text{s.t.}~{\H^*}^\top\H^* = \I_k,
\label{update_H}
\end{equation}
where $\C = \sum^{O}_{o=1}\sum^{v}_{p=1}\mu_p\H^{(o)}_p\W^{(o)}_p+\lambda_1\F$. The optimization of Eq. \eqref{update_H} could be easily solved by doing singular value decomposition(SVD) over the given matrix $\C$. Suppose that the matrix $\C$ in Eq. \eqref{update_H} has the economic rank-$k$ singular value decomposition form as $\C_k = \U_k\bSigma_k{\V_k}^\top$, where $\U_k\in\RR^{n\times k}$, $\bSigma_k\in\RR^{k\times k}$, $\V_k\in\RR^{k\times k}$. Through Theorem \ref{theorem1}, we can find that Eq. \eqref{update_H} has a closed-form solution, i.e $\H^*=\U_k{\V_k}^\top$.

\begin{Theorem}\label{theorem1}
	Suppose that the matrix $\C$ in Eq. \eqref{update_H} has the economic rank-$k$ singular value decomposition form as $\C_k = \U_k\bSigma_k{\V_k}^\top$, where $\U_k\in\RR^{n\times k}$, $\bSigma_k\in\RR^{k\times k}$, $\V_k\in\RR^{k\times k}$. The
	optimization in Eq. \eqref{update_H} has a closed-form solution as follows:
	\begin{equation}
	\H^*=\U_k{\V_k}^\top.
	\label{theo1}
	\end{equation}
\end{Theorem}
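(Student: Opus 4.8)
The plan is to recognize Eq.~\eqref{update_H} as an instance of the orthogonal Procrustes / trace-maximization problem and solve it via the singular value decomposition of $\C$. First I would observe that since $\C_k = \U_k\bSigma_k{\V_k}^\top$ is the economic rank-$k$ SVD, we can write $\tr({\H^*}^\top\C_k) = \tr({\H^*}^\top\U_k\bSigma_k{\V_k}^\top) = \tr\lrincir{\bSigma_k {\V_k}^\top{\H^*}^\top\U_k}$ by the cyclic property of the trace. Setting $\Z \triangleq {\V_k}^\top{\H^*}^\top\U_k \in \RR^{k\times k}$, this becomes $\sum_{i=1}^k (\bSigma_k)_{ii} \Z_{ii}$, a nonnegative-weighted sum of the diagonal entries of $\Z$ (the singular values in $\bSigma_k$ are nonnegative).

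Next I would bound $\Z_{ii}$. The key step is to show each $\Z_{ii} \leq 1$, which follows because $\Z$ is a product of matrices with orthonormal columns: $\U_k^\top\U_k = \I_k$ and ${\V_k}^\top\V_k = \I_k$, while ${\H^*}^\top\H^* = \I_k$ by the constraint. Hence $\Z$ has operator norm at most $1$ (it is a submatrix/product built from partial isometries), so every diagonal entry satisfies $\abs{\Z_{ii}} \le 1$. Therefore $\tr({\H^*}^\top\C) = \tr({\H^*}^\top\C_k) \leq \sum_{i=1}^k (\bSigma_k)_{ii}$, giving an upper bound independent of the feasible $\H^*$. One subtlety I would handle carefully: strictly $\C$ and $\C_k$ need not be equal unless $\C$ already has rank $\le k$; but $\tr({\H^*}^\top\C) = \tr({\H^*}^\top\C_k)$ still holds whenever the column space of $\H^*$ is matched appropriately, and in fact the standard argument shows the maximum over all orthonormal $\H^*$ of $\tr({\H^*}^\top\C)$ equals the sum of the top $k$ singular values of $\C$, which is $\tr(\bSigma_k)$.

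Finally I would verify that the candidate $\H^* = \U_k{\V_k}^\top$ attains this bound and is feasible. Feasibility: ${\H^*}^\top\H^* = \V_k{\U_k}^\top\U_k{\V_k}^\top = \V_k{\V_k}^\top = \I_k$, using ${\U_k}^\top\U_k = \I_k$ and that $\V_k$ is a $k\times k$ orthogonal matrix. Attainment: $\tr({\H^*}^\top\C_k) = \tr(\V_k{\U_k}^\top \U_k\bSigma_k{\V_k}^\top) = \tr(\V_k\bSigma_k{\V_k}^\top) = \tr(\bSigma_k)$, matching the upper bound. Since a feasible point meets the upper bound, it is optimal, establishing Eq.~\eqref{theo1}.

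The main obstacle is the bound $\Z_{ii} \le 1$ and, relatedly, the precise justification that replacing $\C$ by its rank-$k$ truncation $\C_k$ does not change the optimal value of the trace objective; this is where the argument must be stated carefully (e.g.\ by noting $\C = \C_k + \C_{\perp}$ with $\C_\perp$ supported on the complementary singular subspaces, and that only the top-$k$ singular directions can contribute to $\tr({\H^*}^\top\C)$ for rank-$k$ orthonormal $\H^*$). The remaining computations — feasibility of $\U_k{\V_k}^\top$ and attainment of the bound — are routine verifications using orthonormality and the cyclic property of the trace.
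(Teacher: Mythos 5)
Your proof is correct and follows essentially the same route as the paper: reduce $\tr({\H^*}^\top\C)$ via the SVD and cyclicity of the trace to $\tr(\Z\bSigma_k)$ with $\Z$ a product of partial isometries, bound the diagonal entries by $1$, and verify that $\U_k{\V_k}^\top$ is feasible and attains the bound (your explicit attainment check is actually more complete than the paper's). The only remark worth adding is that the subtlety you flag about $\C$ versus $\C_k$ is vacuous here: $\C\in\RR^{n\times k}$ has rank at most $k$, so its economic rank-$k$ SVD is exact and $\C=\C_k$ automatically.
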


\begin{proof}
	By taking the the normal singular value decomposition $\C = \U\bSigma\V^\top$, the Eq. \eqref{update_H} could be rewritten as,
	\begin{equation*}
	\tr({\H^*}^\top\U\bSigma\V^\top)=\tr(\V^\top{\H^*}^\top\U\bSigma).
	\end{equation*}
	Due to the singular values of each $\H_p^{(o)}$ are non-negative, after rotating by $\W_p^{(o)}$, the singular values of $\H_p^{(o)}\W_p^{(o)}$ are the same with $\H_p^{(o)}$. Since $\C$ is the linear combination of $\H_p^{(o)}\W_p^{(o)}$'s and the singular values of $\F$ are also non-negative, we can obtain that the singular values of $\C$ are non-negative.	Considering that $\Q=\V^\top{\H^*}^\top\U$, we have $\Q{\Q}^\top=\V^\top{\H^*}^\top\U\U^\top\H^*\V=\I_k$. Therefore we can take $\tr(\V^\top{\H^*}^\top\U\bSigma)=\tr(\Q\bSigma)\leq\sum_{i=1}^k\sigma_i$. Hence, in order to
	maximize the value of Eq. \eqref{update_H}, the solution should be given as Eq. \eqref{theo1}.
\end{proof}

{\bfseries ii) Update $\{\W^{(o)}_p\}^{v,O}_{p,o=1}$}. Given $\H^*$ and $\bmu$, for each single $\W_p^{(o)}$, the optimization problem in Eq. \eqref{objective function} reduces to:
\begin{equation}
\max_{\W_p^{(o)}}\tr\lrincir{{\W_p^{(o)}}^\top \A}~\text{s.t.}~{\W_p^{(o)}}^\top \W_p^{(o)} = \I_k,
\label{update_w}
\end{equation}
where $\A=\mu_p{\H^{(o)}_p}^\top\H^*$. The solution of Eq. \eqref{update_w} is similar with that of Eq. \eqref{update_H}, it also has a closed form solution as shown in Theorem \ref{theorem1}.

{\bfseries iii) Update $\bmu$}. Given $\H^*$ and $\{\W^{(o)}_p\}^{v,O}_{p,o=1}$, the optimization problem in Eq. \eqref{objective function} w.r.t $\bmu$ reduces to:
\begin{equation}\label{update_mu}
\begin{aligned}
    &\min_{\bmu}\bmu^\top\M\bmu - \f^\top\bmu, \\
    \text{s.t.}~&\|\bmu\|_1=1, 0\leq\mu_p\leq1 (p \in [v]),
\end{aligned}
\end{equation}
where $\f_p=\frac{\lambda_1}{\lambda_2}\tr\left[{\H^*}^\top\lrincir{\sum^{O}_{o=1}\H_p^{(o)}\W_p^{(o)}}\right](p\in [v])$. Since $\M$ is PSD\cite{cortes2012algorithms}, the above function is a standard convex quadratic programming(QP) problem, its global optimal solution can be easily solved by the optimization toolbox of MATLAB.

In sum, our algorithm for solving Eq. \eqref{objective function} is outlined in Algorithm \ref{alg:algorithm1}, where $obj_{(t)}$ denotes the objective value at the $t$-th iteration.
\begin{algorithm}[htb]
	\caption{Late Fusion-based Optimal Neighborhood Multi-view Spectral Clustering} 
	\label{alg:algorithm1} 
	\begin{algorithmic}[1]
		\renewcommand{\algorithmicrequire}{\textbf{Input:}}
		\renewcommand{\algorithmicensure}{\textbf{Output:}}
		\REQUIRE ~Data from $v$ views $\{\X_{(1)},...,\X_{(v)}\}$, number of cluster $k$, parameter $\lambda_1$, $\lambda_2$ and the neighbor number $N$\\
		\ENSURE ~The learned optimal cluster indicating matrix $\H^*$\\
		~\\
		\STATE Construct first-order and high-order affinity matrices and the corresponding Laplacian matrices $\L^{(o)}_p$ of each view. Obtain the cluster indicating matrix $\H^{(o)}_p$ by standard spectral clustering of $\L^{(o)}_p$. Initialize $\H^*$ as $\0_{n\times k}$, $\bmu$ as $\1_v/v$, $\W^{(o)}_p$ as $\I_{k \times k}$, and $t$ as 1.  
		\REPEAT
		\STATE 	Calculate
		$$\H^*_{(t-1)}=\sum^v_{p=1}\sum^O_{o=1}\bmu_{p_{(t-1)}}\H^{(o)}_{p_{(t-1)}}\W^{(o)}_{p_{(t-1)}}.$$
		\STATE  Calculate $\W^{(o)}_{p_{(t)}}$ by optimizing Eq. \eqref{update_w}.
		\STATE  Calculate $\bmu_{(t)}$ by optimizing Eq. \eqref{update_mu}.
		\STATE 	Calculate $\H^*_{(t)}$ by optimizing Eq. \eqref{update_H}.
		\STATE  $t=t+1$.
		\UNTIL $|Obj_{(t)}-Obj_{(t-1)}|/|Obj_{(t)}|<10^{-4}$.
	\end{algorithmic}
\end{algorithm}

\subsection{Algorithmic Discussion} \label{Algorithmic Discussion}
{\bfseries Convergence}. In each optimization iteration of Algorithm \ref{alg:algorithm1}, two eigenvalue decomposition problems and one convex quadratic programming problem are solved. Since all these three sub-problems have optimal solution, the objective of Algorithm \ref{alg:algorithm1} is guaranteed to be monotonically increased when optimizing one variable with others fixed at each step. Moreover, the objective function is upper bounded by Proposition \ref{upper bound}. As a result, since the objective value rises monotonically while is upper bounded, our proposed optimization algorithm is guaranteed to converge to a local optimum of problem Eq. \eqref{objective function}.

\begin{Proposition}\label{upper bound}
	The value of the objective function in Eq.~\eqref{objective function} is upper bounded by $\frac{1}{2}(1+O^2v^2+2\lambda_1)k$.
\end{Proposition}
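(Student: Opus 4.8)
The plan is to bound the three terms of the objective in Eq.~\eqref{objective function} separately, using only orthonormality of the factor matrices and positive semi-definiteness of $\M$, and then sum. Since $\M$ is PSD (as noted after Eq.~\eqref{update_mu}), $\bmu^\top\M\bmu\ge 0$ for every feasible $\bmu$, so $-\lambda_2\tr(\bmu^\top\M\bmu)\le 0$ and this term is simply discarded. For $\lambda_1\tr({\H^*}^\top\F)$, I would use the Cauchy--Schwarz inequality for the trace inner product, $\tr({\H^*}^\top\F)\le\|\H^*\|_\text{F}\|\F\|_\text{F}$; since ${\H^*}^\top\H^*=\I_k$ we have $\|\H^*\|_\text{F}=\sqrt{k}$, and $\F$ is a cluster indicating matrix with orthonormal columns, so $\|\F\|_\text{F}=\sqrt{k}$ as well, giving $\lambda_1\tr({\H^*}^\top\F)\le\lambda_1 k$.

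The main term is $\tr({\H^*}^\top\S)$. Here I would apply $\tr(\A^\top\B)\le\|\A\|_\text{F}\|\B\|_\text{F}\le\tfrac12\big(\|\A\|_\text{F}^2+\|\B\|_\text{F}^2\big)$ with $\A=\H^*$ and $\B=\S$, which yields $\tr({\H^*}^\top\S)\le\tfrac12\big(k+\|\S\|_\text{F}^2\big)$. Then I bound $\|\S\|_\text{F}$ by the triangle inequality: $\|\S\|_\text{F}\le\sum_{o=1}^O\sum_{p=1}^v\mu_p\|\H^{(o)}_p\W^{(o)}_p\|_\text{F}$. Because ${\W^{(o)}_p}^\top\W^{(o)}_p=\I_k$ and each $\H^{(o)}_p$ (output of standard spectral clustering) has orthonormal columns, $\|\H^{(o)}_p\W^{(o)}_p\|_\text{F}=\|\H^{(o)}_p\|_\text{F}=\sqrt{k}$; and from $\sum_p\mu_p=1,\ \mu_p\ge 0$ we get $\mu_p\le 1$, hence $\sum_{o,p}\mu_p\le Ov$. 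Therefore $\|\S\|_\text{F}\le Ov\sqrt{k}$ and $\tr({\H^*}^\top\S)\le\tfrac12\big(k+O^2v^2k\big)$. Adding the three bounds gives $\tfrac12\big(k+O^2v^2k\big)+\lambda_1 k+0=\tfrac12\big(1+O^2v^2+2\lambda_1\big)k$, as claimed.

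There is no genuine obstacle in this argument; every step is an instance of Cauchy--Schwarz, the triangle inequality, or orthonormality. The only points that need a little care are (i) recording that the $\H^{(o)}_p$ and $\F$ produced by spectral clustering really do have orthonormal columns, so their Frobenius norms are exactly $\sqrt{k}$, and (ii) choosing the $ab\le\tfrac12(a^2+b^2)$ split for the $\tr({\H^*}^\top\S)$ term rather than the sharper $ab$ estimate, since it is this split, together with the (deliberately loose) bound $\mu_p\le 1$ in place of $\sum_p\mu_p=1$, that produces exactly the constant $\tfrac12(1+O^2v^2+2\lambda_1)k$ stated in the proposition.
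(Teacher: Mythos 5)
Your proof is correct and takes essentially the same route as the paper's Appendix~B argument: bound the three terms separately, using ${\H^*}^\top\H^*=\I_k$ and the orthonormal columns of $\F$ and the $\H^{(o)}_p$, the split $\tr({\H^*}^\top\S)\le\tfrac12\bigl(\|\H^*\|_\text{F}^2+\|\S\|_\text{F}^2\bigr)$ with the crude estimate $\mu_p\le 1$ giving $\|\S\|_\text{F}\le Ov\sqrt{k}$, and positive semi-definiteness of $\M$ to drop the regularization term. Your remark that the stated constant $O^2v^2$ (rather than the sharper $O^2$ available from $\sum_p\mu_p=1$) is exactly what this deliberately loose split produces is also accurate.
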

The proof of Proposition \ref{upper bound} can be found in Appendix B.

{\bfseries Computational Complexity}. 
The computation of the proposed algorithm mainly includes four parts, i.e., an initialization procedure and three-step alternative optimization procedures. Among these steps, in the initialization procedure, calculating the corresponding cluster indicating matrices $\mathbf{H}_p^{(o)}$ requires conducting SVD on $n\times n$ Laplacian matrices, which incurs $\Ocal{Ovn^3}$ complexity. 
In the optimization procedure, updating the optimal cluster indicating $\H^*$ requires solving a SVD problem on a $n \times k$ matrix, which takes $\Ocal{nk^2}$ time.
Updating the $\{\W^{(o)}_p\}_{p=1, o=1}^{v, O}$ requires solving $Ov$ SVD problems over $k \times k$ matrices. It will incur $\Ocal{Ovk^3}$ computational complexity.
Finally, updating $\bmu$ requires solving a standard Quadratic Programming with Linear Constraints (QPLC) whose complexity is $\Ocal{\epsilon^{-1}v}$, where $\epsilon$ is the precision of the result. Let $T$ be the iteration number, the overall complexity of our algorithm is $\Ocal{Ovn^3+T(Ovk^3+nk^2+\epsilon^{-1}v)}$. Considering that $\epsilon^{-1},~k,~O,~\text{and}~v$ far less than $n$, the complexity is basically $\Ocal{n^3+Tn}$.

\subsection{Scale to Large-Scale Dataset}
In the previous section, by analyzing the computational complexity of algorithm \ref{alg:algorithm1}, we find the computational bottleneck mainly lies in the computation of the cluster indicating matrices of each view. This makes our algorithm hard to extend to large-scale datasets. In this section, we further adopt an efficient Nystr\"om algorithm to make the proposed algorithm more suitable for large scale datasets.

Specifically, we follow the suggestion of Li et al.\cite{Li:2010} to combine the Nystr\"om algorithm with randomized SVD and propose an algorithm to efficiently learn the spectral embedding of large-scale datasets. Since calculating the eigenvectors corresponding to the smallest $k$ eigenvalues of $\L$ is equivalent with calculating the eigenvectors corresponding to the largest eigenvalues of the normalized affinity matrix $\G=\D^{-\frac{1}{2}}\A\D^{-\frac{1}{2}}$, for the application convenience of the Nystr\"om method, we take $\G$ instead of $\L$ for calculation.

{\bf Nystr\"om Method}. The Nystr\"om method \cite{Williams2001} has been commonly used to constuct low-rank matrix approximation. Given a symmetric matrix $\G\in\RR^{n\times n}$, this algorithm first samples $m(\ll n)$ columns from $\G$ (denote the columns selected as $\mathbf{E} \in \mathbb{R}^{n\times m}$). Let $\R$ be the $m \times m$ matrix consisting of the intersection of these $m$ columns with the corresponding $m$ rows of $\G$. The rows and columns of $\G$ can be rearranged such that $\E$ and $\G$ are written as:
\[
\E= \left[ \begin{array}{c}
\R \\
\R^{'}
\end{array} 
\right]~~\text{and}~~ 
\G= \left[ \begin{array}{cc}
\R & {\R^{'}}^\top \\
\R^{'} & \R^{''}
\end{array} 
\right],
\]
where $\R^{'}\in\RR^{(n-m)\times m}$ and $\R^{''}\in\RR^{(n-m)\times (n-m)}$. Assume that the SVD of $\R$ is $\U\boldsymbol{\Lambda}\U^\top$, where $\boldsymbol{\Lambda}=\text{diag}(\sigma_1,...,\sigma_m)$ is the diagonal matrix containing the singular values of $\R$ in non-increasing order. For $k\leq m$, the rank-$k$ Nystr\"om approximation is
\[
\widetilde{\G}=\E\R^+_k\E^\top,
\]
where $\R^+_k=\sum_{i=1}^{k}\sigma^{-1}_i\U^{(i)}\U^{(i)\top}$, and $\U^{(i)}$ is the $i$-th column of $\U$. The time complexity of the approximation is $\Ocal{nmk+m^3}$, which is much smaller than the $\Ocal{n^3}$ complexity of doing standard SVD on $\G$ When $m\ll n$.

{\bf Randomized SVD}. When the number of sampled columns $m$ is large, the complexity $\Ocal{nmk+m^3}$ of the Nystr\"om method is still unacceptable. To tackle the problem, Halko et al. \cite{randomSVD2011} propose a class of simple but efficient randomized algorithm. Our adopted algorithm includes two stages. In the first stage, we generate a $m\times(k+s)$ standard Gaussian random matrix $\boldsymbol{\Omega}$ (i.e., each entry of $\boldsymbol{\Omega}$ is an independent Gaussian random variable with zero mean and unit variance). And then, we form the matrix $\Y = \R \boldsymbol{\Omega}$ and construct a matrix $\Q$ whose columns form an orthogonal basis for the range of $\Y$ (by QR decomposition). As a consequence, we find an approximate basis $\Q$ for the range of $\R$, such that $\R\approx \Q\Q^\top\R$. The number of columns in $\boldsymbol{\Omega}$ is often set to be larger than the required rank $k$ by an over-sampling parameter $s$. Typically, $s$ is a small number such as 5 or 10. It enables $\Y = \R \boldsymbol{\Omega}$ to have a better chance to span the $k$-dimensional subspace of $\R$.
In the second stage, $\R$ is restricted to the obtained subspace from $\Y$, leading to the reduced matrix $\B=\Q^\top\R\Q$. A standard SVD is computed on $\B$ to obtain $\B=\V\boldsymbol{\Lambda}\V^\top$. The SVD of $\R$ can then be approximated as
\[
\R\approx\Q\B\Q^\top = (\Q\V)\boldsymbol{\Lambda}(\Q\V)^\top.
\]
The operations of the randomized SVD are shown in Algorithm \ref{alg:algorithm_RSVD}. The proposed fast spectral embedding via Nystr\"om and randomized SVD are reported in Algorithm \ref{alg:algorithm2}.
\begin{algorithm}[htb] 
	\caption{Randomized SVD}
	\label{alg:algorithm_RSVD} 
	\begin{algorithmic}[1] 
		\renewcommand{\algorithmicrequire}{\textbf{Input:}}
		\renewcommand{\algorithmicensure}{\textbf{Output:}}
	    \REQUIRE symmetric matrix $\R\in\RR^{m\times m}$, rank $k$, over-sampling parameter $s$\\
	    \ENSURE $\U$, $\boldsymbol{\Lambda}$\\
	    \STATE $\boldsymbol{\Omega} \leftarrow$ a $m\times(k+s)$ standard Gaussian random matrix.
	    \STATE $\Y \leftarrow$ $\R\boldsymbol{\Omega}$.
	    \STATE Find an orthogonal matrix $\Q$ (by QR decomposition) such that $\Y=\Q\Q^\top\Y$.
	    \STATE $\B \leftarrow \Q^\top\R\Q$.
	    \STATE Perform SVD on $\B$ to obtain $\V\boldsymbol{\Lambda}\V^\top$.
	    \STATE $\U \leftarrow \Q\V$.
	\end{algorithmic}
\end{algorithm}
\begin{algorithm}[htb] 
	\caption{Fast Spectral Embedding via Nystr\"om and Randomized SVD}
	\label{alg:algorithm2} 
	\begin{algorithmic}[1] 
		\renewcommand{\algorithmicrequire}{\textbf{Input:}}
		\renewcommand{\algorithmicensure}{\textbf{Output:}}
		\REQUIRE~the $p$-th view $\X_p(p\in[v])\in\RR^{n\times d}$, number of sampled columns $m$, over-sampling parameter $s$, number of cluster $k$  \\
		\ENSURE~the cluster indicating matrix $\H\in\RR^{n\times k}$ of the $o$-th order affinity matrix, the $k$ largest approximate eigenvalues $\boldsymbol{\Lambda}_k$. \\
		~\\
		\STATE  Construct $o$-th order affinity matrix $\A$ by Eq. \eqref{affine} and Eq. \eqref{high-order affine}.
		\STATE Compute the degree matrix $\D$ by Eq. \eqref{degree}.
		\STATE $\G \leftarrow \lrincir{\D}^{-\frac{1}{2}}\A\lrincir{\D}^{-\frac{1}{2}}$.
		\STATE 	$\E\leftarrow$ $m$ columns of $\G$ sampled uniformly at random without replacement.
		\STATE  $\R\leftarrow$ the intersection of the $m$ columns sampled in the step 4 with the corresponding $m$ rows of $\G$.
		\STATE $[\widetilde{\U},\boldsymbol{\Lambda}]\leftarrow$ randsvd($\R,k,s$) using Algorithm \ref{alg:algorithm_RSVD}.
		\STATE  $\U\leftarrow\E\widetilde{\U}\boldsymbol{\Lambda}^+$.
		\STATE $\H\leftarrow\sqrt{\frac{m}{n}}\U$.
	\end{algorithmic}
\end{algorithm}
By Algorithm \ref{alg:algorithm2}, matrix $\hat{\G}=\lrincir{\sqrt{\frac{m}{n}}\U}\lrincir{\frac{n}{m}\boldsymbol{\Lambda}}\lrincir{\sqrt{\frac{m}{n}}{\U^\top}}$ can be regarded as an approximation of the input matrix $\G$, and $\H$ is the cluster indicating matrix. 
According to the conclusion in Theorem \ref{error} of \cite{Li:2010}, the approximation error of Algorithm \ref{alg:algorithm2} is upper bounded.

\begin{Theorem}[\cite{Li:2010}]\label{error}
	For the $\G = \lrincir{\D}^{-\frac{1}{2}}\A\lrincir{\D}^{-\frac{1}{2}}$ and $\hat{\G}=\lrincir{\sqrt{\frac{m}{n}}\U}\lrincir{\frac{n}{m}\boldsymbol{\Lambda}}\lrincir{\sqrt{\frac{m}{n}}{\U^\top}}$ obtained by Algorithm \ref{alg:algorithm2},
	\begin{equation}
	\begin{aligned}
	\EE&\|\G-\hat{\G}\|_\text{F} \\
	\leq &\frac{2(k+s)}{\sqrt{s-1}}\|\G-{\G}_k\|_\text{F}+\lrincir{1+\frac{4(k+s)}{\sqrt{m(s-1)}}}n\G^*_{ii},
	\end{aligned}
	\end{equation}
	where $\G_k$ is the best rank-k approximation of $\G$, $\G^*_{ii}=\max_i\G_{ii}$. 
\end{Theorem}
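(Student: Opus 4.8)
The statement is quoted from \cite{Li:2010}, so the plan is to reproduce that argument, adapted to the notation of Algorithm \ref{alg:algorithm2}. The idea is to isolate two independent sources of error: (i) the error of the \emph{exact} rank-$k$ Nystr\"om model $\widetilde{\G}=\E\R_k^{+}\E^\top$ relative to $\G$, and (ii) the extra error caused by replacing the exact rank-$k$ SVD of the intersection block $\R$ with the randomized SVD produced by Algorithm \ref{alg:algorithm_RSVD}. By the triangle inequality in the Frobenius norm,
\[
\|\G-\widehat{\G}\|_\text{F}\;\le\;\|\G-\widetilde{\G}\|_\text{F}+\|\widetilde{\G}-\widehat{\G}\|_\text{F},
\]
so it suffices to bound the two terms separately and then take the expectation.

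First I would handle term (i) with the classical deterministic Nystr\"om bound: once the $m$ sampled columns $\E$ are fixed, the Schur-complement structure of the rearranged matrix $\G$ yields $\|\G-\E\R_k^{+}\E^\top\|_\text{F}\le\|\G-\G_k\|_\text{F}+n\max_i\G_{ii}$, which is exactly the leading $\|\G-\G_k\|_\text{F}$ contribution together with the additive $n\G^*_{ii}$ correction. For term (ii), the point is that $\widetilde{\G}-\widehat{\G}$ is governed by how well the random range finder $\Q$ captures the dominant action of $\R$: one shows $\|\widetilde{\G}-\widehat{\G}\|_\text{F}$ is bounded by a constant multiple of $\|\R-\Q\Q^\top\R\|_\text{F}$, where the constant absorbs $\|\E\R^{+}\|$ and therefore carries a $1/\sqrt{m}$-type factor coming from $\E$ being an $n\times m$ block of a matrix with bounded diagonal entries. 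Applying a Halko--Martinsson--Tropp-style expectation bound for randomized SVD with oversampling $s$, $\EE\|\R-\Q\Q^\top\R\|_\text{F}\le\bigl(1+\tfrac{k+s}{\sqrt{s-1}}\bigr)\|\R-\R_k\|_\text{F}$, and relating $\|\R-\R_k\|_\text{F}$ back to $\|\G-\G_k\|_\text{F}$ (again through the submatrix structure), one collects the two contributions into the coefficient $\tfrac{2(k+s)}{\sqrt{s-1}}$ in front of $\|\G-\G_k\|_\text{F}$ and the coefficient $1+\tfrac{4(k+s)}{\sqrt{m(s-1)}}$ in front of $n\G^*_{ii}$.

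The main obstacle is the constant bookkeeping in term (ii): one has to propagate the randomized-SVD error through the conjugation by $\E$ and through the map $\R^{+}\mapsto\R_k^{+}$ while keeping the dependence on $m$ explicit, so that the diagonal part of the final bound scales like $n\,\G^*_{ii}$ and not worse; one also has to combine the two error sources with the correct numerical constants rather than naively summing worst cases, and be careful that the expectation is taken over the Gaussian test matrix $\bOmega$, the contribution of the uniform column sampling being controlled in the worst case by the $\max_i\G_{ii}$ term. Since \cite{Li:2010} carries out this accounting in full, I would present the decomposition above and defer the detailed constant-tracking to that reference.
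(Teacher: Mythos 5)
The paper offers no proof of this theorem: it is imported verbatim from \cite{Li:2010} (note the bracketed attribution in the theorem header), so there is no internal argument to compare yours against. That said, your two-term triangle-inequality decomposition --- the error of the exact rank-$k$ Nystr\"om model $\E\R^{+}_k\E^\top$ against $\G$, plus the perturbation incurred by replacing the exact rank-$k$ SVD of $\R$ with the output of the randomized range finder --- is precisely the strategy of the cited reference, and the ingredients you invoke (a Nystr\"om bound producing the additive $n\G^*_{ii}$ term, and a Halko--Martinsson--Tropp-style expected Frobenius bound for $\|\R-\Q\Q^\top\R\|_\text{F}$ producing the $(k+s)/\sqrt{s-1}$ factors) are the right ones. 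Two cautions if you were to carry out the bookkeeping rather than defer it: first, the range-finder bound you quote is not in its sharp form (the standard HMT expected Frobenius bound is $\bigl(1+k/(s-1)\bigr)^{1/2}$ times the spectral tail; the looser $\frac{2(k+s)}{\sqrt{s-1}}$ constant in the theorem arises from the particular deviation-style version used in \cite{Li:2010}); second, the ``deterministic'' Nystr\"om inequality $\|\G-\E\R^{+}_k\E^\top\|_\text{F}\le\|\G-\G_k\|_\text{F}+n\max_i\G_{ii}$ is really an in-expectation statement over the uniform column sampling, so the final expectation must be taken over both the sampled columns and the Gaussian test matrix, not over the test matrix alone with the sampling handled ``in the worst case'' as you suggest. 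Since you explicitly present this as a reproduction of the cited argument and defer the constant tracking to \cite{Li:2010} --- exactly as the paper itself does --- the proposal is acceptable at the level the paper operates.
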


Note that the approximate eigenvectors $\H$ obtained by algorithm \ref{alg:algorithm2} may not be orthogonal. According to \cite{li2011time}, we orthogonalize $\H$ by the Algorithm \ref{alg:algorithm3}.

\begin{algorithm}[htb] 
	\caption{Orthogonalize $\H$}
	\label{alg:algorithm3} 
	\begin{algorithmic}[1]
		\renewcommand{\algorithmicrequire}{\textbf{Input:}}
		\renewcommand{\algorithmicensure}{\textbf{Output:}}
		\REQUIRE~$\H\in\RR^{n\times k},\boldsymbol{\Lambda}\in\RR^{k\times k}$\\
		\ENSURE~orthogonal $\widetilde{\H}, \widetilde{\boldsymbol{\Lambda}}$\\
		~\\
		\STATE $\T\leftarrow \H^\top\H$.
		\STATE eigen-decomposition: $\T = \V\boldsymbol{\Sigma}\V^\top$.
		\STATE $\K\leftarrow \boldsymbol{\Sigma}^\frac{1}{2}\V^\top\boldsymbol{\Lambda}\V\boldsymbol{\Sigma}^\frac{1}{2}$.
		\STATE eigen-decomposition: $\K = \widetilde{\V}\widetilde{\boldsymbol{\Lambda}}\widetilde{\V}^\top$.
		\STATE $\widetilde{\H}\leftarrow\H\V\boldsymbol{\Sigma}^{-\frac{1}{2}}\widetilde{\V}$.
	\end{algorithmic}
\end{algorithm}

The following proposition shows that, after performing Algorithm \ref{alg:algorithm3}, the $\H$ has an orthogonalization version $\widetilde{\H}$. 

\begin{Proposition}
	In Algorithm \ref{alg:algorithm3}, $\H\boldsymbol{\Lambda}\H^\top=\widetilde{\H}\widetilde{\boldsymbol{\Lambda}}\widetilde{\H}^\top$, and $\widetilde{\H}^\top\widetilde{\H}=\I$.
\end{Proposition}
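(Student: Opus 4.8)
The plan is to verify both identities by direct substitution of the assignments made in Algorithm~\ref{alg:algorithm3}, and then to simplify using the orthogonality of the two eigenvector matrices produced along the way. First I would record the elementary facts that will be used throughout: $\T=\H^\top\H$ is $k\times k$, symmetric and positive semidefinite; assuming, as holds generically for the embedding returned by Algorithm~\ref{alg:algorithm2}, that $\H$ has full column rank, the diagonal matrix $\boldsymbol{\Sigma}$ in $\T=\V\boldsymbol{\Sigma}\V^\top$ is \emph{positive}, so $\boldsymbol{\Sigma}^{1/2}$ and $\boldsymbol{\Sigma}^{-1/2}$ are well defined with $\boldsymbol{\Sigma}^{-1/2}\boldsymbol{\Sigma}^{1/2}=\I_k$. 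Since $\V$ is orthogonal we have $\V^\top\T\V=\boldsymbol{\Sigma}$, and since $\K$ is symmetric its eigenvector matrix $\widetilde{\V}$ is orthogonal as well, i.e. $\widetilde{\V}^\top\widetilde{\V}=\I_k$.

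Next I would prove $\widetilde{\H}^\top\widetilde{\H}=\I$. Substituting $\widetilde{\H}=\H\V\boldsymbol{\Sigma}^{-1/2}\widetilde{\V}$ gives $\widetilde{\H}^\top\widetilde{\H}=\widetilde{\V}^\top\boldsymbol{\Sigma}^{-1/2}(\V^\top\H^\top\H\V)\boldsymbol{\Sigma}^{-1/2}\widetilde{\V}$; replacing $\V^\top\H^\top\H\V=\V^\top\T\V$ by $\boldsymbol{\Sigma}$ and cancelling $\boldsymbol{\Sigma}^{-1/2}\boldsymbol{\Sigma}\,\boldsymbol{\Sigma}^{-1/2}=\I_k$ leaves $\widetilde{\V}^\top\widetilde{\V}=\I_k$, which is the claim.

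Finally, for $\H\boldsymbol{\Lambda}\H^\top=\widetilde{\H}\widetilde{\boldsymbol{\Lambda}}\widetilde{\H}^\top$ I would plug the same expression for $\widetilde{\H}$ into the right-hand side and use the eigendecomposition of $\K$ in the form $\widetilde{\V}\widetilde{\boldsymbol{\Lambda}}\widetilde{\V}^\top=\K=\boldsymbol{\Sigma}^{1/2}\V^\top\boldsymbol{\Lambda}\V\boldsymbol{\Sigma}^{1/2}$; this turns $\widetilde{\H}\widetilde{\boldsymbol{\Lambda}}\widetilde{\H}^\top$ into $\H\V\boldsymbol{\Sigma}^{-1/2}\boldsymbol{\Sigma}^{1/2}\V^\top\boldsymbol{\Lambda}\V\boldsymbol{\Sigma}^{1/2}\boldsymbol{\Sigma}^{-1/2}\V^\top\H^\top$, after which the matched pairs $\boldsymbol{\Sigma}^{-1/2}\boldsymbol{\Sigma}^{1/2}=\I_k$ and $\V\V^\top=\I_k$ collapse the expression to $\H\boldsymbol{\Lambda}\H^\top$. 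The only point needing care, and hence the single potential obstacle, is the invertibility of $\boldsymbol{\Sigma}$, i.e. that $\H$ has full column rank $k$: should some column of $\H$ be (numerically) degenerate, one would restrict to the range of $\H$ or replace $\boldsymbol{\Sigma}^{-1/2}$ by a Moore--Penrose pseudo-inverse; with that caveat the whole argument is a short chain of cancellations and presents no serious difficulty.
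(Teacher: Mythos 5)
Your proof is correct and follows the same direct-substitution-and-cancellation argument the paper uses (the standard verification from the cited column-sampling literature): orthogonality of $\widetilde{\H}$ reduces to $\widetilde{\V}^\top\widetilde{\V}=\I$ after $\V^\top\T\V=\boldsymbol{\Sigma}$ cancels the $\boldsymbol{\Sigma}^{\pm 1/2}$ factors, and the identity $\H\boldsymbol{\Lambda}\H^\top=\widetilde{\H}\widetilde{\boldsymbol{\Lambda}}\widetilde{\H}^\top$ follows by substituting $\widetilde{\V}\widetilde{\boldsymbol{\Lambda}}\widetilde{\V}^\top=\K=\boldsymbol{\Sigma}^{1/2}\V^\top\boldsymbol{\Lambda}\V\boldsymbol{\Sigma}^{1/2}$. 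Your explicit flagging of the full-column-rank assumption on $\H$ (invertibility of $\boldsymbol{\Sigma}$) is a point the paper leaves implicit, and the pseudo-inverse fallback you mention is the right fix in the degenerate case.
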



\section{Experiments} \label{exper}
\subsection{Datasets and Experimental Settings}
We evaluate the clustering performance of the proposed algorithm on 9 popular datasets from various applications, including natural language processing, protein subcellular localization, and image recognition. The detailed information of these datasets is listed in Table \ref{Table:datasets1}. From this table, we observe that the number of samples, views, and categories of these datasets range from 165 to 60,000, 2 to 69, and 3 to 102, respectively. For these datasets, all affinity matrices are pre-computed with carefully designed similarity function and are publicly available from websites\footnote{http://mlg.ucd.ie/datasets/bbc.html}\footnote{http://mkl.ucsd.edu/dataset/protein-fold-prediction}\footnote{http://www.robots.ox.ac.uk/~vgg/data/}.

\begin{table}[htbp]
	\centering
	\caption{Information of benchmark datasets. In this table, the number of samples, views, and categories of these datasets are illustrated.}\label{Table:datasets1}
	\vspace{-10pt}
	\scalebox{0.95}{
		\begin{tabular}{c||c|c|c}
			\hline
			Datasets &\#  Samples &\#  Views &\# Clusters  \\
			\hline\hline
			{YALE}             &165    &12   &  15  \\
			{BBCSport}         & 554   & 2   &  5   \\
			{ProteinFold}      & 694   & 12  &  27  \\
			{Flower17}         & 1360  & 7   &  17  \\
			{UCI-Digit}        & 2000  & 3   &  10  \\
			{Mfeat}            & 2000  & 12  &  10  \\
			{Nonpl}            & 2732  & 69  &  3   \\
			{Flower102}        & 8189  & 4   &  102 \\
			{MNIST}            &60000  & 3   &  10  \\
			\hline
	\end{tabular}}
\end{table}

In our experiments, the MATLAB implementation of all the compared algorithms is downloaded from the authors' websites. The hyper-parameters are set according to the suggestions of the corresponding literature. Especially, to all the compared spectral clustering algorithms, the optimal neighbor numbers are carefully searched in the range of $[0.1s, 0.2s, \dots, s]$, where $s=n/k$ is the average sample number in each category. As to our proposed method, the parameter $\lambda_1$ and $\lambda_2$ are chosen in the range of $[2^{-15},2^{-12}, \dots, 2^{15}]$. $K$-means clustering is adopted on the final representation to assign an appropriate label for each sample. In the experiment, to reduce the effect of randomness caused by $k$-means, we repeat the clustering process for 50 times with random initialization and report the result with the smallest $k$-means distortion. The clustering performance is evaluated in terms of three widely used criteria, including clustering accuracy (ACC), normalized mutual information (NMI), and purity. All our experiments are conducted on a desktop computer with 3.6GHz Intel Core i7 CPU, 64GB RAM, and MATLAB 2018a (64bit).

\subsection{Ablation Study}
In our first experiment, we study the effectiveness of each proposed component, i.e., the neighborhood learning mechanism (NLM) and the high-level connection information (HCI) by careful ablation study. Also, the optimal order number of the high-order Laplacian matrix is exploited. Specifically, six algorithms are designed and tested. The average clustering performance on all eight datasets are listed in Table \ref{table_A}.

\noindent\textbf{Effectiveness of the designed algorithm.}
Among the compared algorithms, the baseline method (BL) indicates a classic linear Laplacian matrix combination with matrix-induced regularization \cite{liu2016multiple}. For high-level connection information extraction, the order number of the Laplacian matrix is fixed as $2$. As we can see from Table \ref{table_A}, both the neighborhood learning mechanism and the high-level connection information is capable of improving the spectral clustering performance of the corresponding algorithm. Specifically, HCI and NLM improve the ACC of the baseline algorithm for $2.04\%$ and $3.63\%$ on average, respectively. Moreover, by combining these two designs, the resultant algorithm can improve $4.64\%$ over the baseline algorithm in terms of ACC.

\noindent\textbf{The optimal order-number.} We also test the effect of different order-number of the high-order Laplacian matrices. In this part, the second-, third-, forth- and fifth-order algorithms are compared. As one can see in Table~\ref{table_High}, the second- and the third-order algorithms provide comparably good performance. However, as the orders of the Laplacian matrices keep get higher, the range of neighborhood also gets larger and the discriminative capacity of the corresponding algorithms start to decrease a little bit. As a consequence, for the sake of the clustering performance and the computational efficiency, the order number of our proposed algorithm is fixed as two in all our following experiments.

\begin{table}[t]
\small
\setlength{\abovecaptionskip}{0pt}
\setlength{\belowcaptionskip}{10pt}
\renewcommand{\arraystretch}{1.0}
\centering
\caption{Ablation study. Average clustering performance on eight datasets of four algorithms. In the compared algorithms, BL indicates the baseline method, NLM indicates neighborhood learning mechanism, HCI indicates high-level connection information.}
\label{table_A}
\begin{tabular}{|c|c|c|c|c|}
\hline
{Methods}     & BL           &  BL+HCI         & BL+NLM            & BL+NLM+HCI    \\ \hline
{ACC ($\%$)}     &   64.00  &    66.04        &    67.63    &    \textbf{68.64} \\ \hline
{NMI ($\%$)}     &  63.58   &    64.53        &    66.42    &    \textbf{67.65}  \\ \hline
{Purity ($\%$)}  & 68.32    &    70.01        &    71.78    &    \textbf{72.29}  \\ \hline
\end{tabular}
\vspace{-10pt}
\end{table}

\begin{table}[t]
\vspace{-1pt}
\small
\setlength{\abovecaptionskip}{0pt}
\setlength{\belowcaptionskip}{10pt}
\renewcommand{\arraystretch}{1.0}
\centering
\caption{Average clustering performance comparison with different Laplacian matrix order number.}
\label{table_High}
\begin{tabular}{|c|c|c|c|c|}
\hline
{Methods}     & $2$nd-order           &  $3$rd-order         &  $4$th-order            & $5$th-order    \\ \hline
{ACC ($\%$)}     &   68.64           &   \textbf{68.94}&   67.58     &  65.76   \\ \hline
{NMI ($\%$)}     &  \textbf{67.65}   &    67.04        &   66.23     &  64.94    \\ \hline
{Purity ($\%$)}  &  \textbf{72.29}   &    72.21        &   71.87     &  69.88    \\ \hline
\end{tabular}
\vspace{-10pt}
\end{table}

\subsection{Comparison with state-of-the-art algorithms}
To verify the effectiveness of the proposed algorithm, we further compare it with six state-of-the-art multi-view spectral clustering algorithms and three multiple kernel clustering algorithms. Among these methods, (1) average multi-view spectral clustering (\textbf{A-MVSC}) uniformly weights Laplacian matrices from each view to generate a new Laplacian matrix for clustering (2) Single best spectral clustering (\textbf{SB-SC}) performs spectral clustering on every single view separately and reports the best performance. (3) Co-regularized Spectral Clustering  (\textbf{Co-reg}) \cite{kumar2011co} is a representative of the co-training methods. (4) Auto-weighted Multiple Graph Learning (\textbf{AMGL}) \cite{nie2016parameter} is a linear combination-based method. (5) Multi-view Learning with Adaptive Neighbors (\textbf{MLAN}) \cite{nie2017multi}, and (6) Robust Multi-view Spectral Clustering \textbf{RMSC} \cite{xia2014robust} are consensus Laplacian construction methods. Also, since the affinity matrices in each view can be treated as kernels, three multiple kernel clustering algorithms, i.e., (7) \textbf{ONKC} \cite{liu2017optimal}, (8) \textbf{MKKM-MR} \cite{liu2016multiple}, and (9) \textbf{RMKKM} \cite{du2015robust}, are also included for a more comprehensive comparison. The early fusion and late fusion version of our proposed algorithm refers to ONMSC-EF and ONMSC-LF in Table \ref{TableTotalResult}, respectively.

The ACC, NMI, and purity, of the algorithms mentioned above are reported in Table \ref{TableTotalResult}. As seen, in all of the eight datasets, both the proposed ONMSC-LF and ONMSC-EF show superior performance gains over the state-of-the-art algorithms w.r.t. all the three metrics. Also, the proposed algorithm significantly outperforms existing linear combination based algorithms, including RMKKM, MKKM-RM, and AMGL with comparable computational consumption. This validates the effectiveness of optimal neighborhood spectral clustering and the high-order information again.

\begin{table*}[htbp]
\vspace{-5pt}
	\centering
	\caption{ACC, NMI, purity comparison of different clustering algorithms on eight benchmark datasets. In this table, the boldface indicates the best performance among all the compared algorithms.}\label{TableTotalResult}
	\vspace{-10pt}
	\begin{tabular}{|c|c|c|c|c|c|c|c|c|c|c|c|}
		\hline
		\multirow{2}{*}{Datasets}  & \multirow{2}{*}{A-MVSC}   & \multirow{2}{*}{SB-SC}   & {RMKKM} &   {MKKM-MR}       & {ONKC}         & {Co-reg}                       & {AMGL}                         & {RMSC}                          & MLAN                         & \multirow{2}{*}{ONMSC-EF} & \multirow{2}{*}{ONMSC-LF} \\
		&  & & \scriptsize{\cite{du2015robust}}& \scriptsize{\cite{liu2016multiple}}   & \scriptsize{\cite{liu2017optimal}}&  \scriptsize{\cite{kumar2011co}} & \scriptsize{\cite{nie2016parameter}} & \scriptsize{\cite{xia2014robust}} & \scriptsize{\cite{nie2017multi}}  & & \\
		\hline
		\hline
		\multicolumn{12}{|c|}{ACC(\%)}\tabularnewline
		\hline
		\hline
		BBCSports    	&66.18 &76.65 &63.79 &66.18 &68.20 &85.66 &86.39 &86.03 &70.58 &95.77  
		&\bf 97.61      	 \\
		\hline
		ProteinFold     &30.69 &34.58 &30.98 &36.46 &37.90 &34.87 &36.88 &33.00 &28.38 &\bf 41.21
		&40.48   	 \\
		\hline
		Flower17     	&51.02 &42.05 &48.38 &60.00 &60.88 &52.72 &56.32 &53.90 &53.38 &66.39
		&\bf 67.5  		 \\
		\hline
		UCI-Digit 		&88.75 &75.40 &40.45 &90.40 &91.05 &84.80 &92.85 &90.40 &97.15 &97.6
		&\bf 97.85		 \\
		\hline
		MFeat 			&95.20 &86.00 &65.30 &83.20 &97.05 &84.30 &84.35 &84.15 &96.55 &\bf 98.1
		& 97.00	 \\
		\hline
		Nonpl 			&49.37 &57.50 &62.77 &56.59 &59.57 &55.27 &56.91 &60.65 &44.98 &65.84 
		&\bf 68.41		 \\
		\hline
		Flower102 		&27.29 &33.12 &28.17 &39.91 &41.56 &37.26 &33.34 &32.97 &24.19 &43.31
		&\bf 44.47		 \\
		\hline
		YALE            &46.06 &44.24 &58.79 &60.00 &61.21 &51.52 &60.0  &56.36 &57.58 &64.85
		&\bf 66.06      \\
		\hline
		\hline
		\multicolumn{12}{|c|}{NMI(\%)}\tabularnewline
		\hline
		\hline
		BBCSport	&53.92	&59.38	&39.62	&53.93	&54.64	&71.27	&73.7	&73.89	&65.34	&87.19
		&\bf 92.00 \\
		\hline
		ProteinFold	&40.95	&42.33	&38.78	&45.32	&46.93	&43.34	&44.18	&43.91	&27.86	&49.33
		&\bf 49.5 \\
		\hline
		Flower17	&50.18	&45.14	&50.73	&57.11	&58.58	&52.13	&56.97	&53.89	&55.38	&65.54
		&\bf 66.35 \\
		\hline
		UCI-Digit	&80.59	&68.38	&46.87	&83.22	&83.96	&73.51	&86.65	&81.8	&93.4	&94.39
		&\bf 94.86 \\
		\hline
		MFeat	&89.83	&75.78	&62.67	&78.12	&93.07	&80.99	&81.57	&81.69	&92.89   &\bf 95.51
		&93.43 \\
		\hline
		Nonpl	&16.55	&15.26	&17.34	&15.51	&24.04	&12.55	&15.19	&20.35	&6.14 &\bf 25.35
		&24.44 \\
		\hline
		Flower102	&46.32	&48.99	&48.17	&57.27	&59.13	&54.18	&51.63	&53.36	&34.94	&60.12
		&\bf 60.74 \\
		\hline
		YALE        &49.04  &50.42  &59.70  &61.29  &62.27  &57.01  &61.2   &59.11  &57.07  &64.40
		&\bf 64.46   \\
		\hline
		\hline
		\multicolumn{12}{|c|}{Purity(\%)}\tabularnewline
		\hline
		\hline
		BBCSport	&77.2	&79.59	&67.83	&77.21	&77.76	&85.66	&86.39	&86.03	&74.44	&95.77
		&\bf 97.61	\\
		\hline
		ProteinFold	&37.17	&41.21	&36.6	&42.65	&45.24	&40.78	&42.07	&42.36	&31.84	&47.98
		&\bf 49.56	\\
		\hline
		Flower17	&51.98	&44.63	&51.54	&61.03	&61.69	&56.47	&58.16	&53.24	&55.07	&68.52
		&\bf 69.7	\\
		\hline
		UCI-Digit	&88.75	&76.1	&44.2	&90.4	&91.05	&77.75	&92.85	&82.9	&97.15	&97.6
		&\bf 97.85	\\
		\hline
		MFeat	&95.2	&86	&66.25	&83.2	&97.05	&84.3	&84.35	&84.1	&96.55	&\bf 98.1
		&97	\\
		\hline
		Nonpl	&72.18	&71.12	&71.71	&63.91	&75.34	&66.07	&69.94	&70.5	&60.35	&\bf 76.13
		&75.66	\\
		\hline
		Flower102	&32.27	&38.78	&27.61	&33.86	&47.64	&44.08	&39.71	&40.24	&31.15	&50.78
		&\bf 51.75	\\
		\hline
		YALE    &48.48  &47.88  &59.39 &60.21  &61.82  &54.55  &60.61  &57.58  &58.18  &65.45
		&\bf 66.67 \\
		\hline
	\end{tabular}
	\vspace{-10pt}
\end{table*}

\subsection{Running Time and Memory Consumption}
To compare the computational efficiency of the proposed algorithms, we record the running time and maximum memory consumption of various algorithms on the benchmark datasets and report them in Fig. \ref{Time and Memory}. The result in figure \ref{Time and Memory} is obtained by dividing by the result of A-MVSC. As we can see in Fig. \ref{Time and Memory}-(a), without the acceleration of the late-fusion mechanism and the Nystr\"om algorithm, the computational time of the early fusion version of our algorithm is comparable with most of the compared state-of-the-art algorithms. Also, with the enhancement of the mentioned techniques, the late fusion version of our proposed algorithm achieves $\Ocal{n}$ time complexity.

Moreover, from Fig. \ref{Time and Memory}-(b), we can see that the proposed ONMSC-LF has the smallest memory consumption compared to other algorithms. The results in Fig. \ref{Time and Memory} well demonstrate the efficiency of the proposed algorithm both in computation and storage.

\begin{figure}[htbp]
	\setlength{\abovecaptionskip}{0pt}
	\setlength{\belowcaptionskip}{0pt}
	\centering 
	\subfigure[Illustration of the relative running time of the compared algorithms. ]{\includegraphics[width=1\columnwidth]{./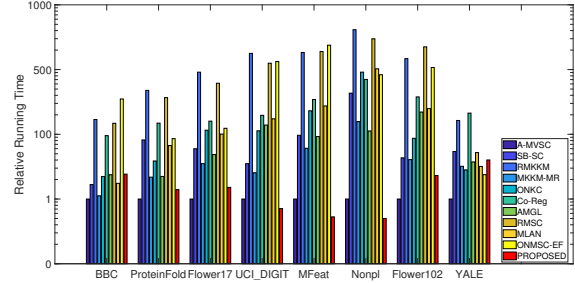}\label{time}}
	\subfigure[Illustration of the relative maximum memory consumption of the compared algorithms.]{\includegraphics[width=1\columnwidth]{./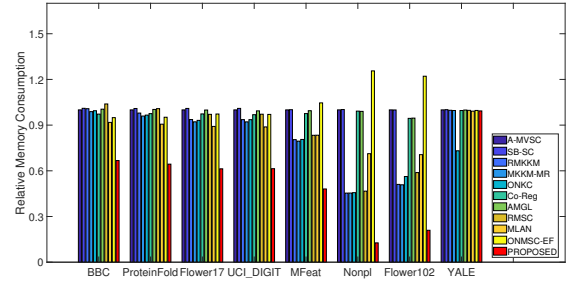}\label{memory}}
	\vspace{-2pt}
	\caption{The running time and maximum memory consumption comparison of different algorithms on eight benchmark datasets. Both the running time and the memory cost of the compared algorithms are divided by the corresponding results of A-MVSC.}
	\label{Time and Memory}
\end{figure}

\subsection{Visualization of Algorithm Performance}
To illustrate the performance of the compared algorithms more intuitively, we further adopt t-distributed stochastic neighbor embedding(t-SNE)\cite{van:2008} to visualized the acquired cluster structure of six representative datasets. Specifically, only the results of our proposed algorithm and the second best results are illustrated. As we can see from Fig. \ref{Visualization}, the proposed algorithm better reveals the underlying local geometric structure of in all the six datasets, validating its superior performance.

\begin{figure*}[htbp]
	\setlength{\abovecaptionskip}{0pt}
	\setlength{\belowcaptionskip}{0pt}
	\centering 
	\subfigure[BBCsport~(proposed)]{\includegraphics[width=0.5\columnwidth]{./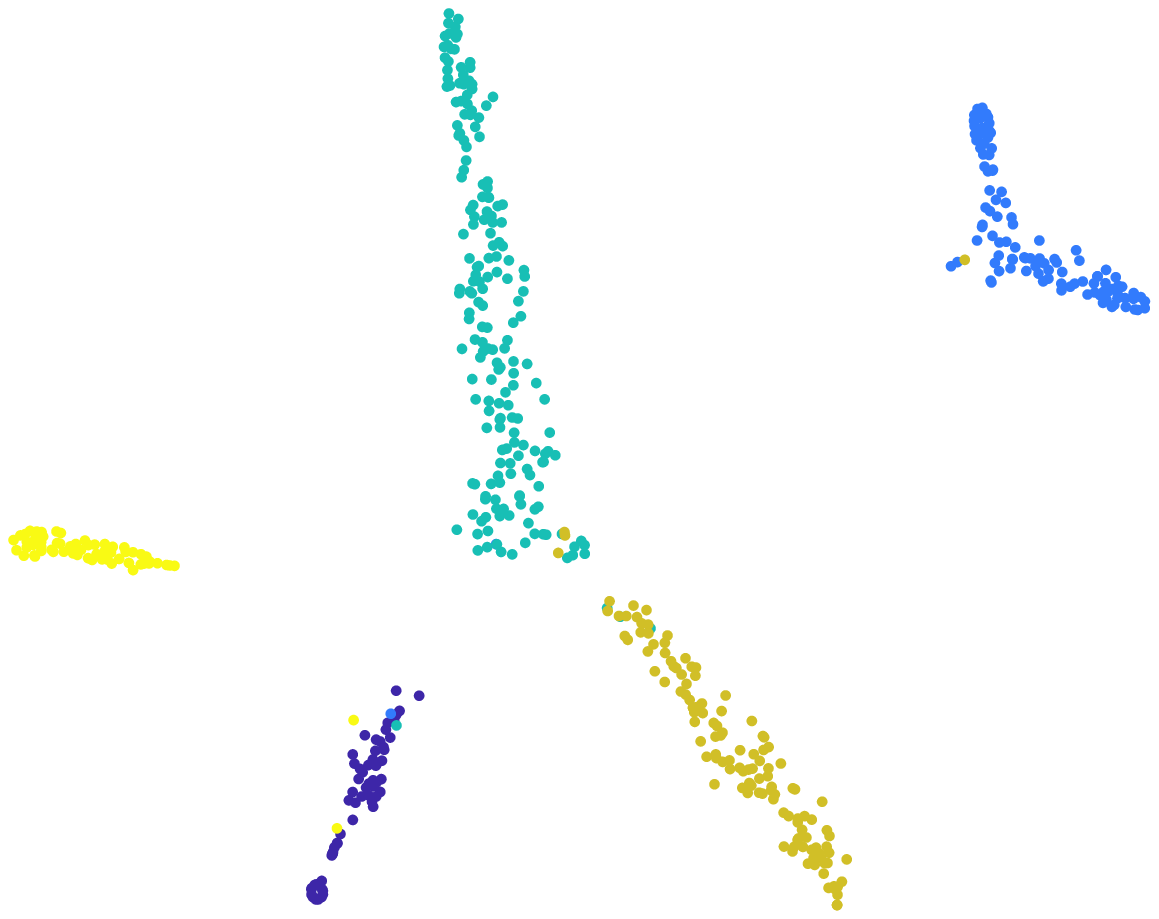}\label{bbcsport}}
	\subfigure[Flower17~(proposed)]{\includegraphics[width=0.5\columnwidth]{./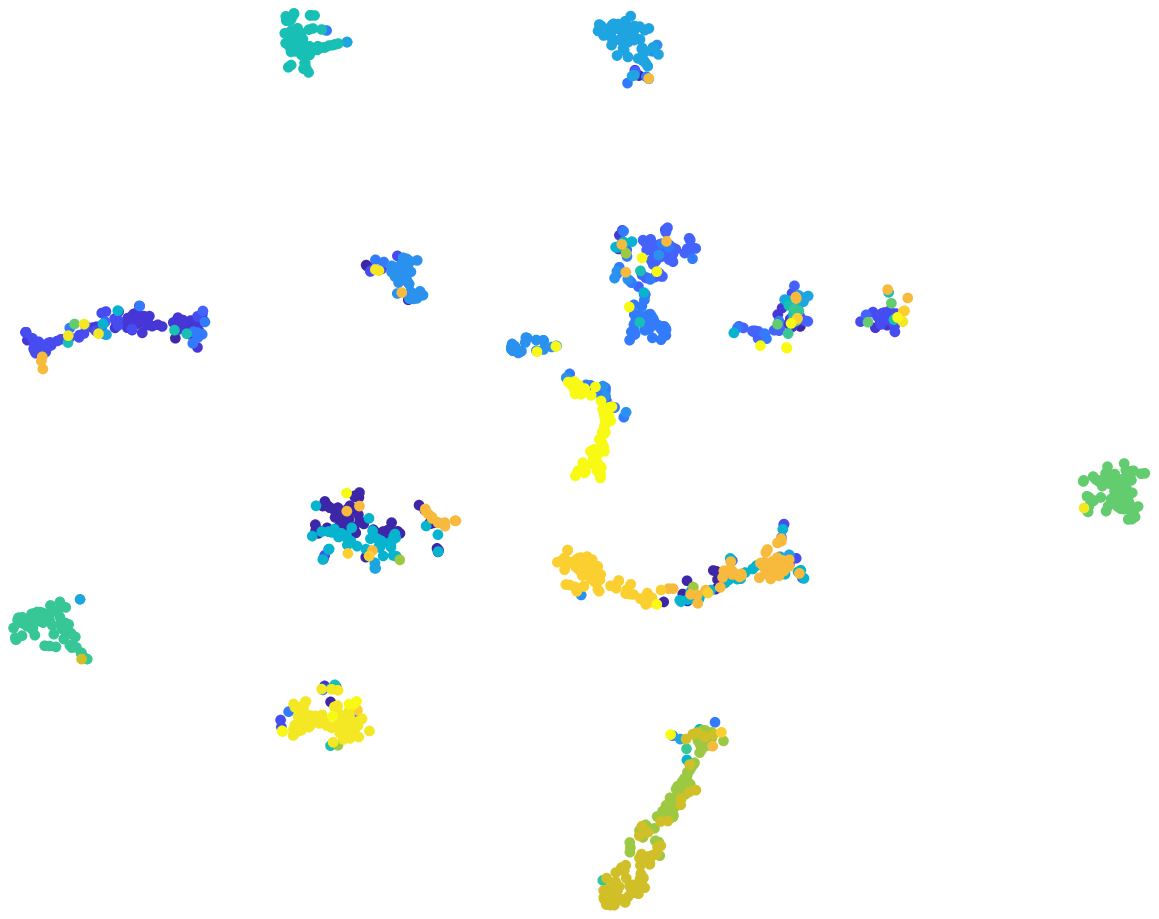}\label{flower17}}
	\subfigure[UCI-Digit~(proposed)]{\includegraphics[width=0.5\columnwidth]{./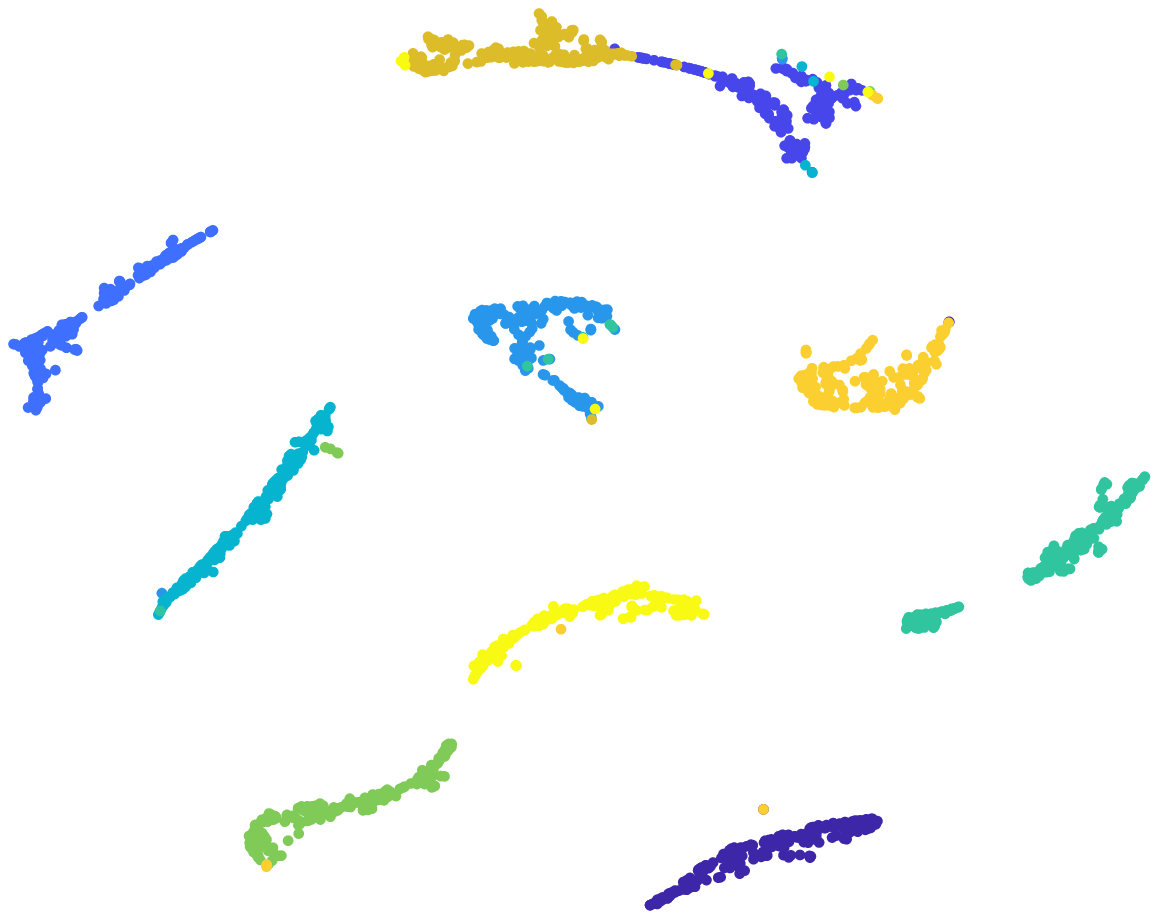}\label{UCI_DIGIT}}
	\subfigure[Mfeat~(proposed)]{\includegraphics[width=0.5\columnwidth]{./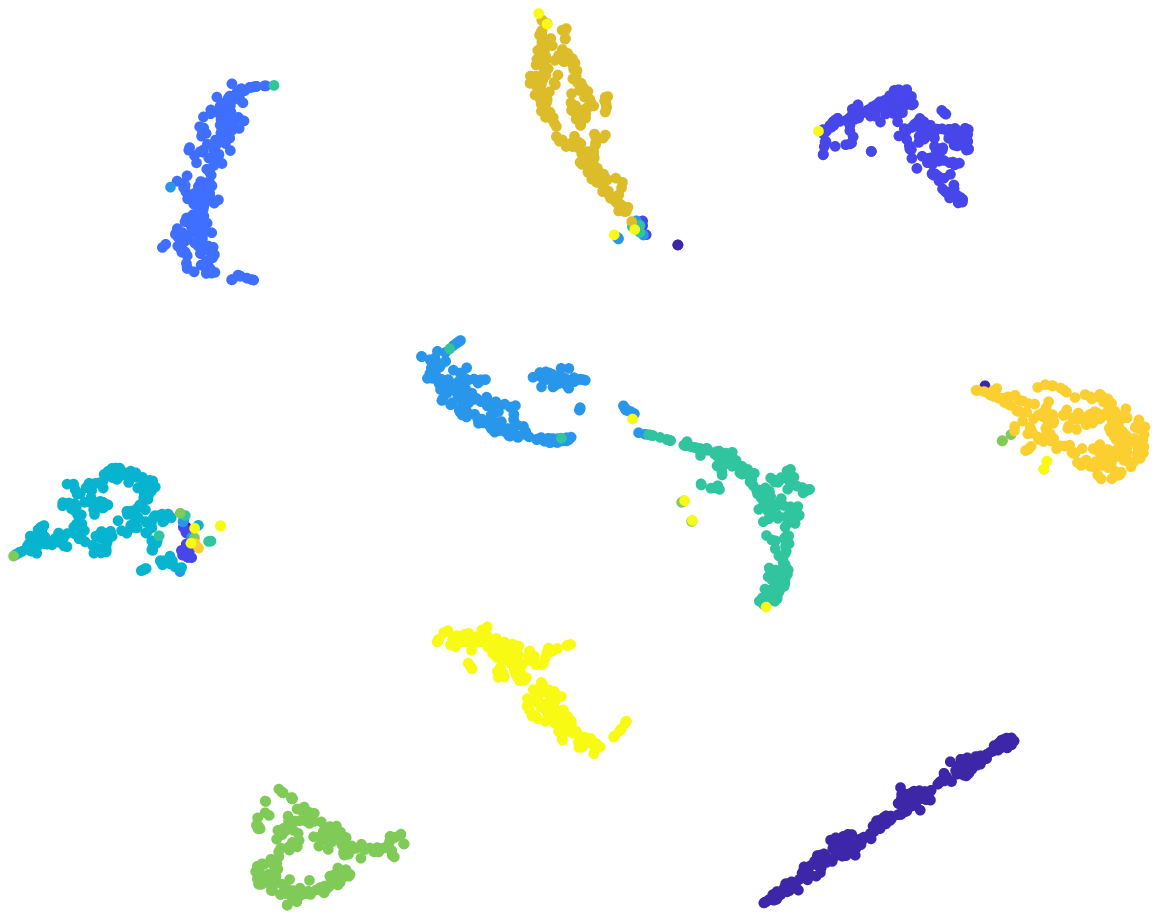}\label{mfeat}}
	\subfigure[BBCsport~(RMSC)]{\includegraphics[width=0.5\columnwidth]{./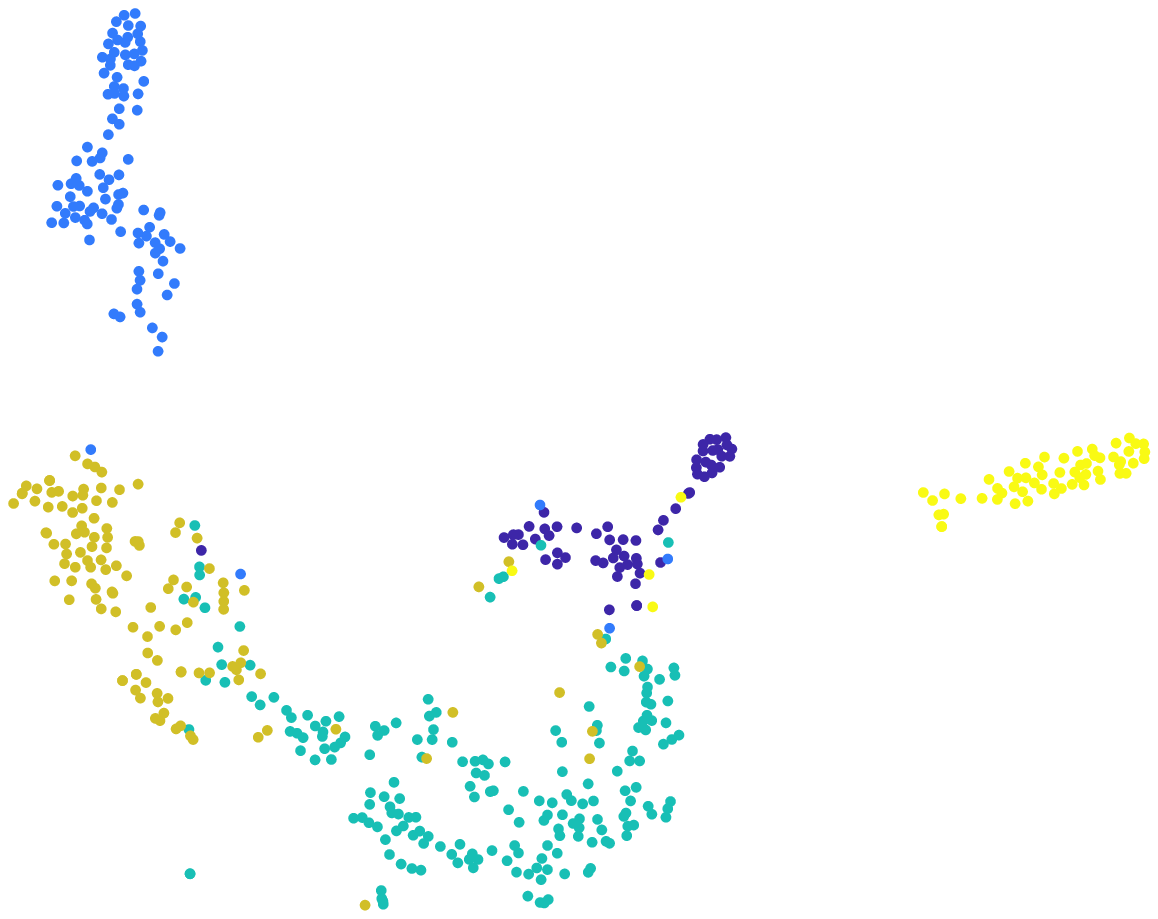}\label{bbcsport}}
	\subfigure[Flower17~(ONKC)]{\includegraphics[width=0.5\columnwidth]{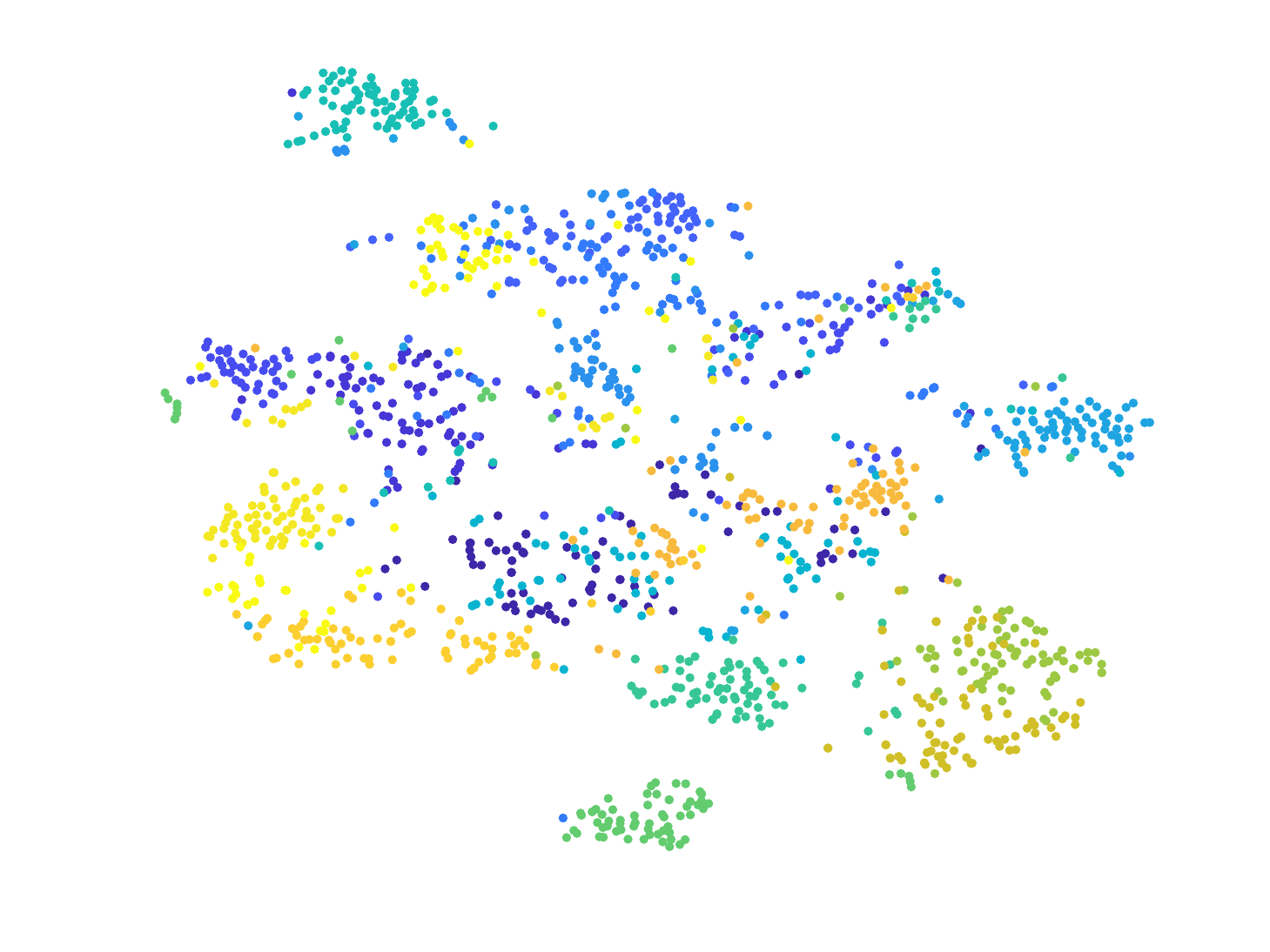}\label{flower17}}
	\subfigure[UCI-Digit~(MLAN)]{\includegraphics[width=0.5\columnwidth]{./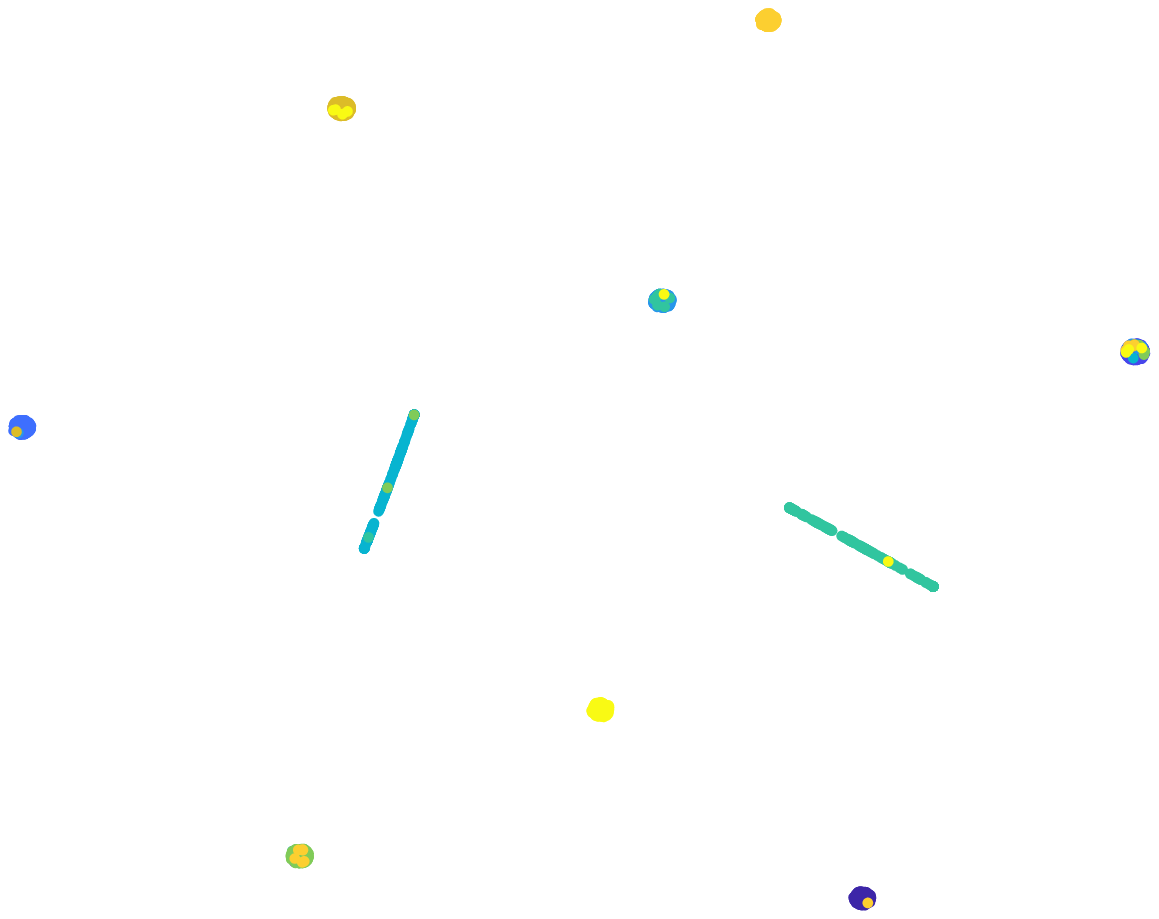}\label{UCI_DIGIT}}
	\subfigure[Mfeat~(ONKC)]{\includegraphics[width=0.5\columnwidth]{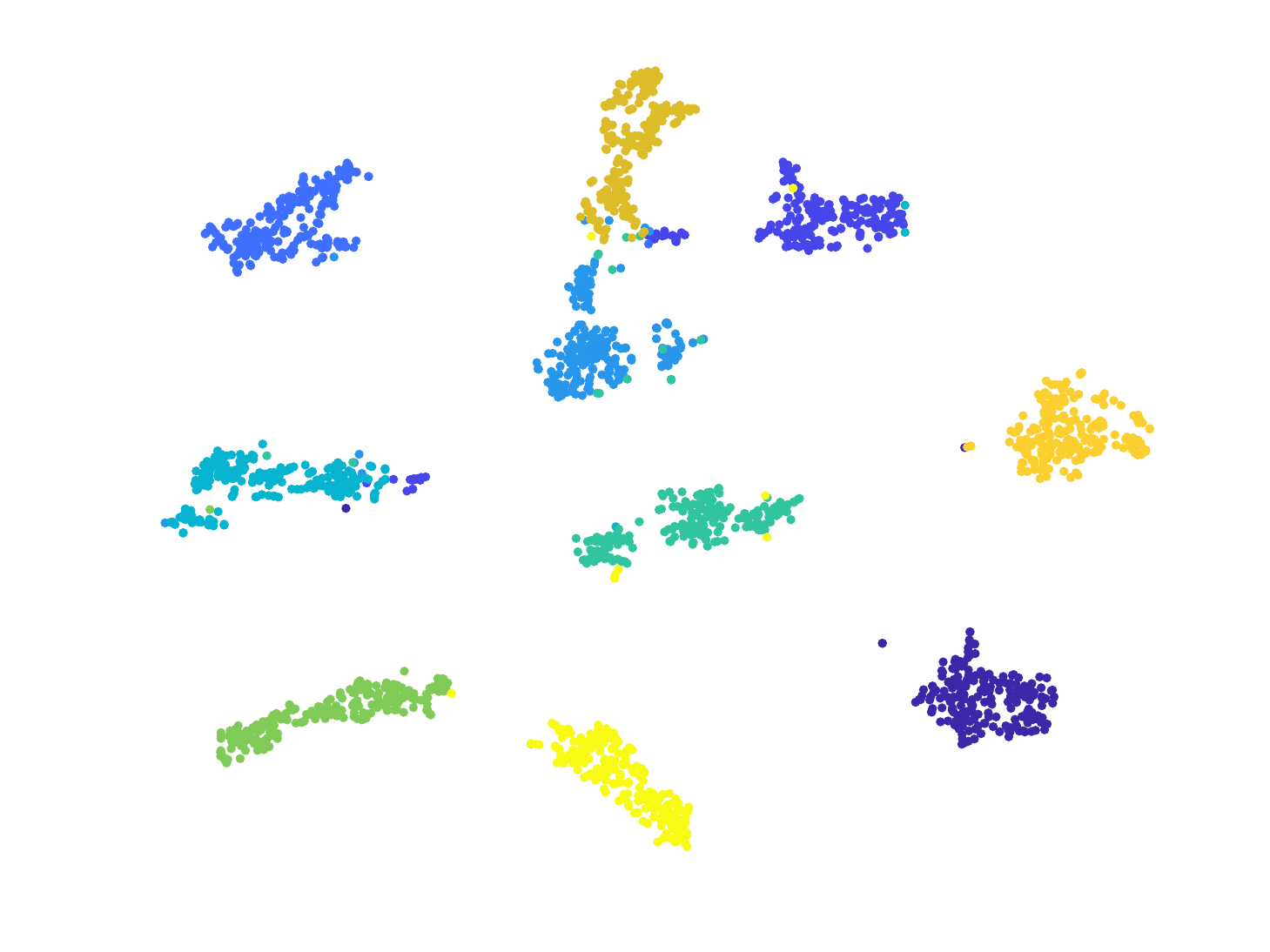}\label{mfeat}}
	\vspace{-1pt}
	\caption{Clustering structure visualization through the t-SNE algorithm \cite{van:2008}. In this figure, the results on four datasets, i.e., BBCSport, Flower17, UCIDight, Mfeat are illustrated. Among the sub-figures, the first row represents the results of our proposed algorithm, while the second row represents the best cluster structure other than our proposed algorithm.}
	\label{Visualization}
\end{figure*}

\subsection{Parameter Sensitivity and Convergence}
\textbf{Parameter Sensitivity}. The proposed ONMSC-LF introduces three hyper-parameters, i.e., the average view balancing coefficient $\lambda_1$, the diversity balancing coefficient $\lambda_2$, and the neighbor number $N$ for affinity matrix construction. To test the sensitivity of the proposed algorithm against these three parameters, we fix one parameter and tune the other in a large range. Parameter sensitivity w.r.t. $\lambda_1$ and $\lambda_2$ are displayed in Fig. \ref{parameter_sensitivity}. The performance variation against $N$ is illustrated in Fig. \ref{neighbor_sensitivity}. In this figure, the performance variation w.r.t. different $N$ is compared with the second-best state-of-the-art algorithms on eight datasets. From these figures, we observe that i) all the parameters are effective in improving the algorithm performance; ii) the proposed algorithm is stable against the these parameters that it achieves good performance in a wide range of parameter settings. iii) the proposed algorithm is relatively sensitive to the neighbor numbers.

\begin{figure*}[htbp]
\vspace{-15pt}
	\setlength{\abovecaptionskip}{0pt}
	\setlength{\belowcaptionskip}{0pt}
	\centering 
	\subfigure[BBC~($\lambda_1$,$\lambda_2$)]{\includegraphics[width=0.666\columnwidth]{./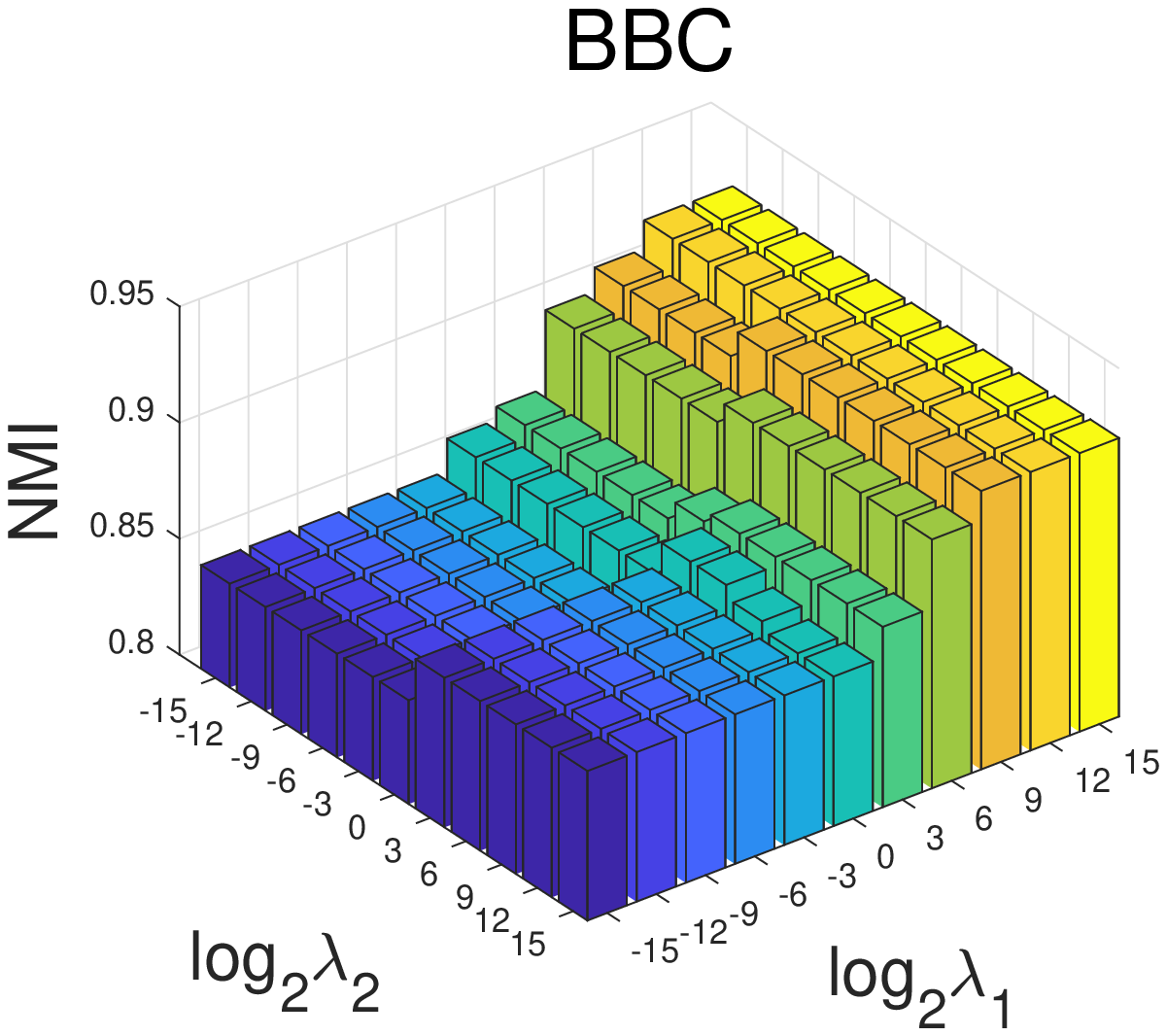}\label{BBC_all}}
	\subfigure[ProteinFold~($\lambda_1$,$\lambda_2$)]{\includegraphics[width=0.666\columnwidth]{./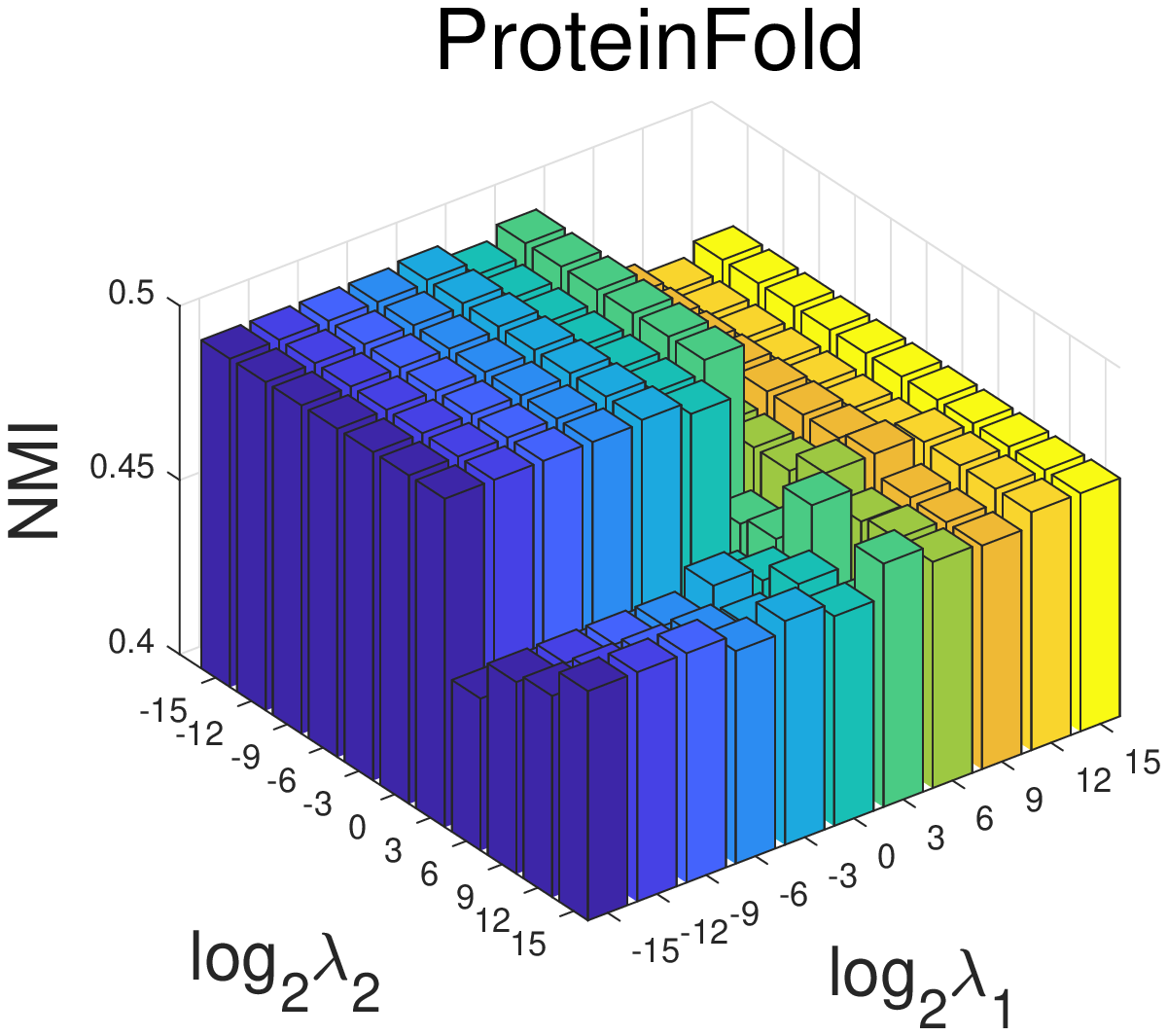}\label{ProteinFold_all}}
	\subfigure[Flower17~($\lambda_1$,$\lambda_2$)]{\includegraphics[width=0.666\columnwidth]{./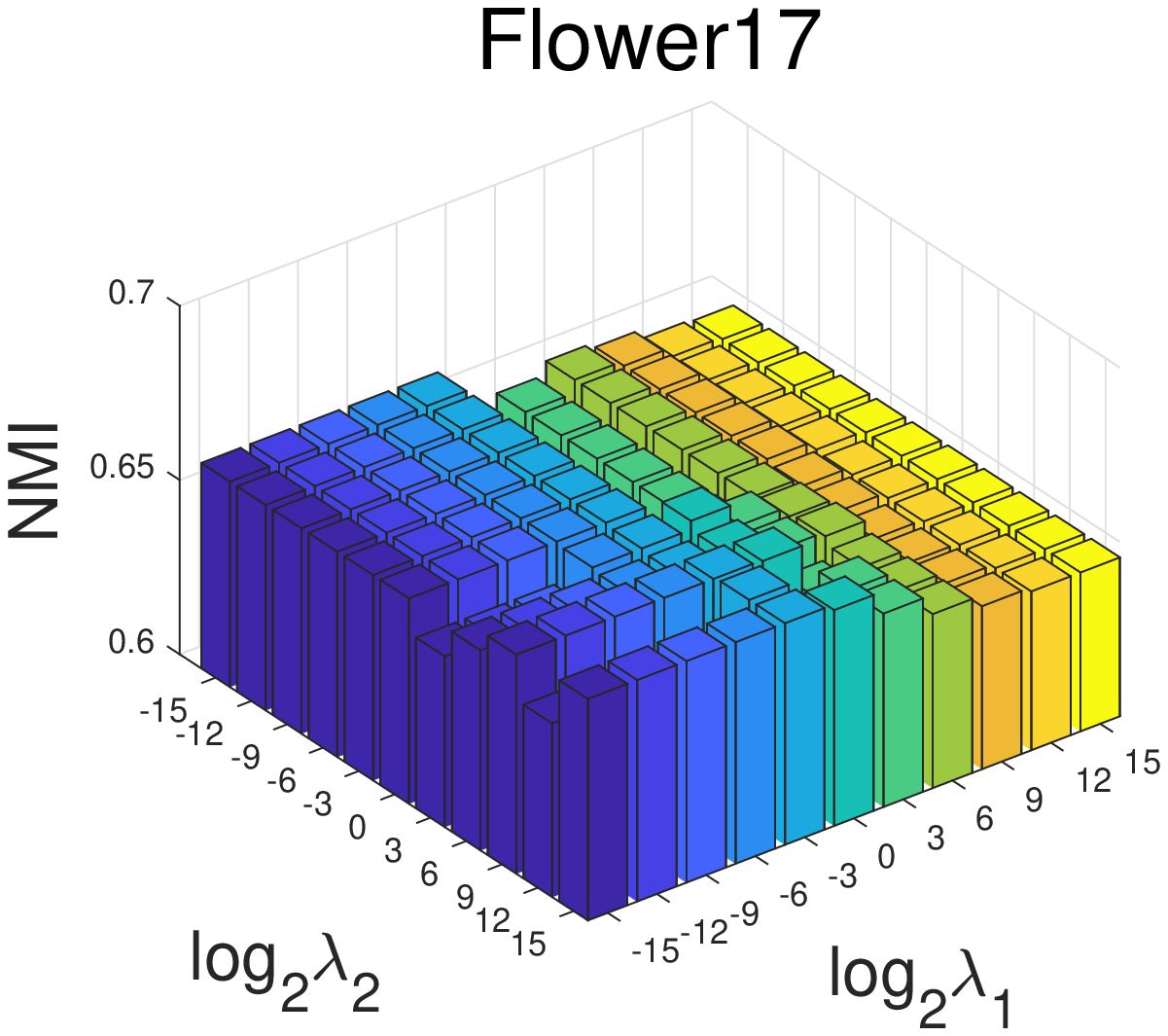}\label{flower17_all}}
	\subfigure[UCI-Digit~($\lambda_1$,$\lambda_2$)]{\includegraphics[width=0.666\columnwidth]{./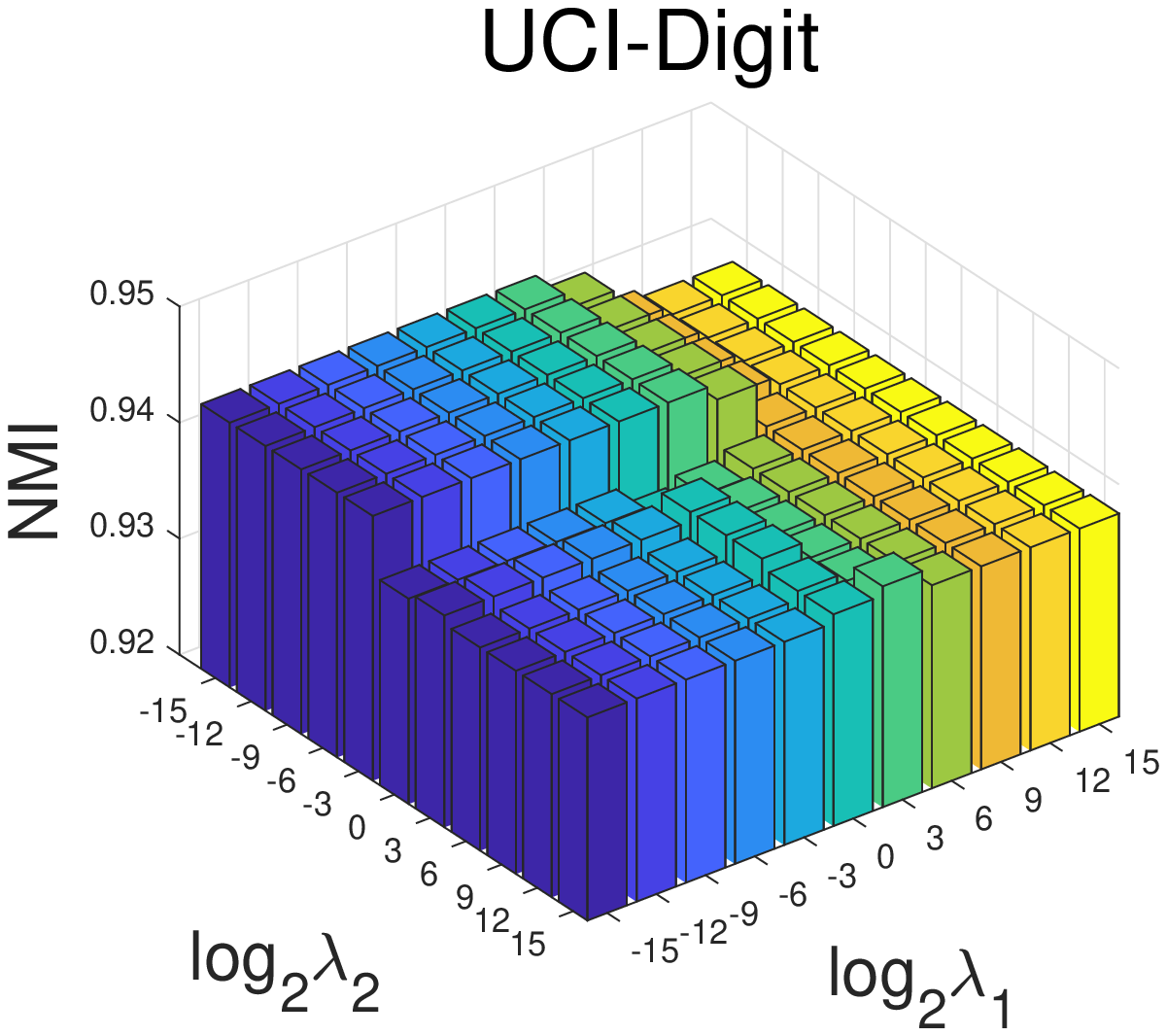}\label{UCI_DIGIT_all}}
	\subfigure[Flower102~($\lambda_1$,$\lambda_2$)]{\includegraphics[width=0.666\columnwidth]{./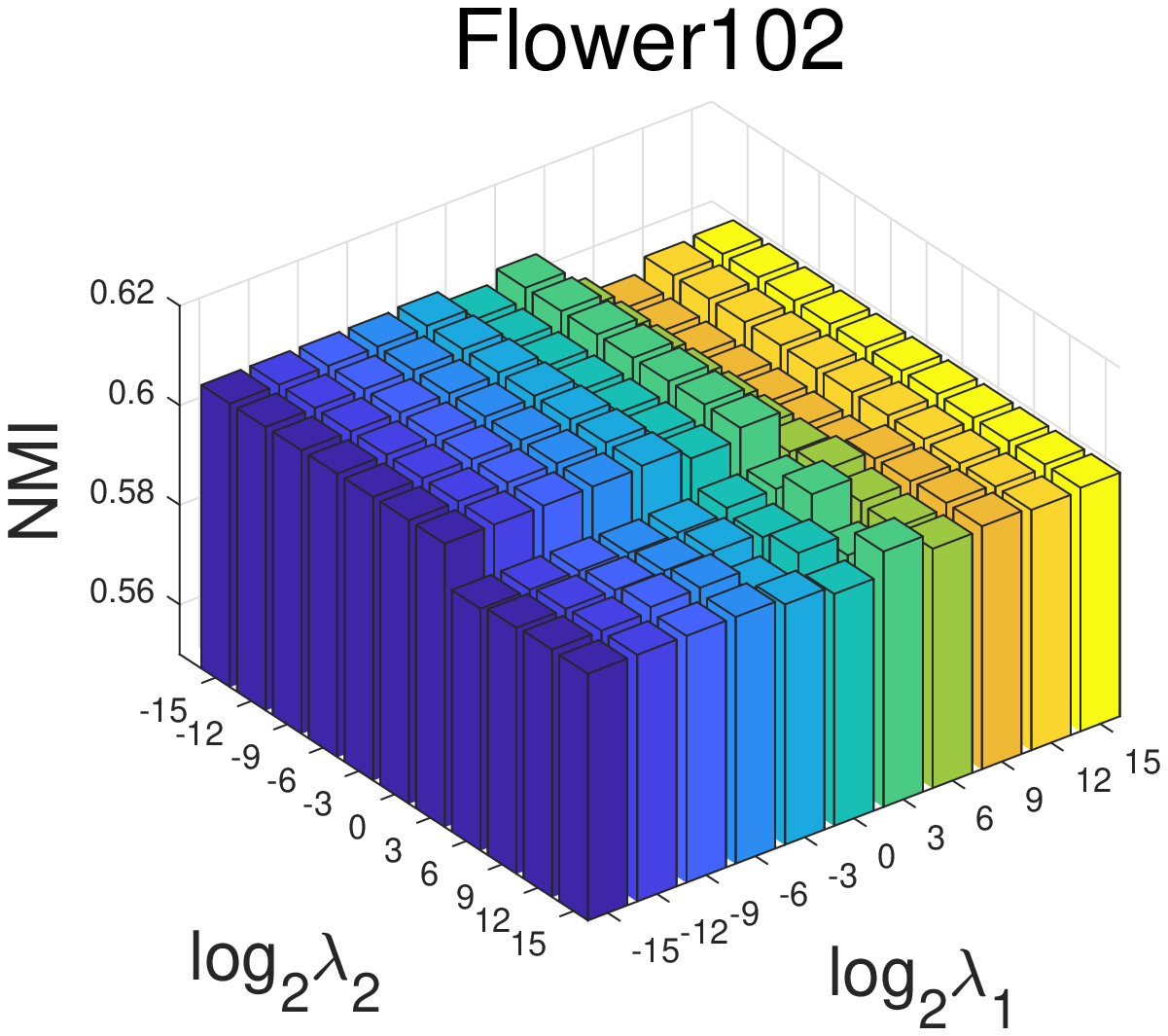}\label{flower102_all}}
	\subfigure[YALE~($\lambda_1$,$\lambda_2$)]{\includegraphics[width=0.666\columnwidth]{./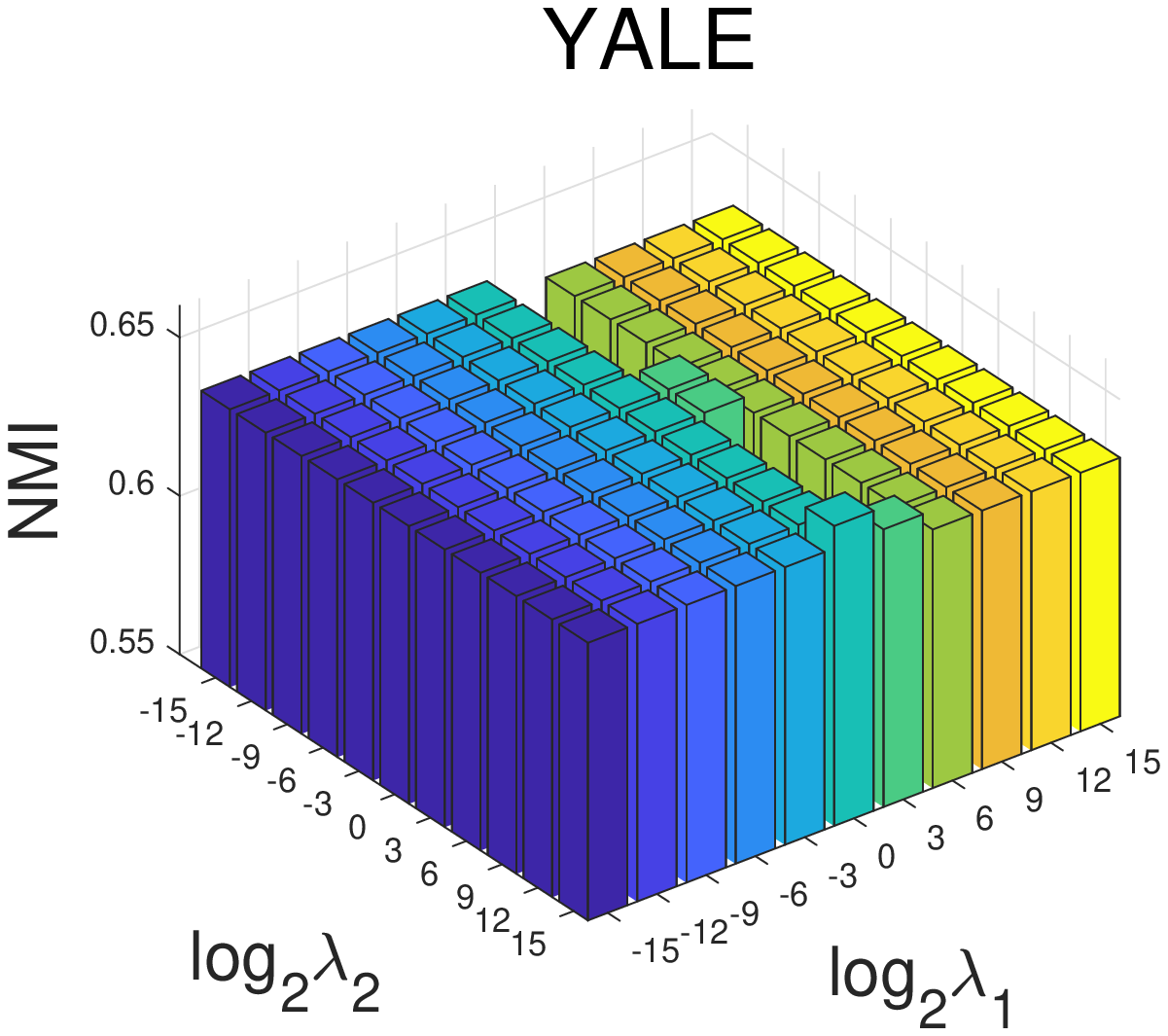}\label{yale_all}}
	\caption{Illustration of parameter sensitivity against the two hyper-parameters $\lambda_1$ and $\lambda_2$. In the experiment, both $\lambda_1$ and $\lambda_2$ are tested in the range $[2^{-15}, 2^{-12}, \cdots, 2^{15}]$. Among the figures, sub-figure a) - f) are corresponding to the performance variation of NMI on BBCSport, ProteinFold, Flower17, UCI-Digit, Mfeat, Nonpl, Flower102, and YALE respectively.}\label{parameter_sensitivity}
\vspace{-10pt}
\end{figure*}

\begin{figure*}[htbp]
\vspace{-25pt}
	\setlength{\abovecaptionskip}{0pt}
	\setlength{\belowcaptionskip}{0pt}
	\centering 
	\subfigure[BBC~(Neighbor))]{\includegraphics[width=0.666\columnwidth]{./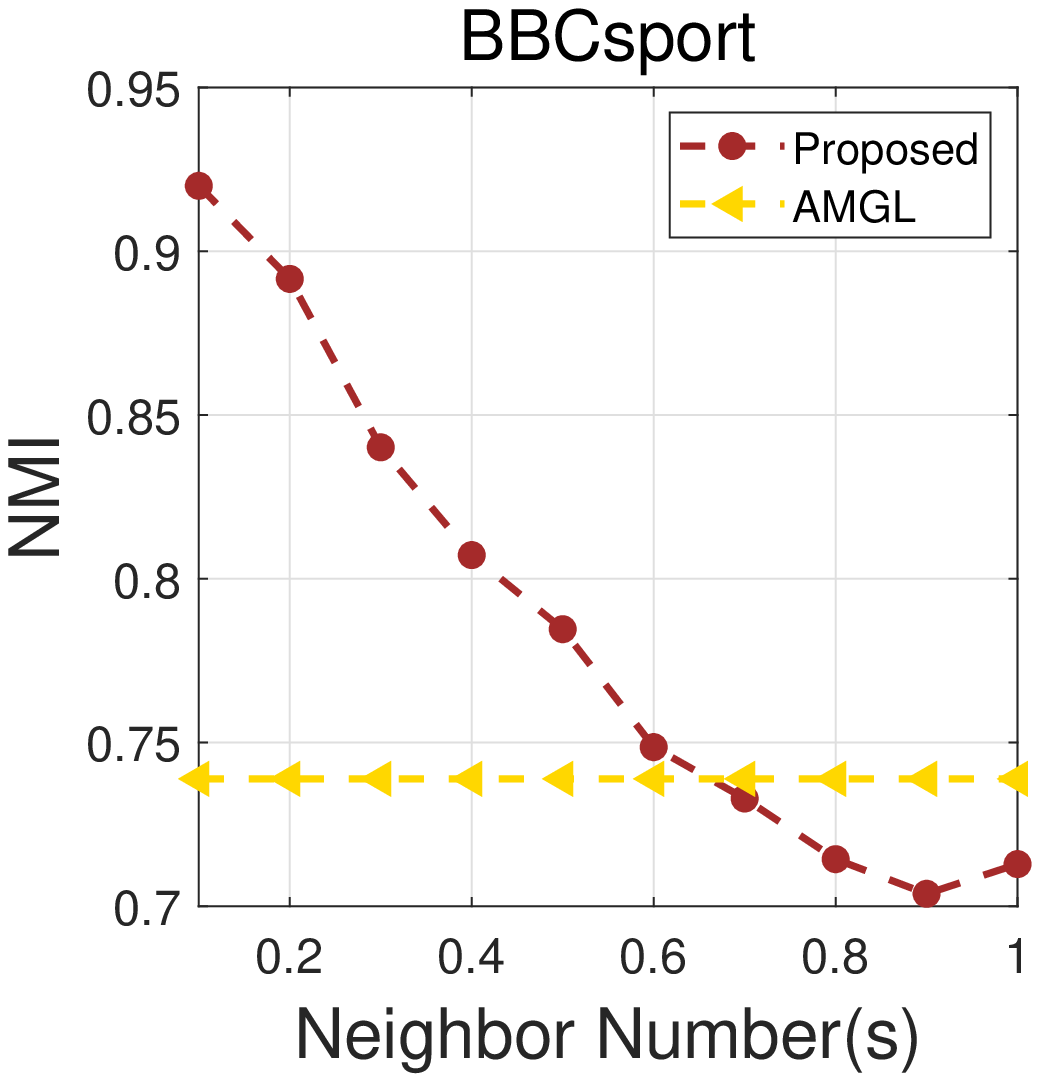}\label{BBC_NN}}
	\subfigure[ProteinFold~(Neighbor)]{\includegraphics[width=0.666\columnwidth]{./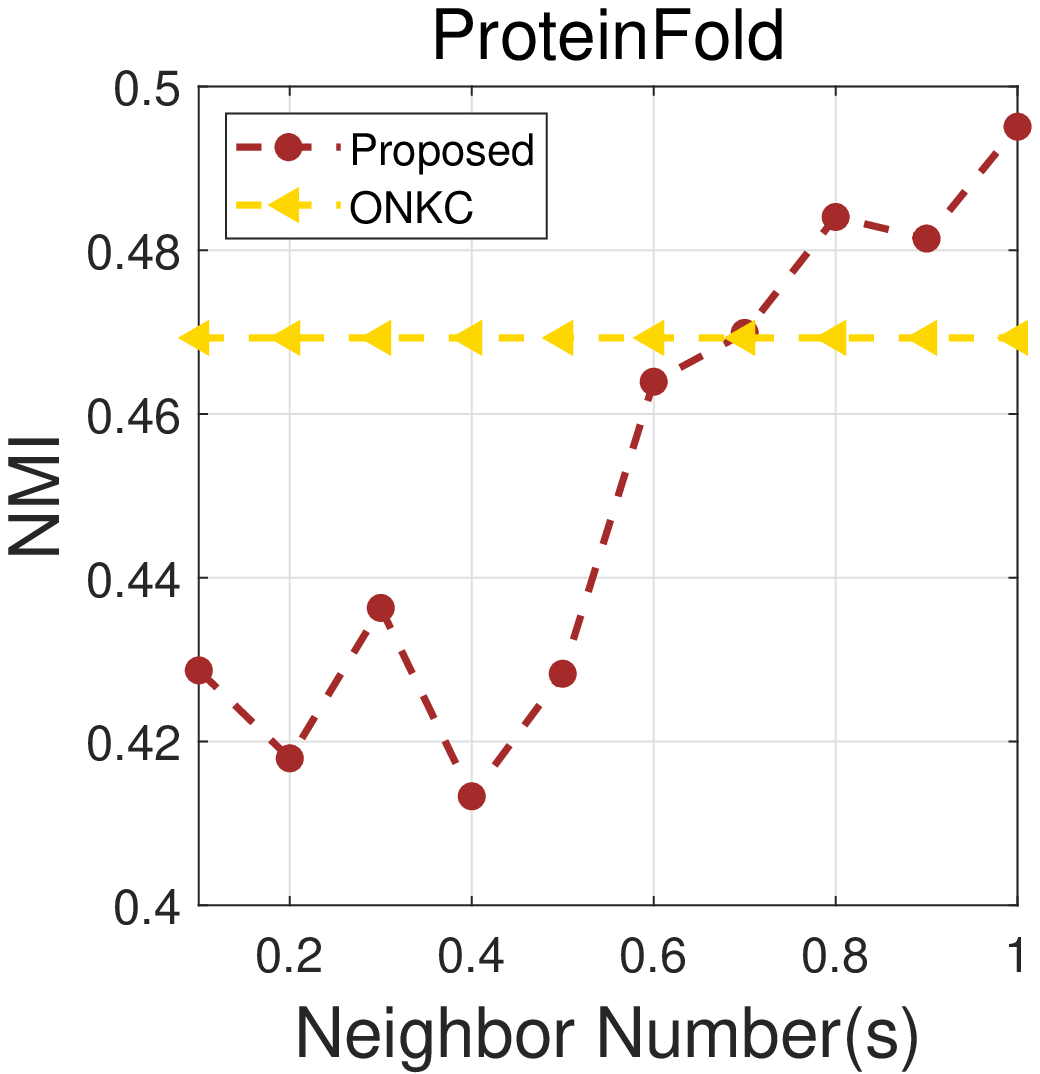}\label{ProteinFold_NN}}
	\subfigure[Flower17~(Neighbor)]{\includegraphics[width=0.666\columnwidth]{./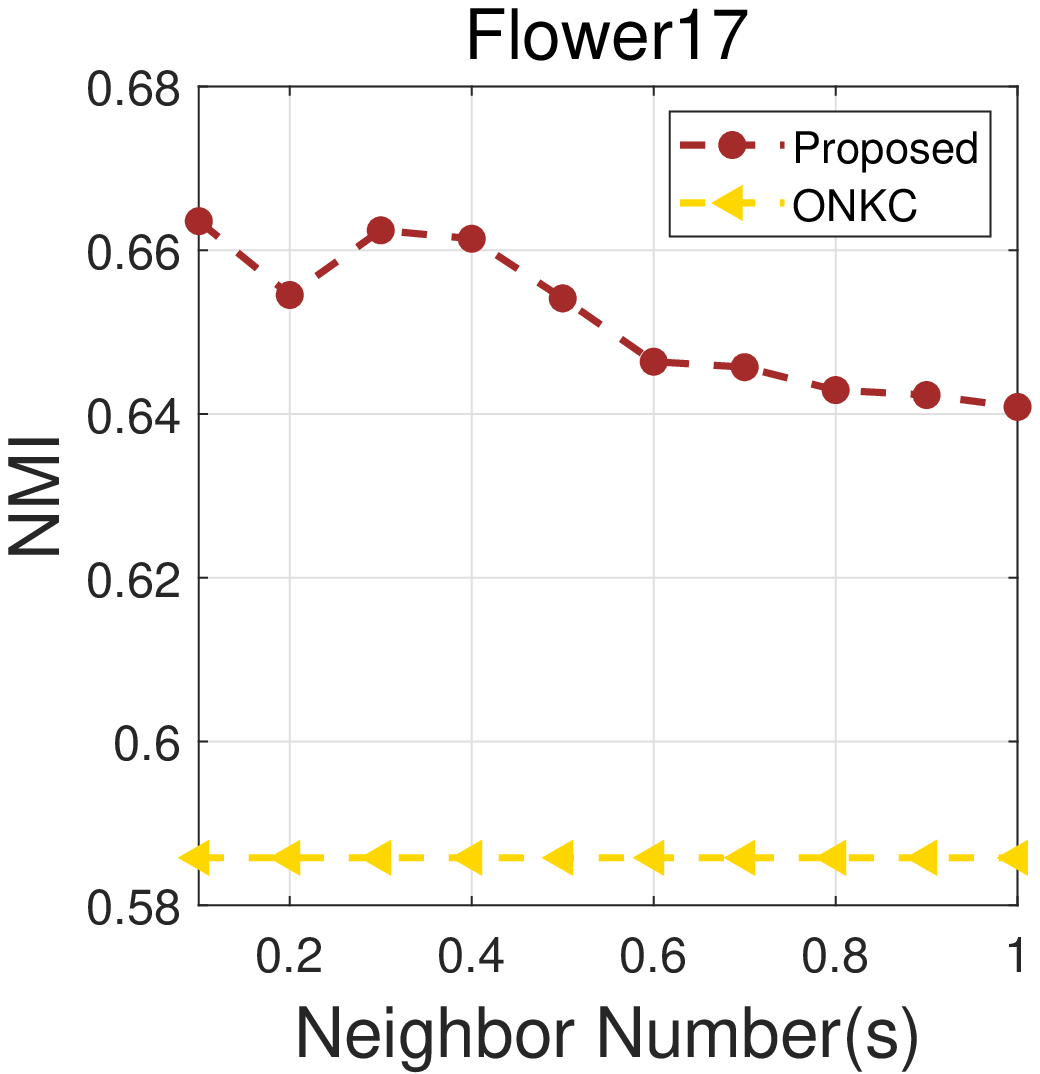}\label{flower17_NN}}
	\subfigure[UCI-Digit~(Neighbor)]{\includegraphics[width=0.666\columnwidth]{./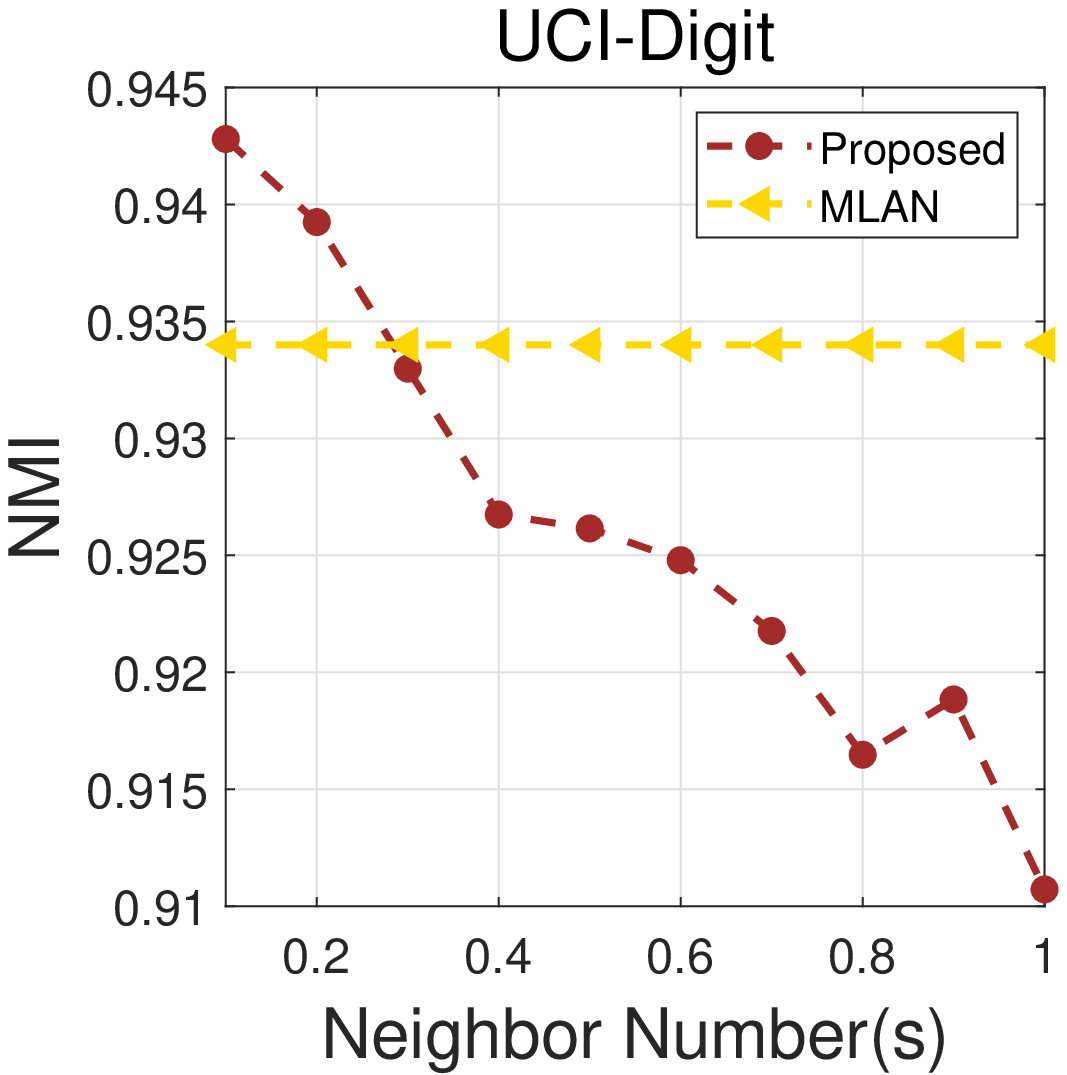}\label{UCI_DIGIT_NN}}
	\subfigure[Flower102~(Neighbor)]{\includegraphics[width=0.666\columnwidth]{./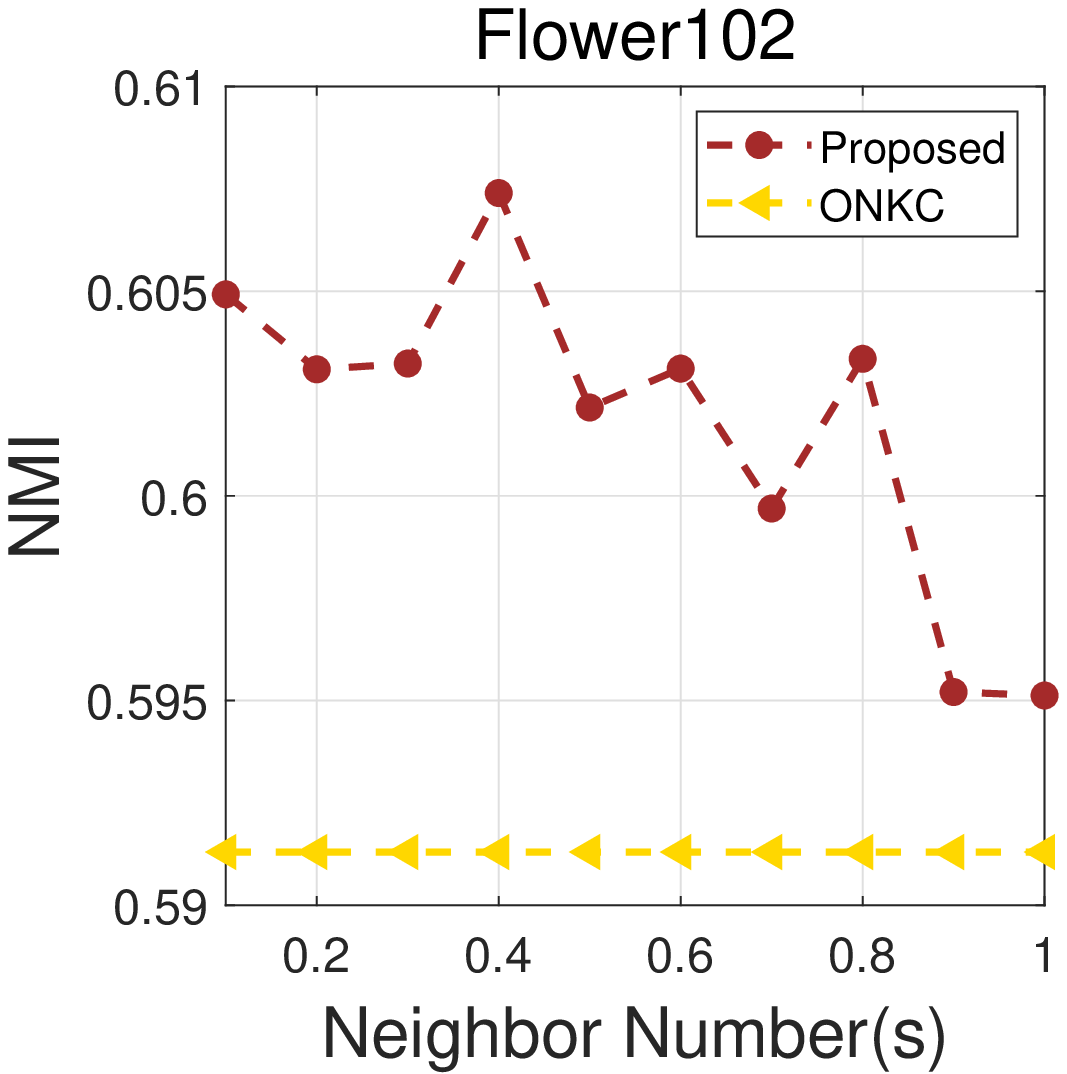}\label{flower102_NN}}
	\subfigure[YALE~(Neighbor)]{\includegraphics[width=0.666\columnwidth]{./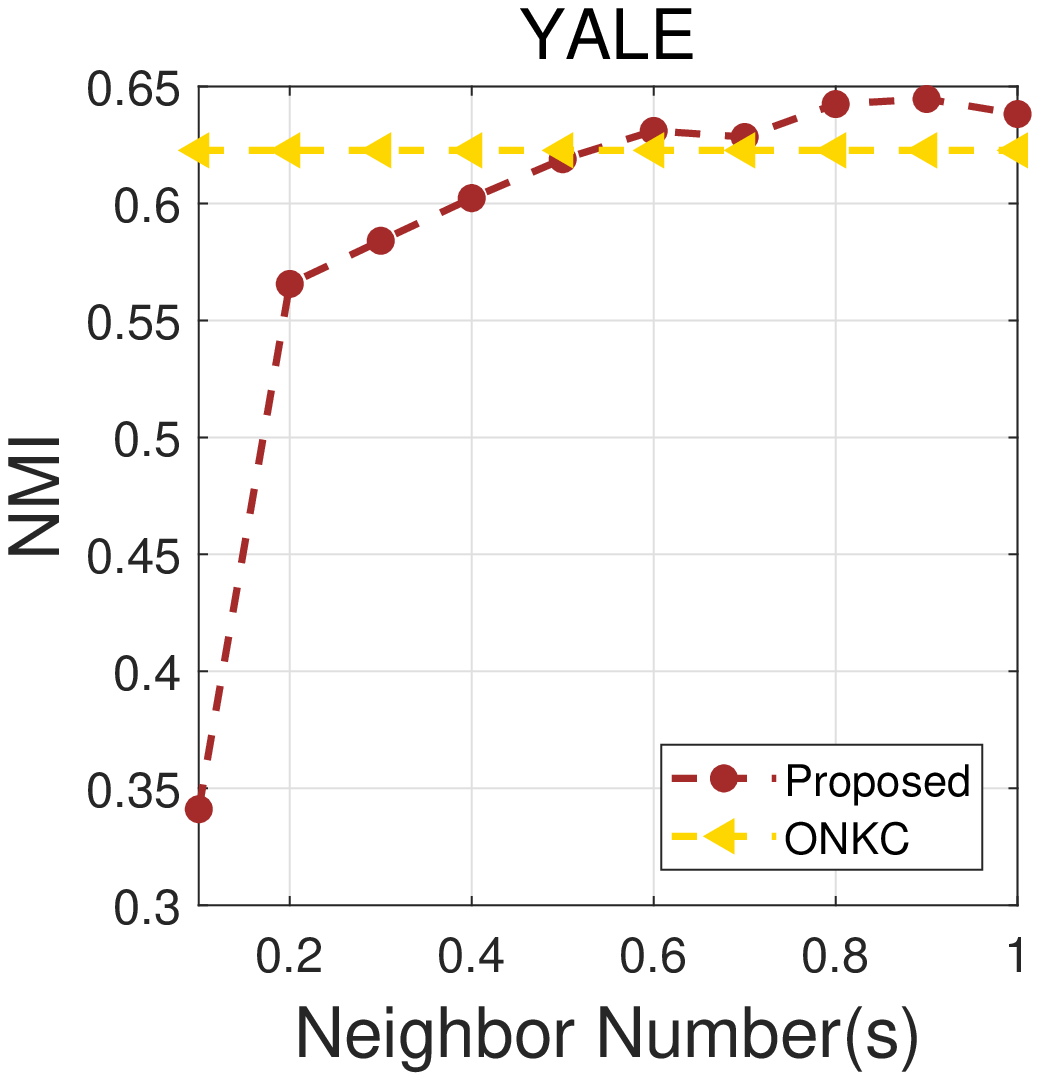}\label{yale_NN}}
	\caption{Illustration of parameter sensitivity against neighbor number $N$. Sub-figure a) - f) are corresponding to the performance variation of NMI on BBCSport, ProteinFold, Flower17, UCI-Digit, Flower102, and YALE, respectively. The brown curves record the results of the proposed algorithm, while the yellow lines are corresponding to the second-best performance on the corresponding dataset.}\label{neighbor_sensitivity}
\vspace{-25pt}
\end{figure*}

\begin{figure*}[htbp]
	\setlength{\abovecaptionskip}{0pt}
	\setlength{\belowcaptionskip}{0pt}
	\centering
	\subfigure[BBCSport]{\includegraphics[width=0.47\columnwidth]{./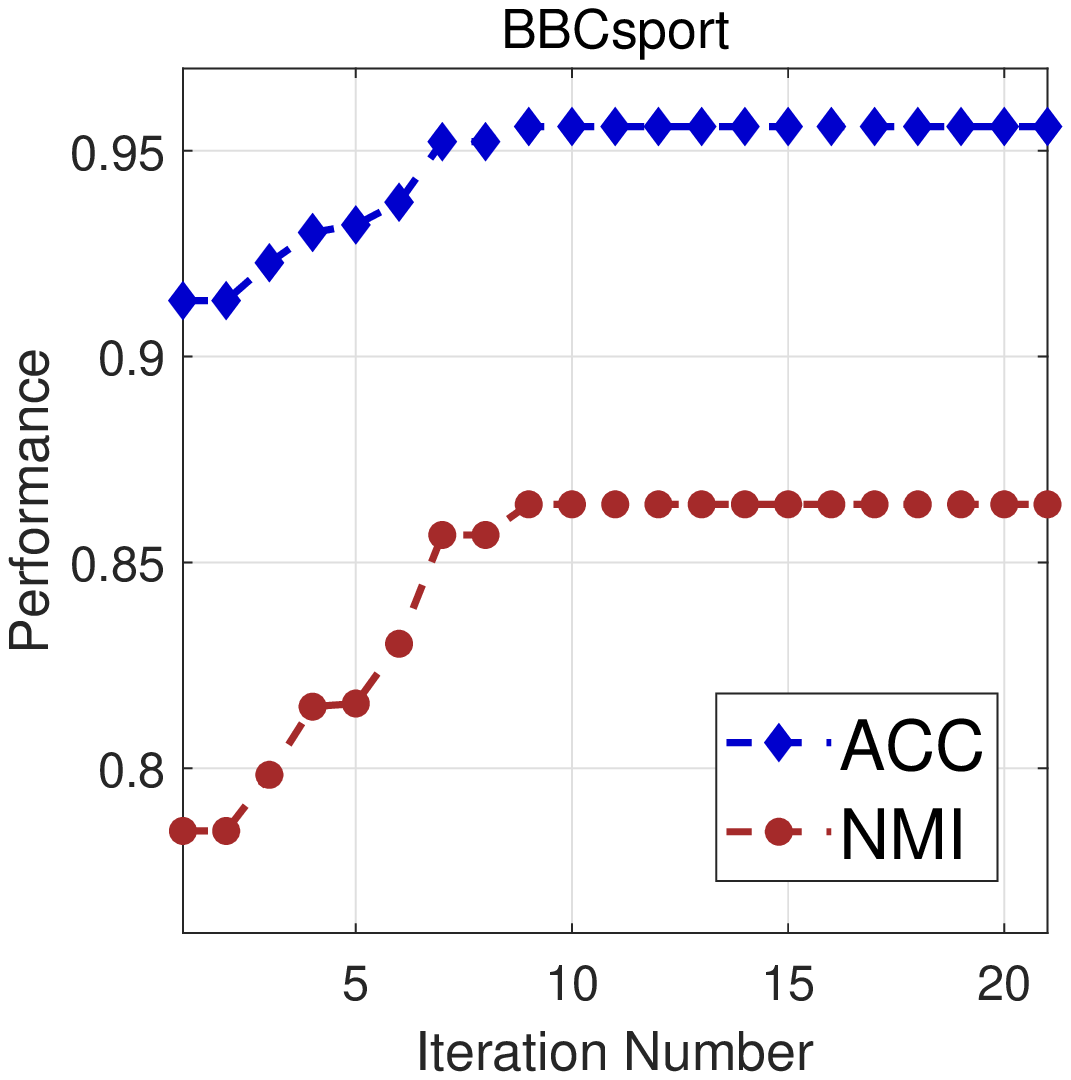}\label{bbc_result}}
	\subfigure[BBCSport]{\includegraphics[width=0.47\columnwidth]{./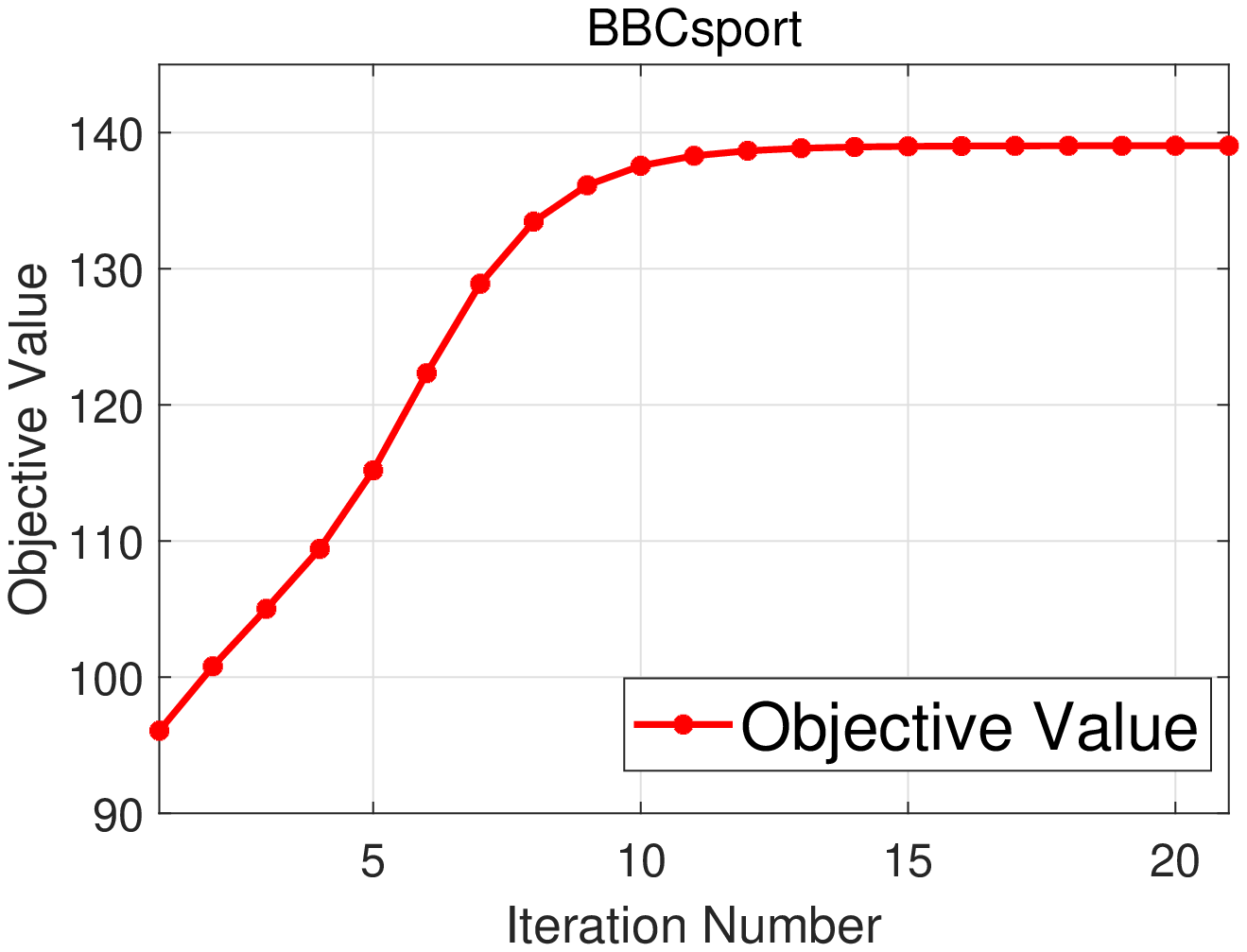}\label{bbc_loss}}
	\subfigure[ProteinFold]{\includegraphics[width=0.47\columnwidth]{./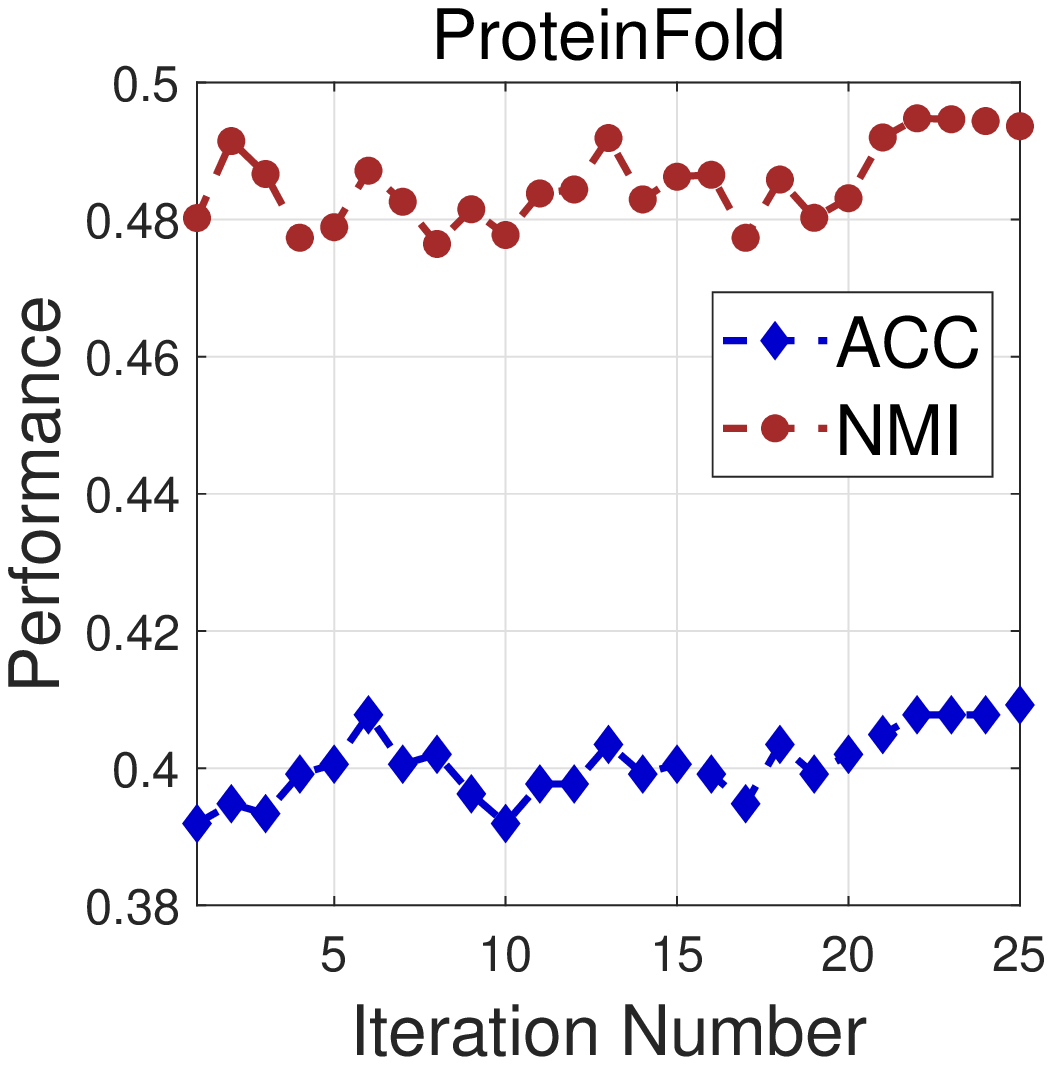}\label{proteinfold_result}}
	\subfigure[ProteinFold]{\includegraphics[width=0.47\columnwidth]{./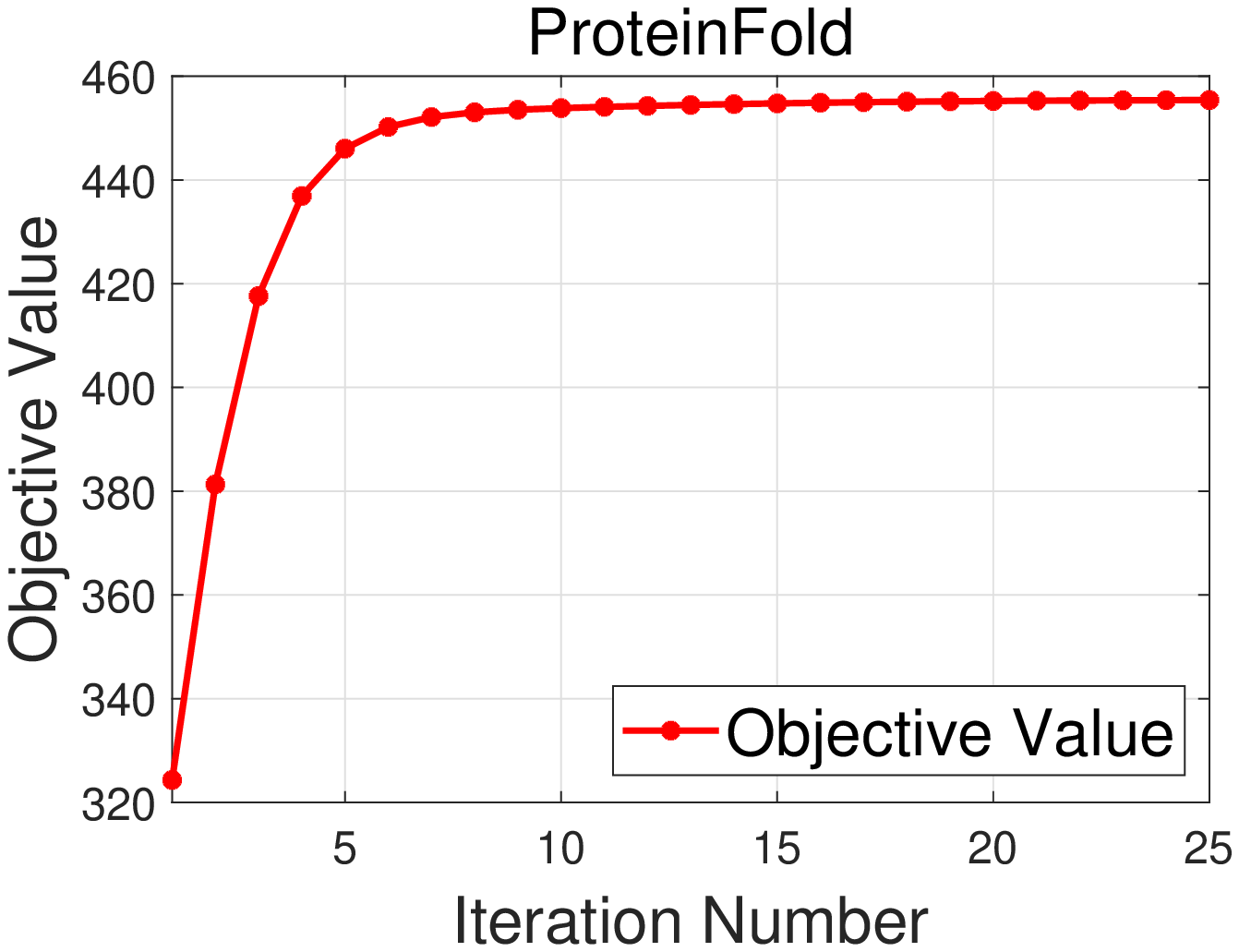}\label{proteinfold_loss}}
	\subfigure[Flower17]{\includegraphics[width=0.47\columnwidth]{./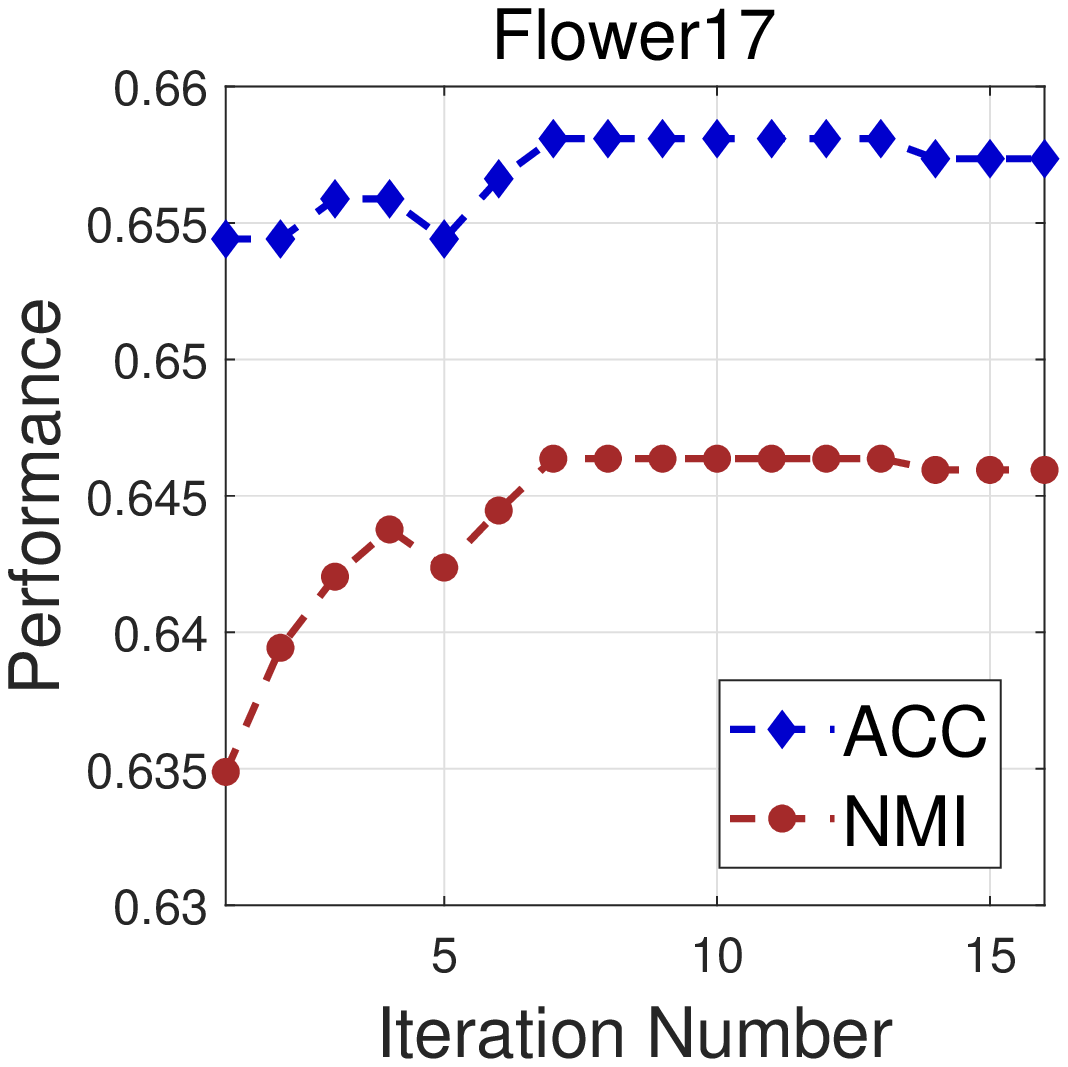}\label{flower17_result}}
	\subfigure[Flower17]{\includegraphics[width=0.47\columnwidth]{./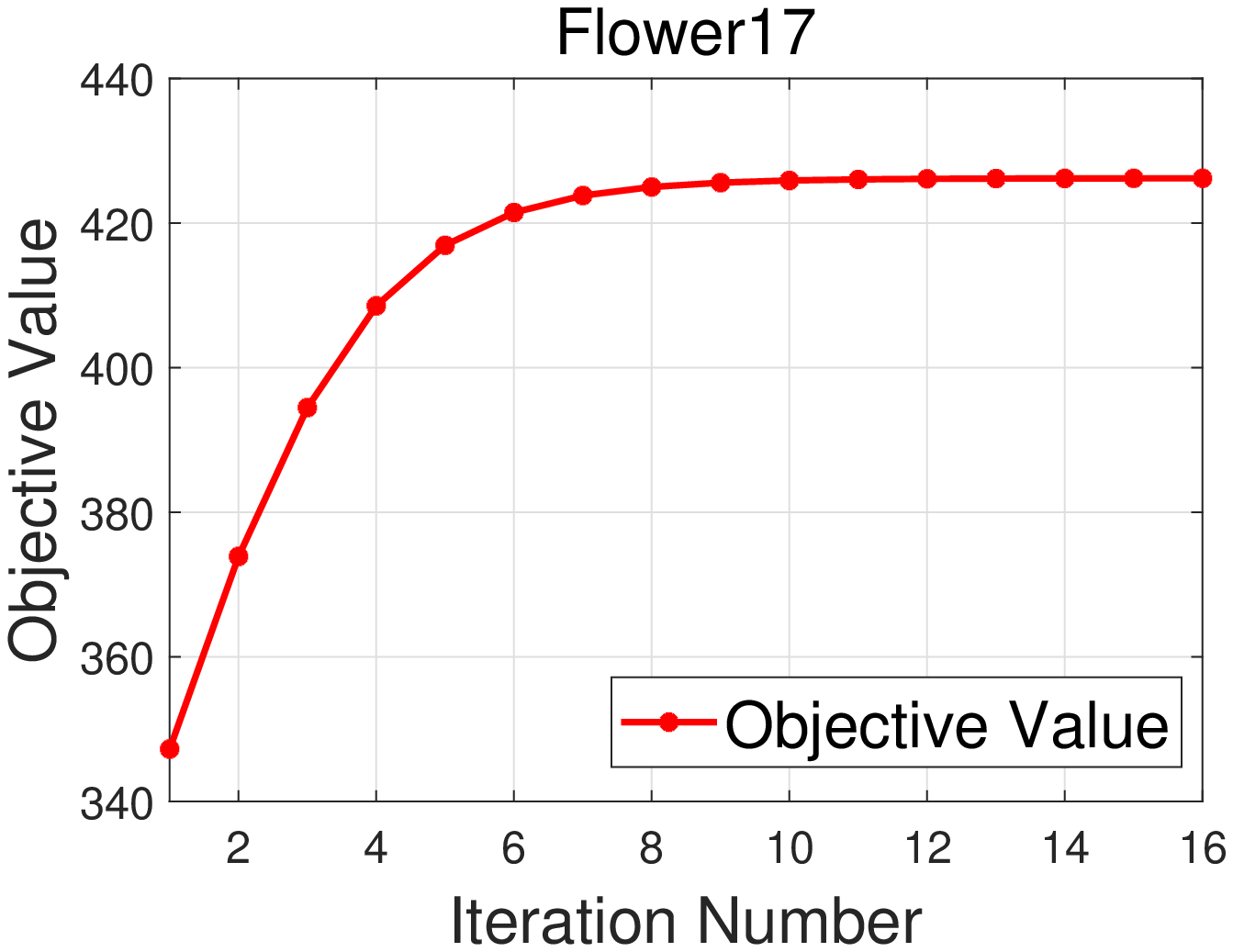}\label{flower17_loss}}
	\subfigure[UCI-Digit]{\includegraphics[width=0.47\columnwidth]{./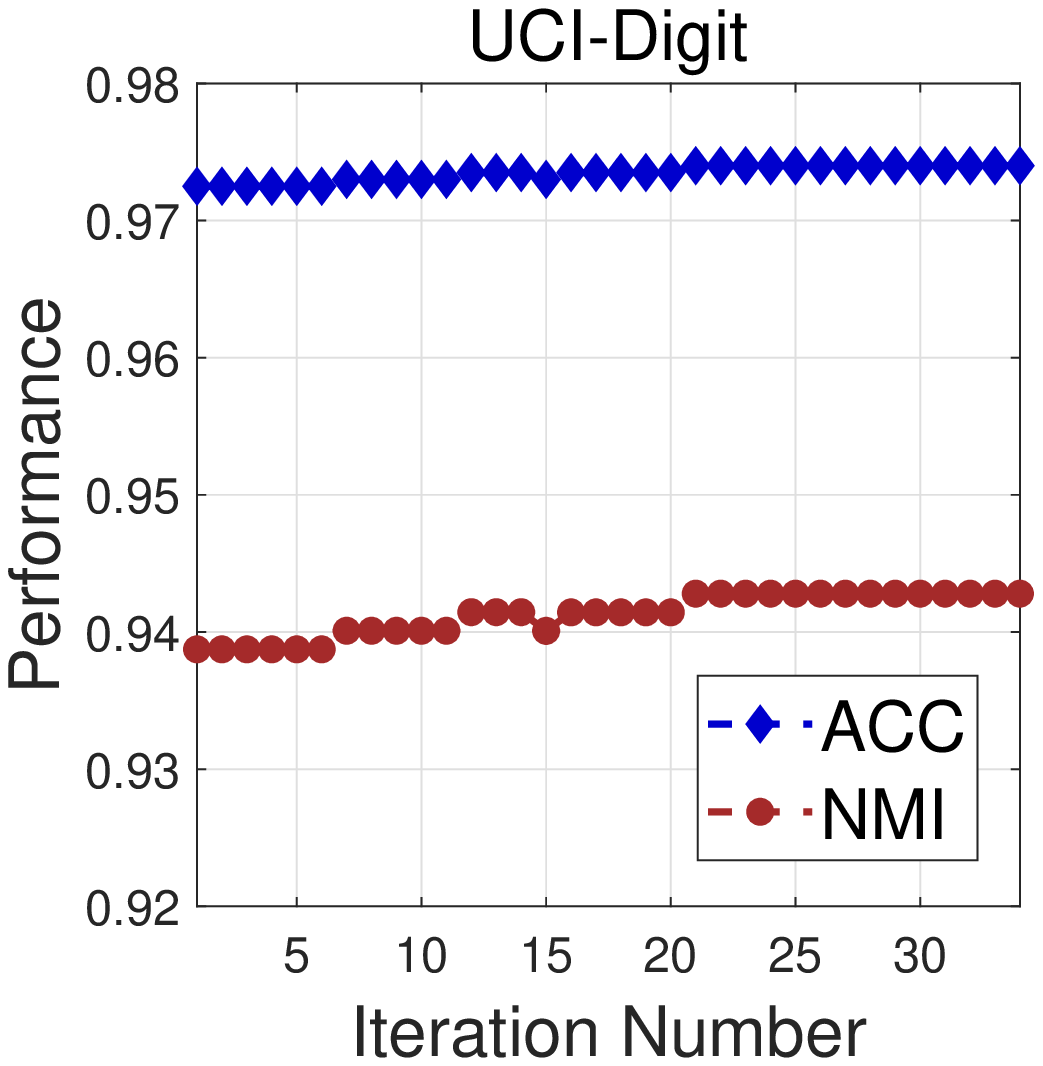}\label{UCI_DIGIT_result}}
	\subfigure[UCI-Digit]{\includegraphics[width=0.47\columnwidth]{./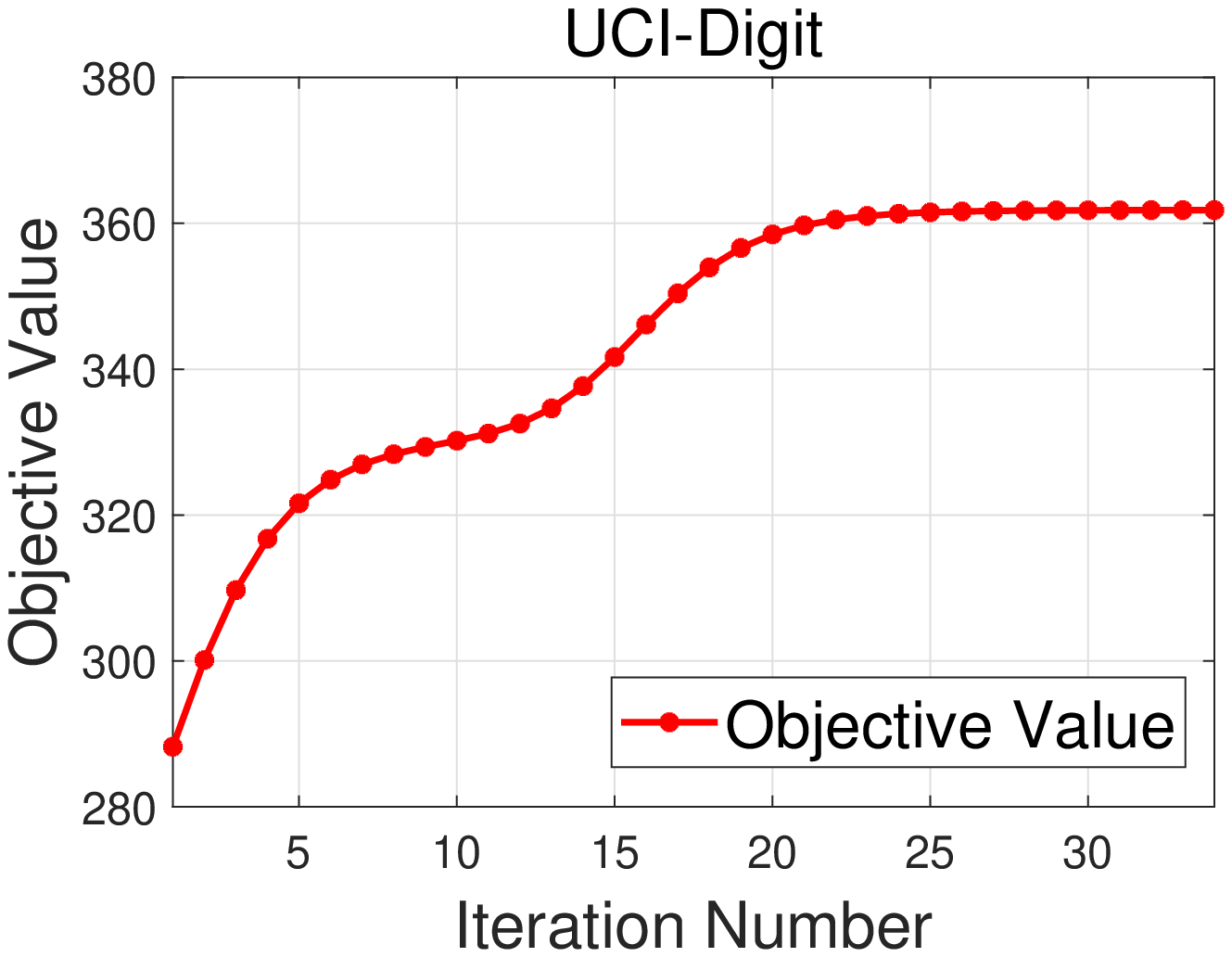}\label{UCI_DIGIT_loss}}
	\subfigure[Flower102]{\includegraphics[width=0.47\columnwidth]{./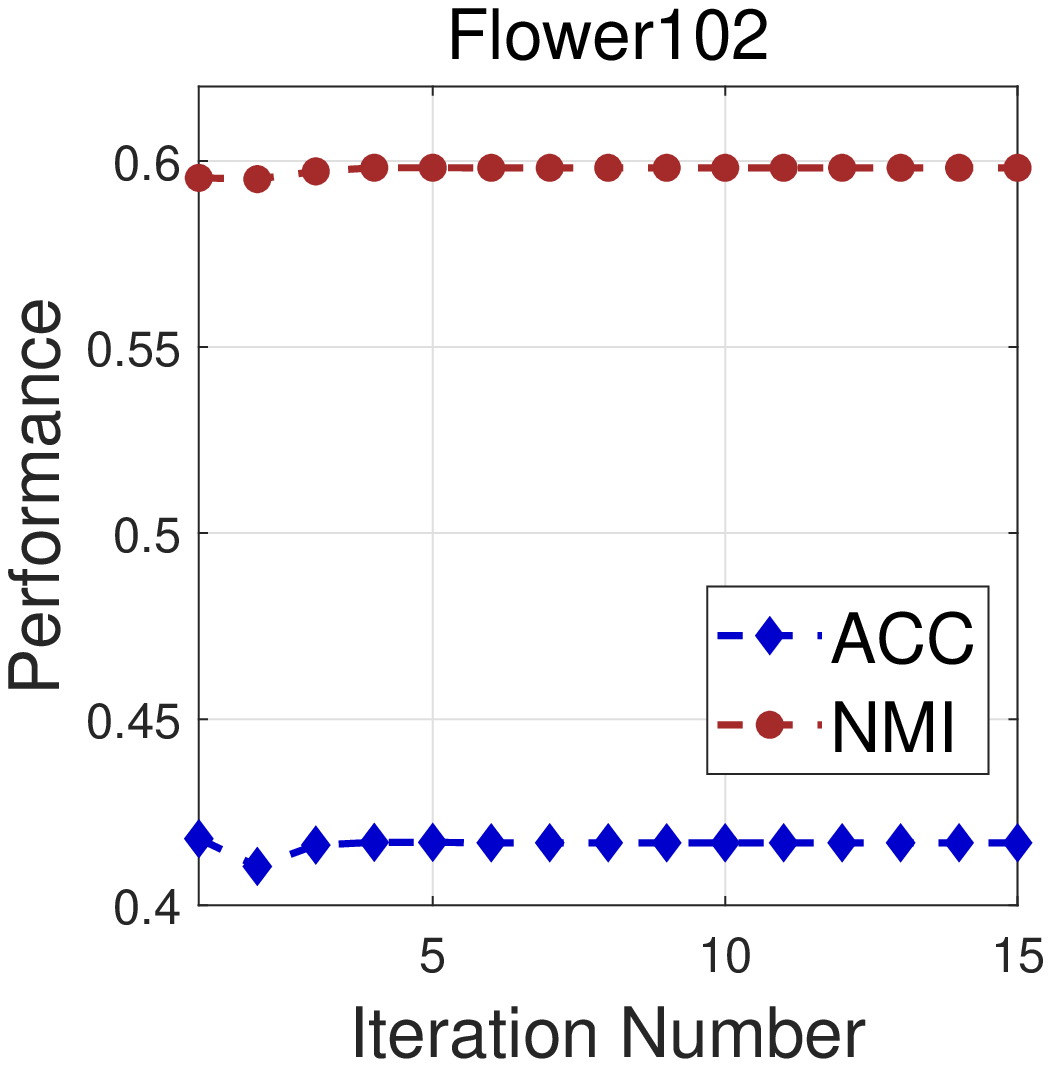}\label{flower102_result}}
	\subfigure[Flower102]{\includegraphics[width=0.47\columnwidth]{./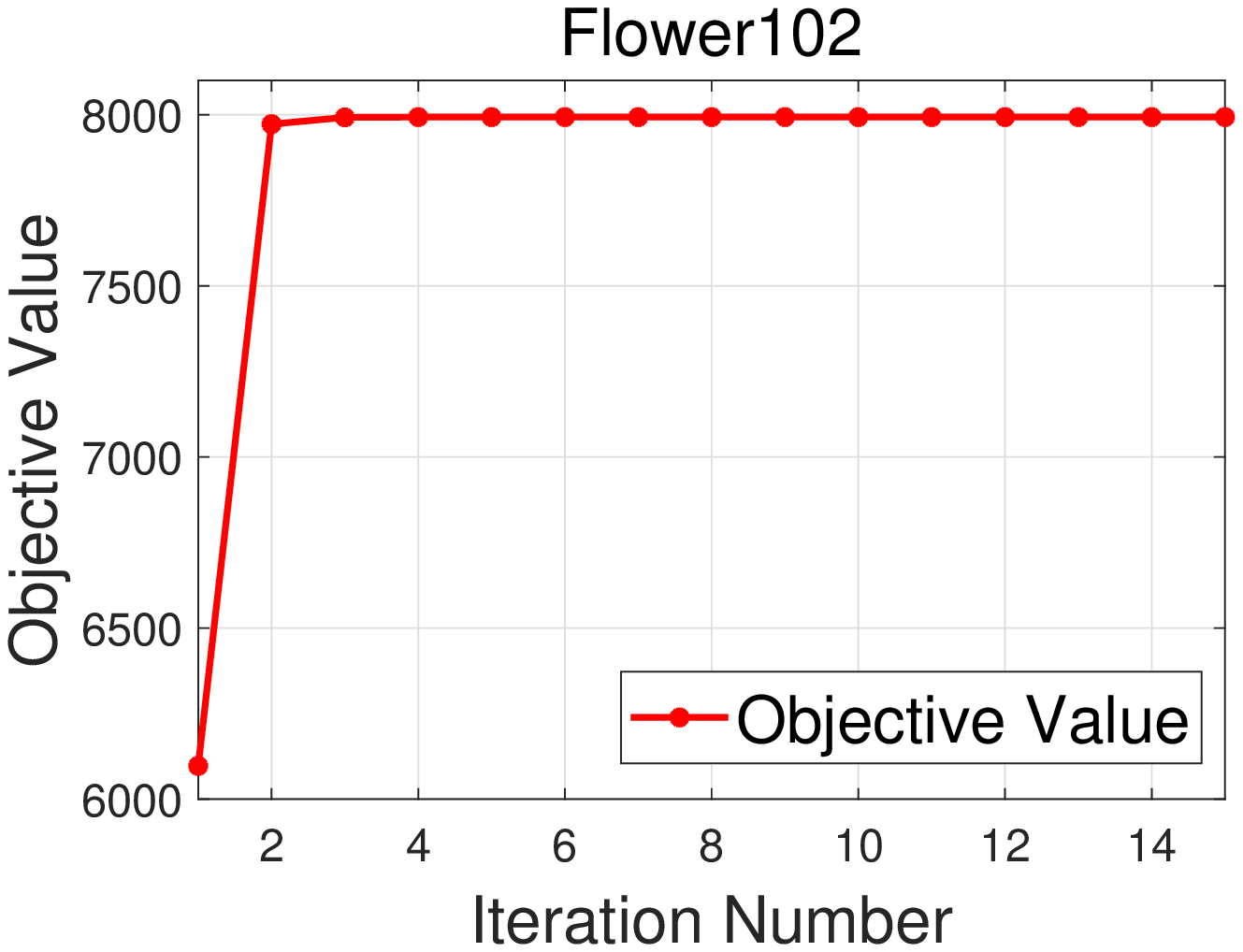}\label{flower102_loss}}
	\subfigure[YALE]{\includegraphics[width=0.47\columnwidth]{./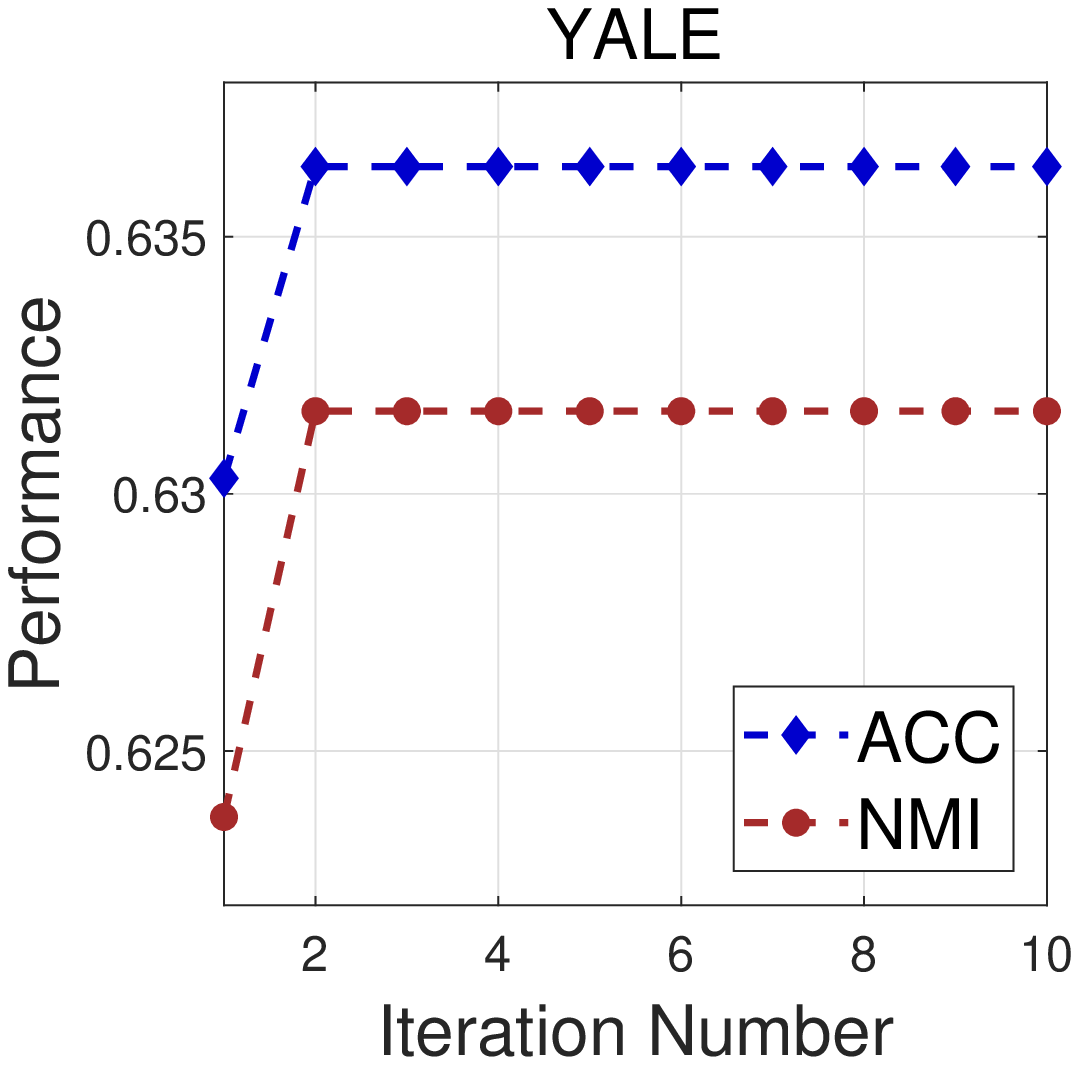}\label{YALE_result}}
	\subfigure[YALE]{\includegraphics[width=0.47\columnwidth]{./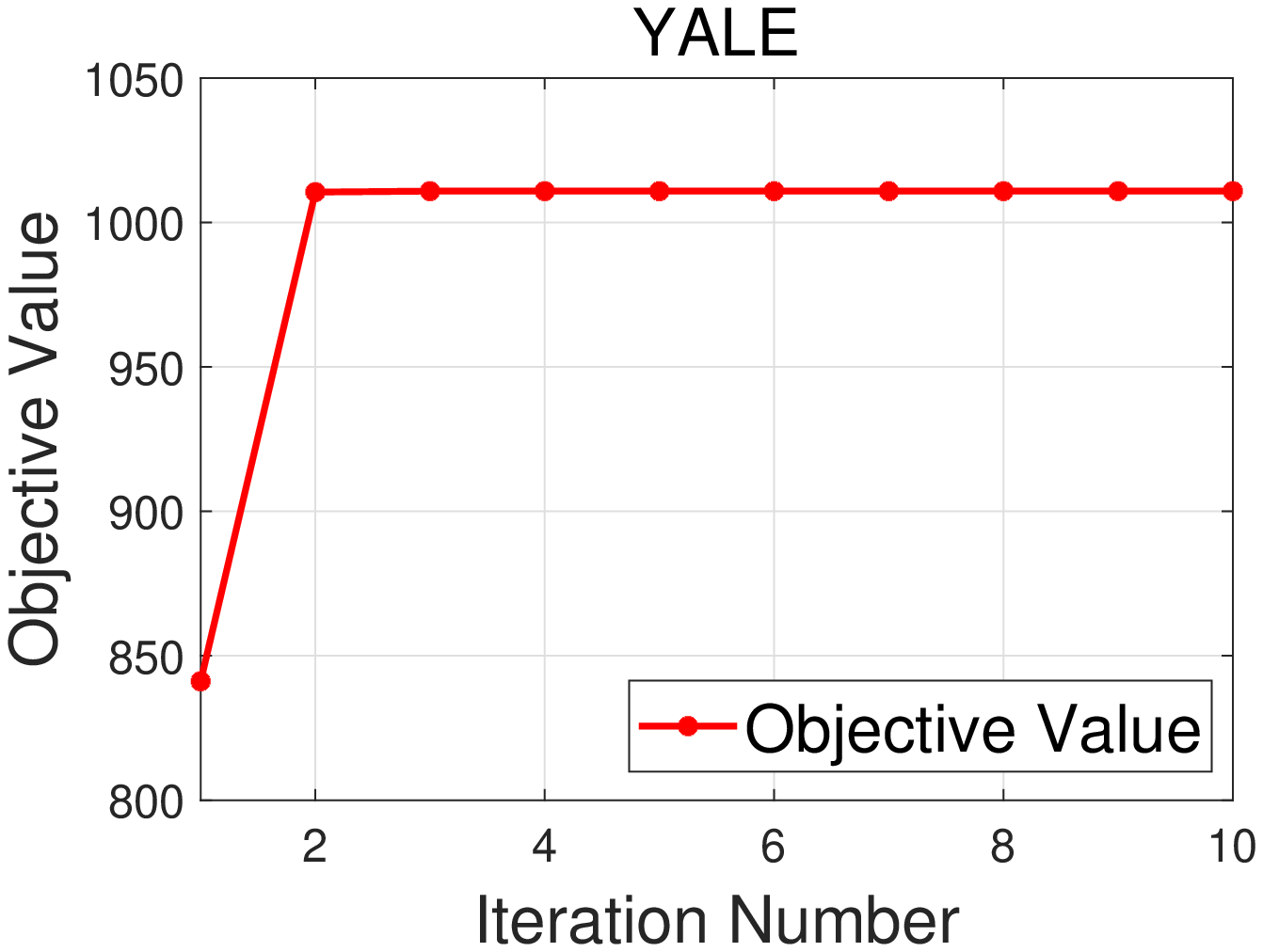}\label{YALE_loss}}
	\caption{Illustration of performance variation and algorithm convergence. In these figures, the first and the third columns are corresponding to the performance evolution as the iteration increases. The blue cures are corresponding to the clustering accuracy and the brown curves are corresponding to the NMI. The second and the fourth columns represent the variation of the objective values. Results on BBCSport, ProteinFold, Flower17, UCI-Digit, Flower102 and YALE datasets are reported.}\label{algorithm_convergence}
	\vspace{-20pt}
\end{figure*}

\textbf{Algorithm Convergence}. Six examples of the objective values of our algorithm at each iteration are shown in Fig. \ref{algorithm_convergence}. As observed from a) to l), the objective function monotonically increases and reaches convergence quickly. In a), g) and k), we can find that the clustering performance reaches the best results, as the objective function value reaches convergence. Because the best value of the objective function may not lead to the best clustering performance, the results in c), e), and i) have a slight variation but gradually increases with the increasing of iterations. In general, all the figures above have clearly demonstrated the effectiveness of the learned consensus
matrix $\H^*$. Furthermore, as shown in other figures, the objective function does monotonically increase at each iteration, and it usually converges within 30 iterations.

\subsection{Scale the Algorithm to Large Datasets}
Being able to work appropriately on large scale datasets is an essential criterion to the practicality of a MVSC algorithm. To show the effectiveness of our proposed method, we further conduct an experiment on the MNIST dataset\footnote{http://yann.lecun.com/exdb/mnist/}. To construct the dataset, we first adopt three deep neural networks, i.e., VGG19~\cite{simonyan2014very}, DenseNet121~\cite{huang2017densely}, and ResNet101~\cite{he2016deep}, which are pre-trained on the ImageNet\footnote{http://www.image-net.org/} dataset as feature extractors in three different views. Then, we conduct Algorithm \ref{alg:algorithm2} and Algorithm \ref{alg:algorithm3} to obtain the cluster indicating matrix of each order and of each view. In this step, to test the influence of the anchor number of the Nystr\"om algorithm we tune the sampled anchor points number from [10 50 100 500 1000 6000], and report the corresponding results in Fig. \ref{MNIST}. 
We also visualize the clustering structure of four representative digits, i.e., 1, 3, 5, 9. From Fig. \ref{MNIST}-(a) we observe that as the number of sampled columns increases, the performance becomes better and quickly reaches a plateau at the number of 50 anchor points. This result indicates that our algorithm can achieve favourable performance with efficient approximation of the cluster indicating matrices. The best ACC, NMI, and purity are {\bf94.09}, {\bf89.9}, and {\bf94.09}, respectively. From Fig. \ref{MNIST}-(b) we can obviously see that the cluster structure represented by our algorithm well reveal the underlying manifold of the corresponding dataset.

In addition, we compare the proposed algorithm with several state-of-the-art large-scale multi-view clustering algorithms, including:
(1) single best $k$-means {\bf (SB-KM)}: which performs standard $k$-means on every single view separately and reports the best performance, (2) robust multi-view $k$-means clustering {\bf (RMKMC)} \cite{Cai2013}: which is a robust large-scale multi-view $k$-means clustering algorithm, (3) large-scale multi-view subspace clustering {\bf(LMVSC)} \cite{Kang2020}: which replaces the full reconstruction matrix with an anchor-based reconstruction matrix for efficient subspace clustering. The ACC, NMI, purity and runtime of the above algorithms are reported in Table \ref{result_large_scale}. According to the results, our method shows the best performance and competitive running time when compared with the state-of-the-art large-scale multi-view clustering methods.

\begin{figure}[htbp]
	\setlength{\abovecaptionskip}{0pt}
	\setlength{\belowcaptionskip}{0pt}
	\centering 
	\subfigure[Performance variation.]{\includegraphics[width=0.49\columnwidth]{./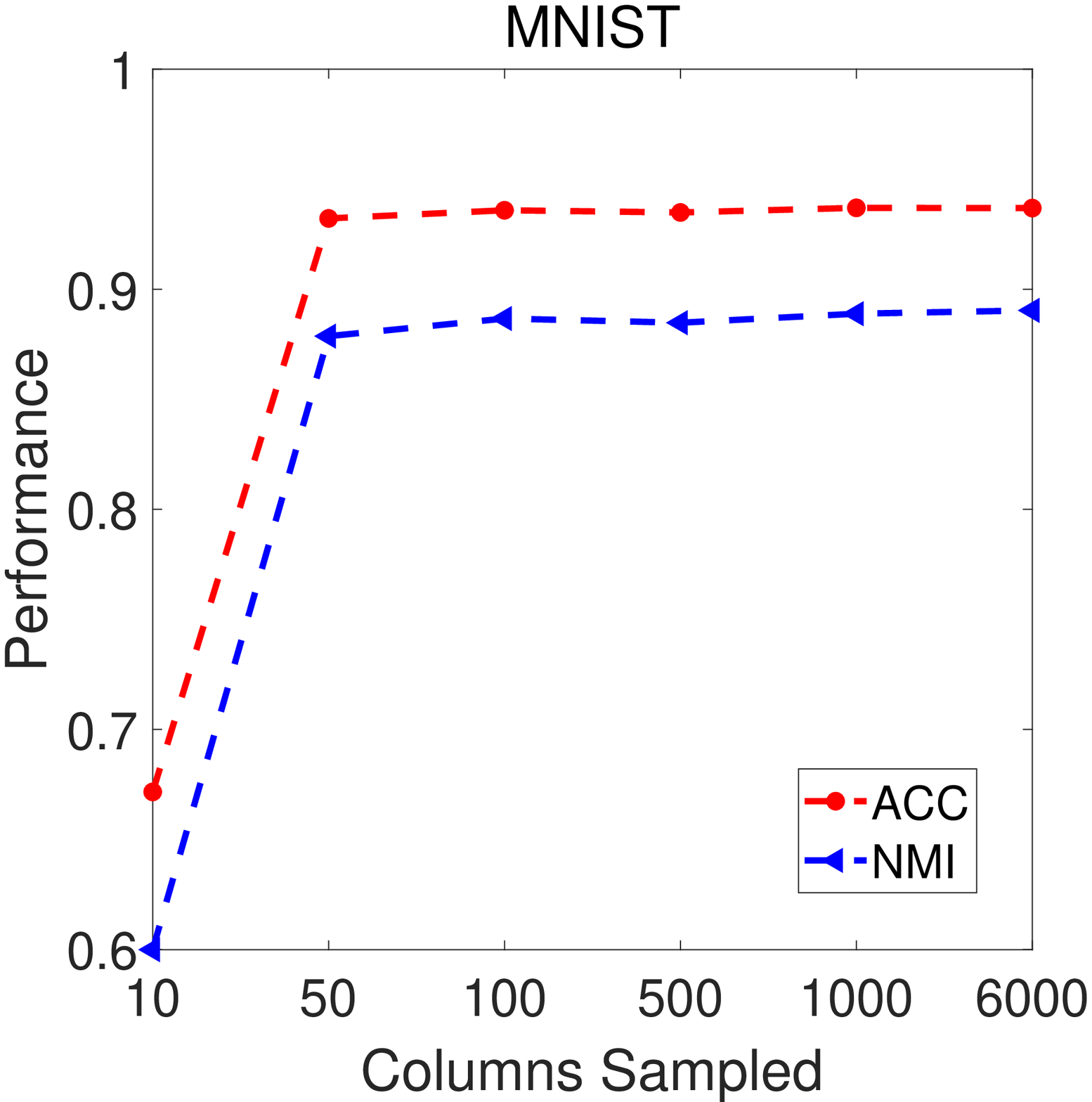}\label{MNIST performance}}
	\subfigure[Clustering results visualization.]{\includegraphics[width=0.49\columnwidth]{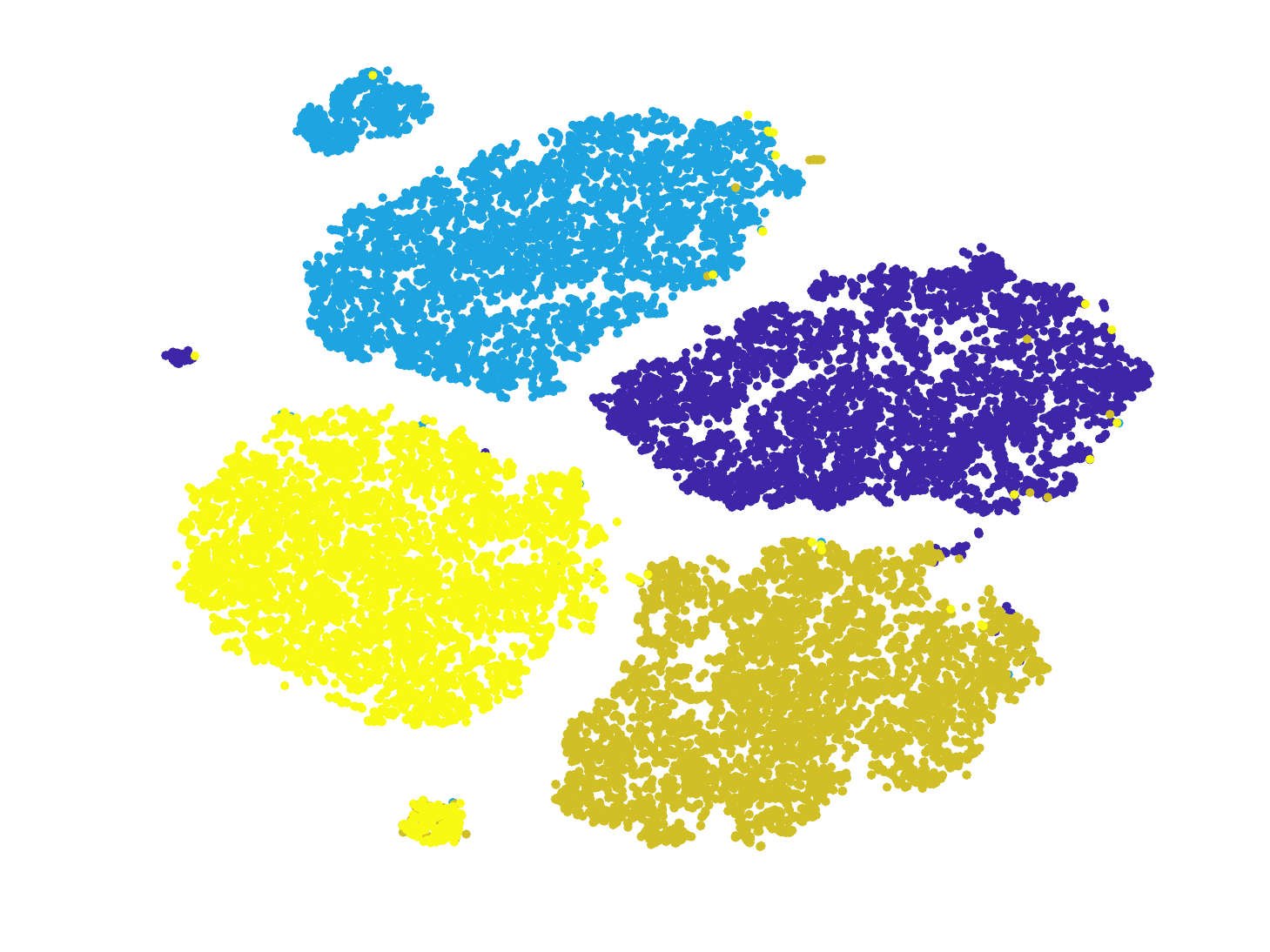}\label{MNIST_visual}}
	\caption{Results illustration on the MNIST dataset. (a) represents the performance variation against the selected anchor point number of the Nystr\"om algorithm. (b) illustrates the intrinsic cluster structure of the proposed algorithm.}
	\label{MNIST}
\end{figure}

\begin{table}[htbp]
	\centering
	\caption{ACC, NMI, purity comparison of different large-scale clustering algorithms on MNIST dataset.} \label{result_large_scale}
	\vspace{-10pt}
	\begin{tabular}{|c|c|c|c|c|}
		\hline 
		& SB-KM & RMKMC & LMVSC & Proposed\tabularnewline
		\hline 
		ACC & 72.84 &72.72  &88.56  &\bf 94.09 \tabularnewline
		\hline 
		NMI & 61.79 &64.91  &77.14  &\bf 89.9\tabularnewline
		\hline 
		Purity & 72.85 & 75.30 &88.58  &\bf 94.09\tabularnewline
		\hline 
		Time(s) & 318.8 &2935.4  & 69.4  & 27.4\tabularnewline
		\hline 
	\end{tabular}
\end{table}

\section{Conclusion} \label{con}
This paper proposes an optimal neighborhood multi-view spectral clustering (ONMSC) algorithm and its late fusion extension. In the proposed algorithms, the early fusion version (ONMSC-EF) enlarges the searching space of optimal Laplacian matrix from the linear combination of the first-order base Laplacian matrices to the neighborhood of both the first-order and high-order Laplacian combinations. The late fusion version (ONMSC-EF) introduces a late fusion learning mechanism and drastically reduces both the computational and storage complexity of the proposed algorithm. As a consequence, the scalability of the proposed algorithm is largely improved. 
A three-step algorithm with proved convergence is designed to solve the resulting optimization problem. Comprehensive experimental results demonstrate the effectiveness and the superior performance of our proposed algorithm. In the future, we plan to extend our algorithm to a more general framework and use it as a platform to revisit existing multi-view spectral clustering algorithms.

\section*{Acknowledgment}
This work was supported by the Natural Science Foundation of China (project no. 61773392 and 61672528).

\bibliographystyle{IEEEtran}
\bibliography{reference}

\vspace{-40pt}
\begin{IEEEbiography}[{\includegraphics[width=1in,height=1.25in,clip,keepaspectratio]{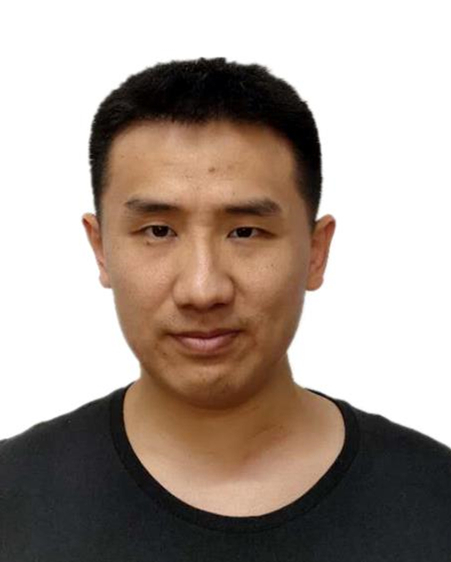}}]{Weixuan Liang} is a graduate student in National University of Defense Technology (NUDT), China. His current research interests include kernel learning, unsupervised multiple-view learning, scalable clustering and deep unsupervised learning.
\end{IEEEbiography}
\vspace{-60pt}

\begin{IEEEbiography}[{\includegraphics[width=1in,height=1.25in,clip,keepaspectratio]{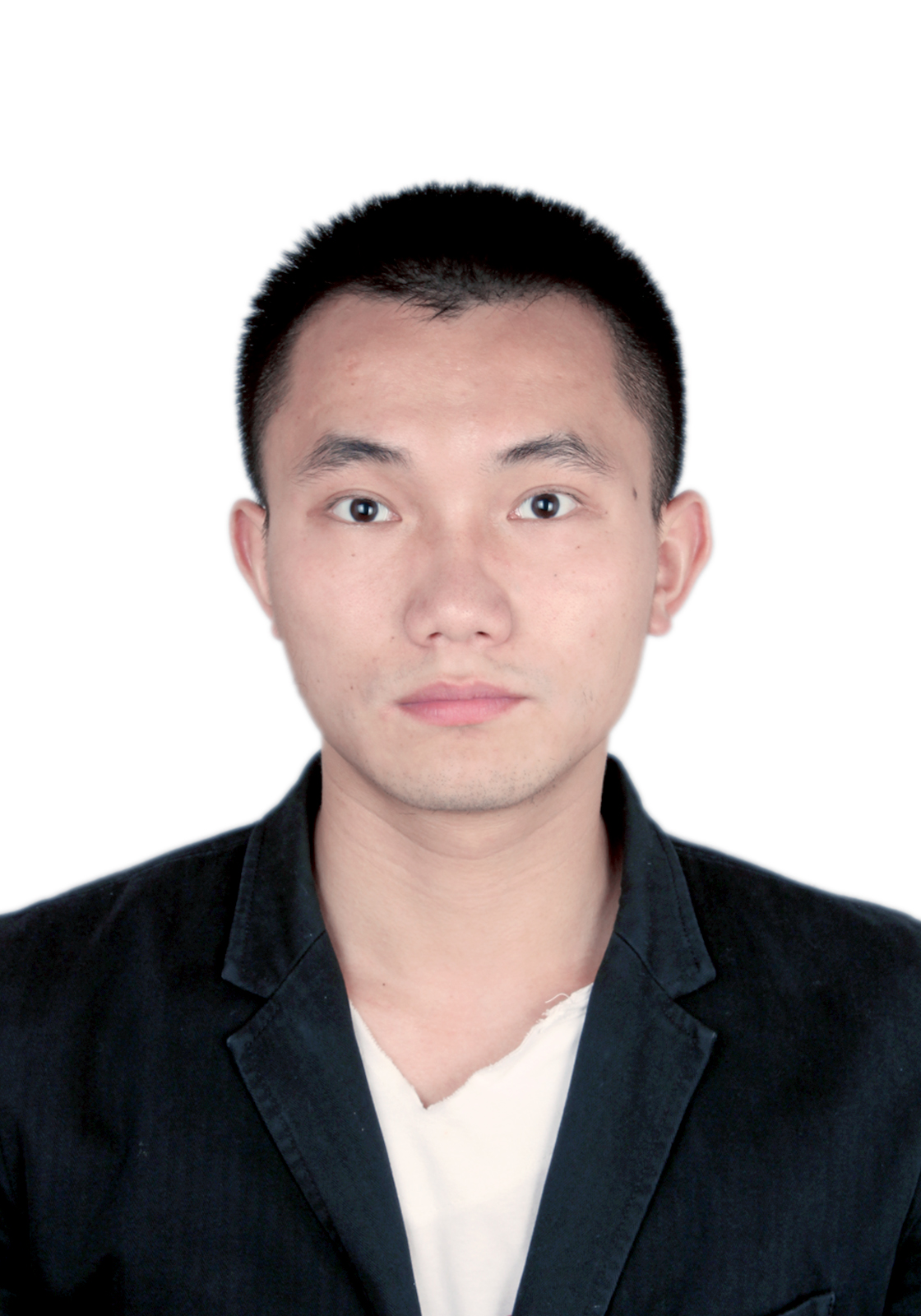}}]
{Sihang Zhou} received his PhD degree in computer science from National University of Defense Technology (NUDT), China in 2019. He received his M.S. degree in computer science from the same school in 2014 and his bachelor's degree in information and computing science from the University of Electronic Science and Technology of China (UESTC) in 2012. He is now a lecturer of College of Intelligence Science and Technology, NUDT. His current research interests include machine learning, pattern recognition, and medical image analysis.
\end{IEEEbiography}
\vspace{-40pt}

\begin{IEEEbiography}[{\includegraphics[width=1in,height=1.25in,clip,keepaspectratio]{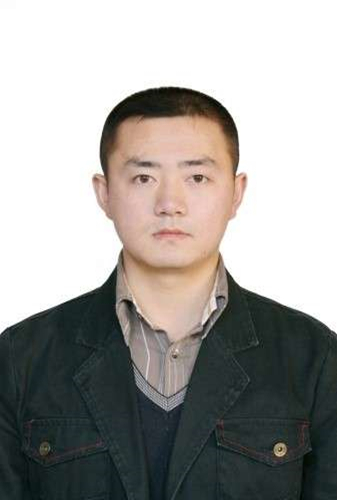}}]{Jian Xiong} received the B.S. degree in engineering, and the M.S. and Ph.D. degrees in management from National University of Defense Technology, Changsha, China, in 2005, 2007, and 2012, respectively. He is an Associate Professor with the School of Business Administration, Southwestern University of Finance and Economics. His research interests include data mining, multiobjective evolutionary optimization, machine learning, multiobjective decision making, project planning, and scheduling. Dr. Xiong has published 20+ peer-reviewed papers, including those in highly regarded journals and conferences such as IEEE T-PAMI, IEEE T-KDE, IEEE T-EC, etc.
\end{IEEEbiography}
\vspace{-40pt}

\begin{IEEEbiography}[{\includegraphics[width=1in,height=1.25in,clip,keepaspectratio]{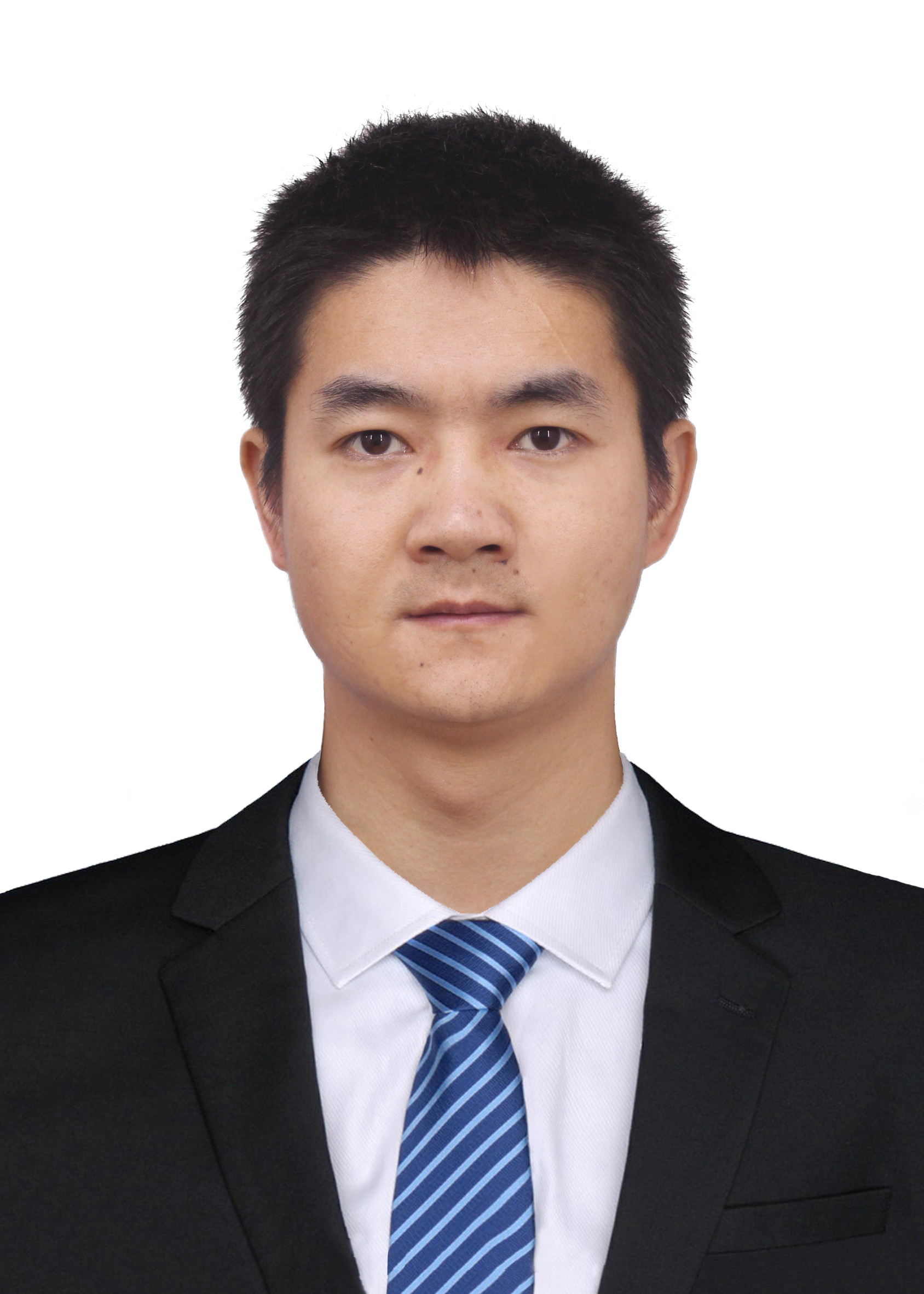}}]{Xinwang Liu} received his PhD degree from National University of Defense Technology (NUDT), China. He is now Assistant Researcher of School of Computer, NUDT. His current research interests include kernel learning and unsupervised feature learning. Dr. Liu has published 60+ peer-reviewed papers, including those in highly regarded journals and conferences such as IEEE T-PAMI, IEEE T-KDE, IEEE T-IP, IEEE T-NNLS, IEEE T-MM, IEEE T-IFS, NeurIPS, ICCV, CVPR, AAAI, IJCAI, etc.
\end{IEEEbiography}
\vspace{-40pt}

\begin{IEEEbiography}[{\includegraphics[width=1in,height=1.25in,clip,keepaspectratio]{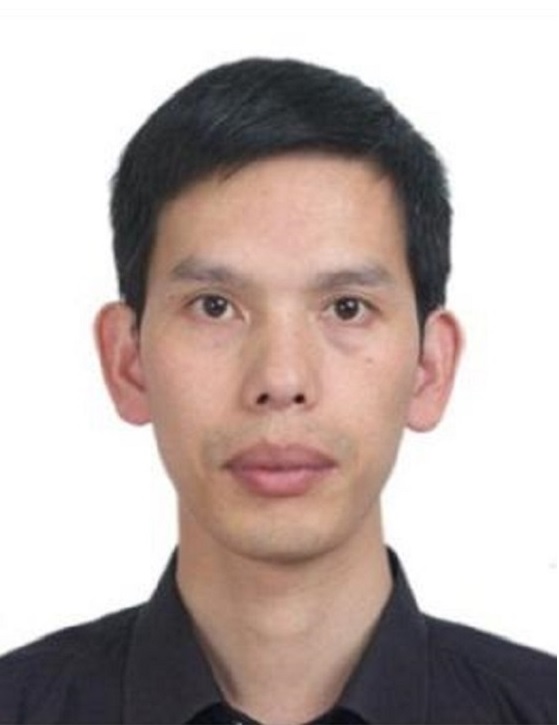}}]{En Zhu} received his PhD degree from National University of Defense Technology (NUDT), China. He is now Professor at School of Computer Science, NUDT, China. His main research interests are pattern recognition, image processing, machine vision and machine learning. Dr. Zhu has published 60+ peer-reviewed papers, including IEEE T-CSVT, IEEE T-NNLS, PR, AAAI, IJCAI, etc. He was awarded China National Excellence Doctoral Dissertation.
\end{IEEEbiography}
\vspace{-30pt}

\begin{IEEEbiography}[{\includegraphics[width=1in,height=1.25in,clip,keepaspectratio]{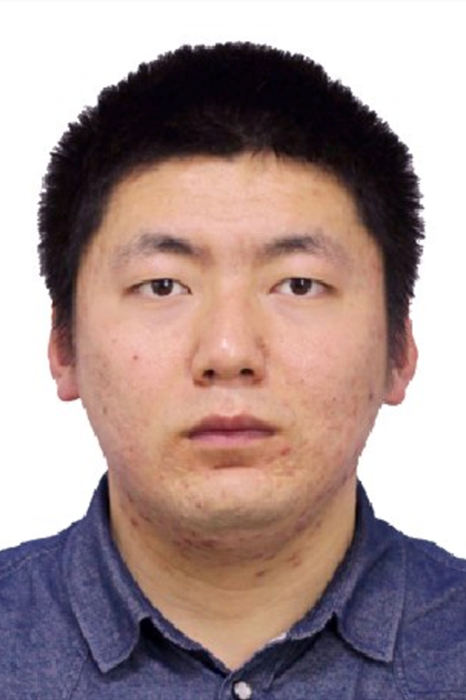}}]
{Siwei Wang} is a graduate student in National University of Defense Technology (NUDT), China. His current research interests include kernel learning, unsupervised multiple-view learning, scalable clustering and deep unsupervised learning. He has published several peer-reviewed papers such as AAAI, IJCAI, etc. He served as reviewers for of AAAI20, IJCAI20 and IEEE TCYB, IEEE TNNLS, etc.
\end{IEEEbiography}
\vspace{-30pt}

\begin{IEEEbiography}[{\includegraphics[width=1in,height=1.25in,clip,keepaspectratio]{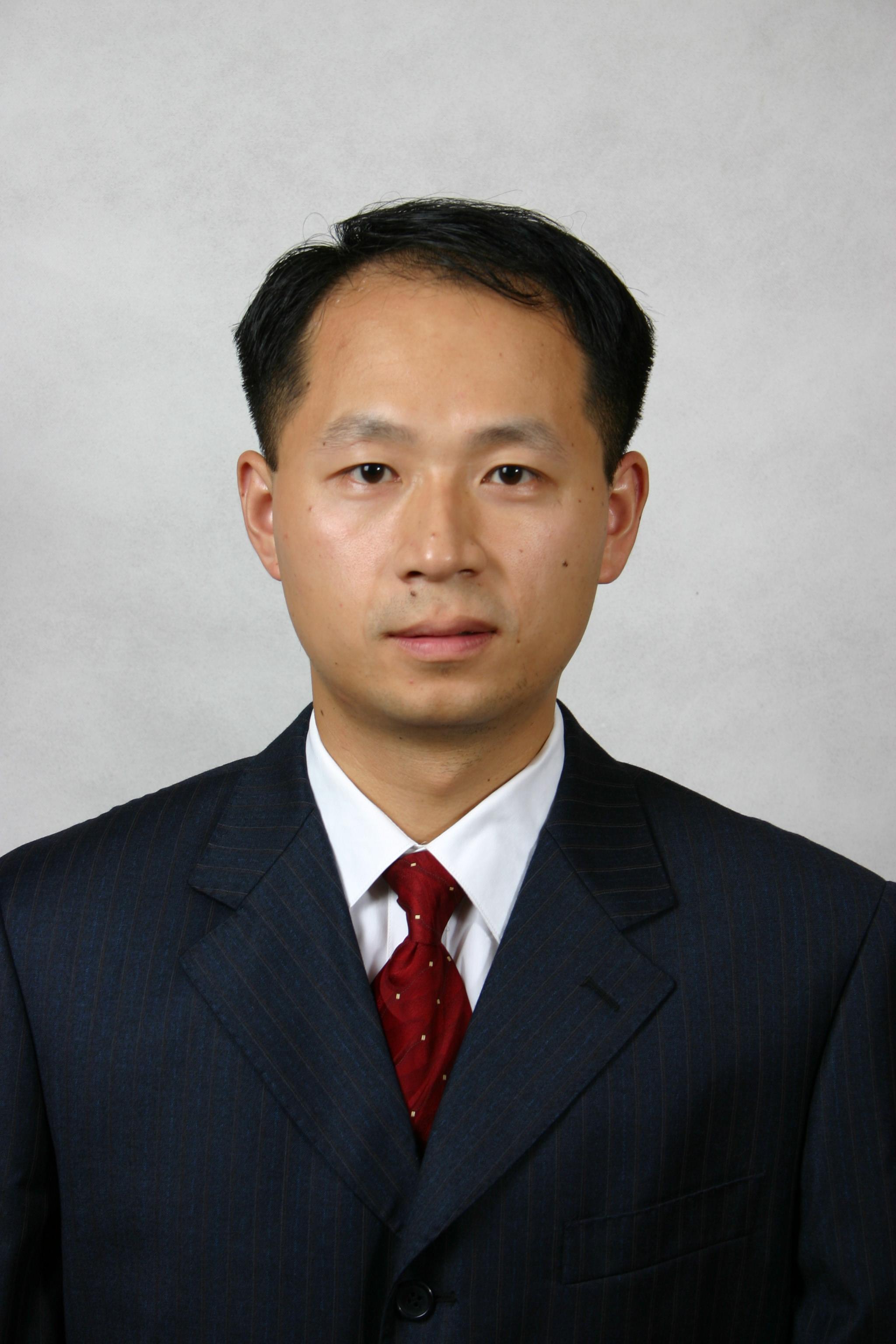}}]{Zhiping Cai}received the B.Eng., M.A.Sc., and Ph.D degrees in computer science and technology from the National University of Defense Technology (NUDT), China, in 1996, 2002, and 2005, respectively. He is a full professor in the College of Computer, NUDT. His current research interests include network security and big data. He is a senior member of the CCF and a member of the IEEE. His doctoral dissertation has been rewarded with the Outstanding Dissertation Award ofthe Chinese PLA.
\end{IEEEbiography}
\vspace{-30pt}

\begin{IEEEbiography}[{\includegraphics[width=1in,height=1.25in,clip,keepaspectratio]{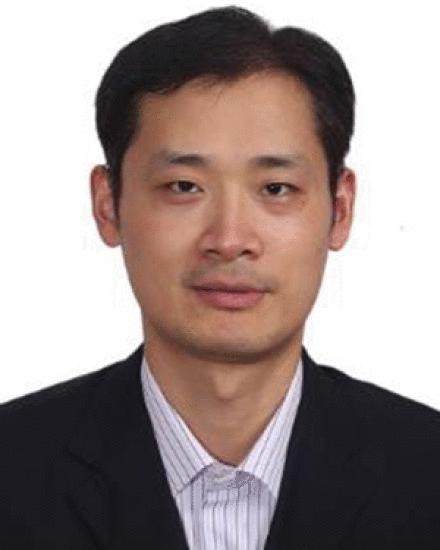}}]{Xin Xu (M'07-SM'12)} received the B.S. degree in electrical engineering from the Department of Automatic Control, National University of Defense Technology (NUDT), Changsha, China, in 1996, and the Ph.D. degree in control science and engineering from the College of Mechatronics and Automation, NUDT, in 2002. He has been a Visiting Professor with The Hong Kong Polytechnic University, the University of Alberta, the University of Guelph, and the University of Strathclyde, U.K. He is currently a Full Professor with the Institute of Unmanned Systems, College of Intelligence Science and Technology, NUDT. He has co-authored more than 160 papers in international journals and conferences and four books. His research interests include intelligent control, reinforcement learning, approximate dynamic programming, machine learning, robotics, and autonomous vehicles. He is a member of the IEEE CIS Technical Committee on Approximate Dynamic Programming and Reinforcement Learning and the IEEE RAS Technical Committee on Robot Learning. He received the National Science Fund for Outstanding Youth in China and the second-class National Natural Science Award of China. He has served as an Associate Editor or Guest Editor for Information Sciences, International Journal of Robotics and Automation, IEEE Transactions on Systems, Man, and Cybernetics: Systems, Intelligent Automation and Soft Computing, the International Journal of Adaptive Control and Signal Processing and Acta Automatica Sinica.
\end{IEEEbiography}

\end{document}